\documentclass[10pt,journal,compsoc]{IEEEtran}

\newif\ifanonymous     \anonymousfalse     
\newif\iflinenumbers   \linenumbersfalse   

\ifCLASSOPTIONcompsoc
  \usepackage[nocompress]{cite}   
\else
  \usepackage{cite}   
\fi

\usepackage{bibunits}

\usepackage{amsmath}      
\usepackage{amssymb}      
\usepackage{amsthm}       
\usepackage{bm}           
\usepackage{mathrsfs}     

\usepackage{graphicx}     
\graphicspath{{./figs/}}
\usepackage{stfloats}     
\usepackage{subcaption}   

\usepackage{booktabs}     
\usepackage{multirow}     
\usepackage{makecell}     
\usepackage{array}        
\usepackage[table]{xcolor}

\usepackage[ruled,vlined,linesnumbered]{algorithm2e} 

\usepackage{textcomp}     
\usepackage{url}          
\usepackage{verbatim}     

\usepackage[switch]{lineno}   
\iflinenumbers
  \linenumbers
  \modulolinenumbers[1]
\fi

\usepackage[bookmarks=true,bookmarksnumbered=true,pdfpagelabels=true,hypertexnames=false]{hyperref} 
\usepackage{orcidlink}
\hypersetup{
  pdfpagemode={UseOutlines}, plainpages=false,
  colorlinks=true, linkcolor=blue, citecolor=blue, urlcolor=blue, filecolor=blue
}

\newtheorem{lemma}{Lemma}[section]
\newtheorem{corollary}{Corollary}[section]
\newtheorem{proposition}{Proposition}[section]

\DeclareRobustCommand{\circledot}[1]{\textcolor{#1}{\raisebox{-0.1ex}{\scalebox{1.4}{\ensuremath{\bullet}}}}}
\definecolor{figpfedfda}{HTML}{65ADD1}
\definecolor{figfedlog}{HTML}{8FC5DF}
\definecolor{figpfedgm}{HTML}{BADDED}
\definecolor{figfedgrc}{HTML}{F0AF80}

\newcommand{\ms}[2]{#1{\tiny$\pm$#2}}
\newcommand{\msb}[2]{\textbf{#1}{\tiny$\pm$#2}\rlap{$^{\dagger}$}}    
\newcommand{\mss}[2]{\textbf{#1}{\tiny$\pm$#2}\rlap{$^{*}$}}
\definecolor{gainpos}{RGB}{230,120,0}             
\definecolor{gainneg}{RGB}{70,140,210}             
\newcommand{\dltp}[1]{{\color{gainpos}\bfseries\boldmath#1}}
\newcommand{\dltm}[1]{{\color{gainneg}\bfseries\boldmath#1}}
\newcommand{\grp}[1]{{\color{gray!60!black}\textit{#1}}}
\newsavebox{\tabbox}                              
\newlength{\incw}                                 
\newcommand{\gp}[1]{{\color{gainpos}#1}}
\newcommand{\gm}[1]{{\color{gainneg}#1}}
\newcommand{\ab}[2]{#1\hspace{0.3em}\makebox[\incw][r]{(#2)}}   %
\newcommand{\abn}[1]{#1\hspace{0.3em}\makebox[\incw][r]{}}
\newcommand{\yes}{\checkmark}
\newcommand{\no}{--}

\newcommand{\vect}[1]{\boldsymbol{#1}}
\DeclareMathOperator{\tr}{tr}
\DeclareMathOperator{\diag}{diag}
\DeclareMathOperator{\Var}{Var}
\DeclareMathOperator*{\argmax}{arg\,max}

\newcommand{\unit}[1]{\operatorname{unit}\!\left(#1\right)}

\newif\ifequationcolor 
\equationcolortrue
\colorlet{eqblue}{gainneg!45!white}
\colorlet{eqorange}{figfedgrc!65!white}
\newcommand{\eqshade}[2]{\begingroup\setlength{\fboxsep}{2pt}%
  \ifequationcolor\colorbox{#1}{$\displaystyle #2$}%
  \else\colorbox{black!7}{$\displaystyle #2$}\fi\endgroup}
\newcommand{\bluepart}[2]{\underbrace{\eqshade{eqblue}{#1}}_{\text{\normalfont\scriptsize\shortstack{#2}}}}
\newcommand{\orangepart}[2]{\underbrace{\eqshade{eqorange}{#1}}_{\text{\normalfont\scriptsize\shortstack{#2}}}}

\makeatletter
\newcommand{\bibliographynamespace}[1]{\def\@extra@b@citeb{.#1}}
\newcommand{\applybibliographycontrol}{%
  \@startbibunitorrelax
  \bstctlcite[@bibunitaux]{IEEEtranBSTcontrol}}
\AtBeginDocument{%
  \renewcommand{\bibcite}[2]{%
    \global\@namedef{b@#1\@extra@b@citeb}{%
      \hyper@@link[cite]{}{cite.#1\@extra@b@citeb}{#2}}}}
\makeatother

\begin{document}

\title{Is High Local Accuracy Enough? Geometric Consensus with Support-Adaptive Statistical Personalization for Federated Learning}

\ifanonymous
  \author{Anonymous Authors}
\else
  \author{Junyu~Lin\,\orcidlink{0009-0004-9562-2714},
          Bosong~Qi\,\orcidlink{0009-0008-7011-5141},
          Pengcheng~Shi\,\orcidlink{0000-0003-2504-9890},
          and~Jiayuan~Li\,\orcidlink{0000-0002-9850-1668}%
    \IEEEcompsocitemizethanks{%
      \IEEEcompsocthanksitem Junyu Lin, Bosong Qi, Pengcheng Shi, and Jiayuan Li
      are with the School of Remote Sensing and Information Engineering,
      Wuhan University, Wuhan 430072, China (e-mail:
      \{jonathanlin1206, poisson0622, shipc\_2021, ljy\_whu\_2012\}@whu.edu.cn).
      \IEEEcompsocthanksitem Corresponding author: Jiayuan Li;
      e-mail: ljy\_whu\_2012@whu.edu.cn.}}
\fi

\markboth{}{}

\IEEEtitleabstractindextext{%
\begin{abstract}
\rightskip=0pt\relax
Real-world deployment of personalized federated learning (PFL) requires adapting to heterogeneous clients while preserving recognition performance beyond local distributions. However, high local accuracy can conceal failures on locally rare or absent classes.
Effective personalization should therefore strengthen global recognition while raising the personalization floor.
We propose an evaluation protocol that complements local accuracy with global, rare-class, and unseen-class evaluations to reveal generalization obscured by local scores, and assesses personalization floors among the worst-performing clients to expose weaknesses hidden by aggregate performance.
To meet these deployment challenges, we propose FedGRC, a unified PFL framework integrating generalization and personalization through (i) Consensus-Guided Representation Learning (CGRL), which learns discriminative representations through shared directional and magnitude guidance; (ii) Multilayer Metric Synthesis (MMS), which constructs a shared covariance metric from these multilayer representations for statistical classification; and (iii) Support-Adaptive Shrinkage Personalization (SASP), which adapts class means within this metric by shrinking local estimates toward shared means according to local sample support, enabling adaptation while retaining shared knowledge.
Extensive experiments, including benchmarking against 21 methods across 5 heterogeneity settings on 3 datasets and comprehensive ablation studies, validate the effectiveness of FedGRC.
On CIFAR-10 and CIFAR-100, FedGRC surpasses state-of-the-art (SOTA) baselines in global recognition and personalization floors by at least 2.17\% and 2.13\%, respectively.
On the real-world heterogeneous FedRS dataset, it outperforms the respective best baselines by 2.15\%, 3.30\%, and 4.61\% in local adaptation, global recognition, and the personalization floor, demonstrating its practical deployment value.
\end{abstract}

\begin{IEEEkeywords}
Personalized federated learning; Statistical heterogeneity; Real-world generalization; Representation learning.
\end{IEEEkeywords}}

\maketitle
\IEEEdisplaynontitleabstractindextext
\IEEEpeerreviewmaketitle

\bibliographynamespace{paper}
\begin{bibunit}[IEEEtran]
\applybibliographycontrol
\ifCLASSOPTIONcompsoc
  \IEEEraisesectionheading{\section{Introduction}\label{sec:intro}}
\else
  \section{Introduction}\label{sec:intro}
\fi
\IEEEPARstart{P}{ersonalized} federated learning (PFL) adapts client models to local data using knowledge shared across the federation, without centralizing raw training data~\cite{FedRep_ICML21,FedROD_ICLR22,PFL_Survey_TNNLS23}.
This combination of knowledge sharing and local adaptation is relevant to healthcare~\cite{Med_NatMed21}, autonomous driving~\cite{Car_WACV25}, and remote sensing, where data distributions and acquisition conditions vary across clients.
Remote sensing provides a representative example: differences in sensors and geographic coverage produce heterogeneous image appearances and skewed local label distributions, leaving some classes well represented at a client and others rare or absent~\cite{HFL_Survey_CSUR24,FedRSBench_FedKDD25}.
This poses a core deployment challenge for heterogeneous PFL: combining all-class recognition with effective local adaptation.

Several PFL methods, however, prioritize local personalization without adequately preserving generalization beyond the local distribution~\cite{HPFL_ICLR25}.
DBE~\cite{DBE_NeurIPS23} and GPFL~\cite{GPFL_ICCV23} identify a loss of general feature information when representations are trained to serve personalized classifiers, while other studies examine the tension between personalization and generalization under changing test distributions~\cite{FedTHE_ICLR23,BTFL_KDD25}.
Motley~\cite{Motley_arXiv22} provides a concrete example: additional local fine-tuning can improve accuracy on local data while reducing accuracy on data sampled from other clients.
These observations raise the question: \textit{Is high local accuracy sufficient to assess a personalized model's recognition ability?}

The results in Fig.~\ref{fig:motivation} reveal a counterintuitive pattern for pFedFDA~\cite{pFedFDA_NeurIPS24}, FedLog~\cite{FedLog_TMLR26}, and pFedGM~\cite{pFedGM_arXiv26}: as label heterogeneity increases from Dir(0.3) to Dir(0.1), their \textit{local} accuracy rises while their \textit{global} accuracy falls.
Here, \textit{global} evaluation assesses the same client predictors on a class-balanced test set covering all federation-known classes.
Local evaluation assigns most weight to frequent classes, while locally rare or absent classes receive little or none.
A high \textit{local} score can therefore conceal weaknesses in recognition across the full label space~\cite{FDG_Survey_ProcIEEE25}.

\begin{figure}[!t]
  \centering
  \includegraphics[width=\columnwidth]{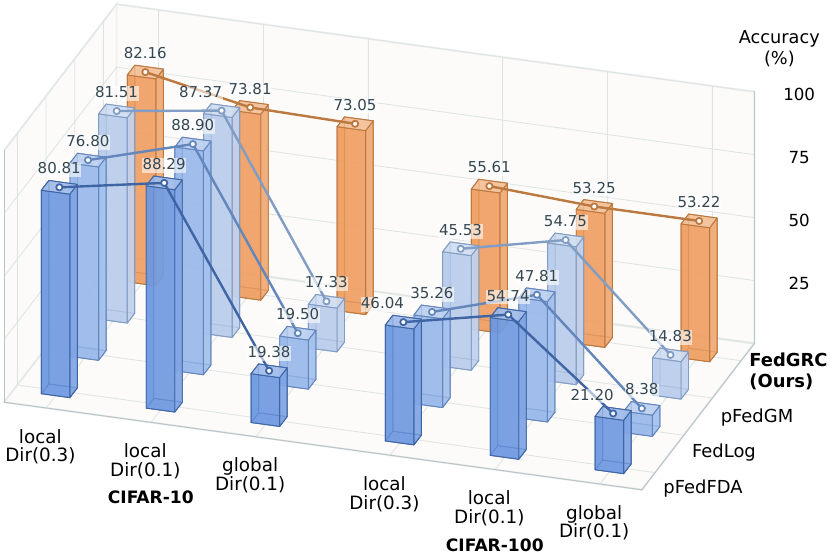}
  \caption{Local personalization and global generalization on CIFAR-10 and CIFAR-100. Colors identify methods: \circledot{rgb,255:red,102;green,145;blue,220}~pFedFDA~\cite{pFedFDA_NeurIPS24}, \circledot{rgb,255:red,151;green,183;blue,234}~FedLog~\cite{FedLog_TMLR26}, \circledot{rgb,255:red,193;green,211;blue,238}~pFedGM~\cite{pFedGM_arXiv26}, and \circledot{rgb,255:red,239;green,158;blue,96}~our FedGRC. Dir(0.3) and Dir(0.1) denote Dirichlet label partitions with strong and extreme heterogeneity, respectively. \textit{Local} denotes accuracy on local test data; \textit{global} denotes accuracy on a class-balanced test set covering all classes.}
  \label{fig:motivation}
\end{figure}

An informative evaluation of PFL under label heterogeneity should therefore extend beyond \textit{local} accuracy.
It should assess (i) adaptation to each client's data distribution; (ii) recognition across the full label space; (iii) recognition of locally rare or absent classes; and (iv) weaknesses in individual clients that aggregate results may conceal~\cite{FL_GRF_Survey_TPAMI24}.
HtFLlib~\cite{HtFLlib_KDD25} likewise evaluates client models on a global test set to assess generalization beyond local data.
Our objective is thus to improve recognition across all federation-known classes and strengthen poorly performing clients' weakest assessed abilities while retaining effective local adaptation.

Meeting this objective requires addressing three interdependent design challenges.
First, representation learning must preserve shared discriminative structure under heterogeneous local updates; the feature anchors in FedFA~\cite{FedFA_TMC24} and common class geometry in FedETF~\cite{FedETF_ICCV23} offer ways to guide this process.
Second, useful representations alone do not eliminate classifier bias, as demonstrated by CCVR~\cite{CCVR_NeurIPS21}; the classifier must draw on distributed class information without relying excessively on local label frequencies.
Statistical personalization in pFedFDA~\cite{pFedFDA_NeurIPS24} and pFedGM~\cite{pFedGM_arXiv26} provides a basis for combining shared and local feature-distribution information.
Third, adaptation must account for local class support: well-observed classes support stronger local adaptation, whereas locally rare or absent classes require greater reliance on shared knowledge.
These challenges are interdependent: the representation shapes the class statistics used by the classifier, while local sample support governs how strongly those statistics should be adapted.

We propose FedGRC, a unified PFL framework built on Geometric Representation Consensus.
FedGRC keeps the learned representation and covariance metric shared during personalization and adapts class means by balancing local and shared estimates according to local sample counts.
The counts determine the influence of local estimates, while equal class priors avoid adding a prediction preference for locally frequent classes.
Our main contributions are:
\begin{itemize}
  \item \textbf{Comprehensive PFL Evaluation beyond Local Accuracy.} We introduce a protocol that jointly assesses local adaptation, generalization across the full label space, and recognition of locally rare and absent classes. It also measures a recognition floor by taking each client's lowest accuracy across local adaptation and rare- and unseen-class recognition, then summarizing the lowest-performing clients. This reveals weaknesses hidden by aggregate scores.
  \item \textbf{Unified Personalization and Generalization.} We develop FedGRC to connect geometric representation learning with statistical personalization for local adaptation and recognition across the full label space. Shared directional references and class- and layer-specific magnitude references guide representation learning. Multilayer class statistics then yield a shared covariance metric, and support-dependent shrinkage adapts class means within this geometry.
  \item \textbf{Systematic Benchmarking and Validation.} We reproduce and benchmark 21 external methods: 15 complete federated learning (FL) methods and 6 statistical classifiers, with the latter evaluated on 2 backbone types. The evaluation spans 5 heterogeneity settings across 3 datasets with different label-space sizes, client counts, and degrees of heterogeneity, including real-world remote sensing data from FedRS~\cite{FedRSBench_FedKDD25}. Comparative experiments and ablations establish the contributions of the framework and its components. On FedRS, FedGRC gains 3.30\% in global accuracy and 4.61\% in the worst-performing clients' recognition floor over the respective best baselines, with higher local personalization accuracy.
\end{itemize}

\textbf{Open-Source Resources.} The benchmark and method implementations, design details, complete evaluation protocol, and all experimental configurations and parameter settings are publicly available at \url{https://github.com/JL181/FedGRC-Benchmark}.

\section{Related Work}\label{sec:related}

\subsection{Balancing Personalization and Generalization in Federated Learning}

PFL methods combine shared knowledge with local adaptation through parameter regularization, representation sharing, or predictor combination~\cite{PFL_Survey_TNNLS23,HFL_Survey_CSUR24}.
Ditto~\cite{Ditto_ICML21} regularizes personalized models toward a shared model to improve robustness and fairness, but requires additional model updates for personalization.
FedRep~\cite{FedRep_ICML21} separates shared representations from local heads; however, these heads still guide representation learning, allowing local label imbalance to influence the shared features.
FedROD~\cite{FedROD_ICLR22} combines a class-balanced generic classifier with a personalized residual classifier on a shared feature extractor.
This additive combination supports both generic recognition and local adaptation, but does not adjust the classifiers' relative contributions as the test distribution changes.

Test-time weighting and predictor selection offer alternatives to fixed predictor combinations.
FedTHE~\cite{FedTHE_ICLR23} optimizes a mixture weight using entropy and feature alignment informed by test history.
BTFL~\cite{BTFL_KDD25} instead infers weights from current and historical observations through Bayesian modeling, reducing iterative adaptation while relying on approximate feature likelihoods.
HPFL~\cite{HPFL_ICLR25} uses markers to select among shared personalized plug-ins, expanding the available predictor pool at the cost of storage and matching operations.
These mechanisms improve deployment flexibility, but their decisions depend on cues extracted from test observations.
When data sources or class distributions change, confidence and feature similarity may not reliably identify the most suitable predictor or combination.
FedGRC completes client adaptation before deployment and therefore requires neither test-time predictor selection nor mixture-weight estimation.

Training or selecting personalized predictors does not by itself establish whether each deployed model retains recognition across all federation-known classes.
Motley~\cite{Motley_arXiv22} shows that local fine-tuning can improve local accuracy while degrading performance on other clients' data.
Other studies evaluate personalized models on both local and global test sets~\cite{HtFLlib_KDD25,GPPFL_InfSci25,SemanticDFL_POMACS26,LocalVsOOD_arXiv25}, while broader benchmarks and fairness studies examine generalization and disparities across clients~\cite{FL_GRF_Survey_TPAMI24,pFLBench_NeurIPS22,PropFair_TMLR23,SuperquantileFL_ML24}.
Aggregate performance can nevertheless conceal poor recognition of locally rare or absent classes, as well as weaknesses concentrated in particular clients.
Our evaluation protocol assesses the same deployed client models on both local and all-class test distributions, explicitly evaluating locally rare and unseen classes.
Our protocol further measures each client's weakest assessed ability and summarizes the lowest of these client-level scores, so that stronger abilities or performance at other clients do not mask these weaknesses.

\subsection{Learning and Preserving Shared Representations under Data Heterogeneity}

Shared training signals differ in both the information they convey and the aspects of learning they constrain.
At the logit level, FedLC~\cite{FedLC_ICML22} uses class frequencies to calibrate predictions under label imbalance, but this correction does not by itself ensure that local updates preserve knowledge of locally absent classes.
Methods targeting locally absent classes address this retention problem directly~\cite{FedVLS_AAAI25,MMCFL_KBS26}.
FedVLS combines global-model distillation with logit suppression against minority-to-majority errors; distillation uses only teacher predictions on locally available inputs.

At the representation level, methods differ in the granularity of their shared references and the feature properties they constrain.
DBE~\cite{DBE_NeurIPS23} corrects client-specific bias through local bias memory and global mean regularization, but a single client-wide mean cannot fully characterize class-specific differences across clients.
GPFL~\cite{GPFL_ICCV23} combines angular guidance and Euclidean alignment to class references through additional feature transformations and auxiliary objectives along a personalized training path.
FedProto~\cite{FedProto_AAAI22} exchanges class prototypes for feature regularization, while FedFA~\cite{FedFA_TMC24} uses Euclidean anchors for feature alignment and classifier calibration.
Class-reference methods align feature centers or directions across clients~\cite{FedNH_AAAI23,FedProc_FGCS23,FedGP_InfSci26}.
Feature magnitudes have also been shaped through learned transformations~\cite{FedDream_InfSci26}, class-average norm regularization~\cite{FNRFL_IJCAI24}, and layerwise squared activation-norm penalties~\cite{MAN_WACV24}.
These representation constraints do not by themselves determine how a classifier should use within-class variation and feature correlations.
FedGRC uses final-layer directional guidance and shared magnitude references for each class and selected layer. Its multilayer statistical classifier uses common layer scales that preserve within-layer magnitude differences.

Fixed classifiers provide another way to establish shared feature geometry.
FedETF~\cite{FedETF_ICCV23} uses an equiangular tight-frame (ETF) classifier to impose common class geometry and prevent classifier drift.
Within this line of work, FedDr+~\cite{FedDrPlus_TMLR25} identifies missing-class forgetting under ordinary dot-regression and adds global feature distillation to improve retention, at the cost of computing teacher features.
Separately, FedYoYo~\cite{FedYoYo_ICCV25} combines self-distillation across weakly and strongly augmented views with distribution-aware logit adjustment to strengthen minority-class learning and generalization.
These advances improve shared representations, but preserving their recognition ability during local adaptation remains a separate design problem.
In our personalization experiments, local fine-tuning of FedETF improves local accuracy but substantially reduces all-class and locally unseen-class recognition (Section~\ref{sec:exp-personalization}).
FedGRC links representation learning to personalization through statistical classification based on shared multilayer features.
For FedGRC, the remaining design choices concern which classifier statistics to keep shared and how strongly each local class estimate should influence personalization.

\subsection{Federated Statistical Classifier Construction and Local Adaptation}

Statistical classifiers construct predictors from distributed feature summaries and have been used in one-shot and communication-efficient FL.
Their designs differ in how they construct decision rules, represent within-class variation, and estimate feature correlations.
CCVR~\cite{CCVR_NeurIPS21} calibrates a classifier using sampled virtual features and server-side optimization, while leaving the underlying representation fixed.
FedNCM~\cite{FedNCM_NeurIPS23} provides a simple mean-based readout, but does not account for class spread or feature correlations when comparing samples with class centers.
Fed3R~\cite{Fed3R_ICML24} uses second-order feature statistics to construct a closed-form ridge classifier whose solution is independent of how fixed features are partitioned across clients.
Exchanging second-order matrices increases communication cost relative to mean-only summaries and adds matrix-processing overhead.
FedCGS~\cite{FedCGS_AAAI25} constructs a Gaussian classifier from a full covariance centered at the global feature mean, without synthetic sampling or iterative head training.
GH-LDA~\cite{GHLDA_ICLR26} instead estimates shared within-class covariance by pooling residuals around class means.
Covariance estimation introduces a further distinction: FedCOF~\cite{FedCOF_NeurIPS25} reduces communication by estimating covariance from client means, but class-conditional shifts can affect the estimate and the resulting decision boundaries.
These methods differ in both the decision rule derived from shared statistics and the variation represented by those statistics.
In continual learning, LayUP~\cite{LayUP_TMLR24} constructs a Gram-matrix-based ridge readout, whereas FedGRC separates shared within-class covariance estimation from local class-mean adaptation.
A related design choice is how to scale multilayer features.
For OOD detection, MM++~\cite{MMpp_arXiv26} normalizes each sample's feature vector at each layer to unit length before estimating a shared within-class covariance from the concatenated features.
FedGRC instead applies a common scale to all samples at each layer, preserving within-layer magnitude differences for its statistical classifier.

Beyond constructing a shared classifier, statistical personalization must determine which quantities to adapt and how local sample support should influence that adaptation.
FedCGS~\cite{FedCGS_AAAI25} additionally supports personalization through prototype-regularized local fine-tuning.
FedLog~\cite{FedLog_TMLR26} personalizes local feature extractors and learns a shared head from sufficient statistics through iterative server-side maximum a posteriori (MAP) optimization.
Its adaptation therefore operates on local representations, whereas FedGRC keeps the representation shared and adapts class means.
pFedFDA~\cite{pFedFDA_NeurIPS24} guides representations with a global generative classifier and interpolates local and shared means and covariances.
Its interpolation coefficients are selected per client using local validation, rather than explicitly determined for each class from its sample count.
Its local class priors favor locally frequent classes, whereas FedGRC uses equal class priors to avoid adding this frequency preference to the prediction rule.
pFedGM~\cite{pFedGM_arXiv26} combines inter-class and intra-class representation objectives with count-informed distribution fusion, followed by local optimization of class means, scales, and biases.
This flexibility requires estimating several quantities from potentially sparse class observations and can increase the risk of local overfitting.
TOFA and SPEAR also use class counts to regulate mean shrinkage.
TOFA~\cite{TOFA_AAAI26} uses local and global class counts to shrink local means toward global estimates while estimating a client-specific covariance.
SPEAR~\cite{SPEAR_arXiv26} pools local means shrunk toward other clients' averages into a shared classifier.

FedGRC combines a shared multilayer covariance metric with client-specific class means adapted according to local class counts, using a reference strength fixed across clients and classes.
By keeping the representation and covariance metric shared, FedGRC limits personalization to class means and avoids estimating a separate local metric.
FedGRC retains shared means for locally absent classes, keeps estimates for sparsely observed classes close to the shared means, and permits stronger adaptation for well-observed classes.
Local counts govern mean adaptation; equal priors support all-class recognition without favoring locally frequent classes.

\section{Preliminaries}\label{sec:prelim}

\subsection{Personalized Federated Learning}
Consider $K$ clients and a common label space $\mathcal Y=\{1,\ldots,C\}$.
Client $k$ holds $\mathcal D^{(k)}=\{(x_i^{(k)},y_i^{(k)})\}_{i=1}^{N^{(k)}}$, with $N=\sum_kN^{(k)}$.
Parenthesized superscripts $(k)$ identify clients; $i$, $c$, $\ell$, and $t$ index samples, classes, layers, and communication rounds, respectively.
A notation reference is provided in Appendix~\ref{app:notation}.
For a model $g(\cdot;\vect\theta)$ and classification loss $\ell$, the local empirical loss is defined as
\begin{equation}\label{eq:local-objective}
 \mathcal L^{(k)}(\vect\theta)
 =\frac1{N^{(k)}}\sum_{i=1}^{N^{(k)}}
 \ell\!\left(g(x_i^{(k)};\vect\theta),y_i^{(k)}\right).
\end{equation}
Standard FL coordinates local updates to learn a shared model without pooling raw data, commonly by minimizing the sample-weighted objective
\begin{equation}\label{eq:fl-objective}
 \min_{\vect\theta}\sum_{k=1}^{K}\omega^{(k)}\mathcal L^{(k)}(\vect\theta),
 \;\omega^{(k)}=\frac{N^{(k)}}{N}.
\end{equation}

When client distributions differ, a single shared predictor may not fit every client well.
PFL therefore allows client-specific models while retaining collaboration through shared parameters, representations, or regularization~\cite{PFL_Survey_TNNLS23}.
A common representation-based formulation shares a feature extractor $h_{\vect\phi}$ and assigns each client a head $q_{\vect\psi^{(k)}}$~\cite{FedRep_ICML21}.
With $\vect\theta=(\vect\phi,\vect\psi)$ and $g(x;\vect\phi,\vect\psi)=q_{\vect\psi}(h_{\vect\phi}(x))$, the objective is
\begin{equation}\label{eq:pfl-objective}
 \min_{\vect\phi,\{\vect\psi^{(k)}\}_{k=1}^{K}}
 \sum_{k=1}^{K}\omega^{(k)}\mathcal L^{(k)}(\vect\phi,\vect\psi^{(k)}),
\end{equation}
where $\mathcal L^{(k)}(\vect\phi,\vect\psi)$ is the loss in Eq.~\eqref{eq:local-objective}, expressed in terms of the feature-extractor and head parameters.

We denote the deployed label predictor by $f^{(k)}(x)\in\mathcal Y$ and assess both local adaptation and broader recognition using this same predictor.
Shared representations provide common knowledge, but the reliability of a locally fitted head depends on the sample support of each class.
Constructing a head from class-level statistics allows shared and local evidence to be combined separately for each class.

\subsection{Statistical Classification}
Statistical classifiers construct decision rules from feature-distribution summaries~\cite{FedCGS_AAAI25,GHLDA_ICLR26}.
Let $\mathbf z\in\mathbb R^d$ be a feature vector, $\vect\mu_c$ the mean of class $c$, $\vect\Sigma\succ0$ a covariance matrix shared across classes, and $\pi_c>0$ the prior probability of class $c$.
Under a shared-covariance Gaussian model, Bayes' rule gives the posterior probability of class $c$ given $\mathbf z$:
\begin{equation}\label{eq:gaussian-posterior}
 p(y=c\mid\mathbf z)
 =\frac{\pi_c\mathcal N(\mathbf z;\vect\mu_c,\vect\Sigma)}
 {\sum_{r=1}^{C}\pi_r\mathcal N(\mathbf z;\vect\mu_r,\vect\Sigma)}.
\end{equation}
Here $\mathcal N(\mathbf z;\vect\mu_c,\vect\Sigma)$ is the multivariate Gaussian density evaluated at $\mathbf z$.
Taking logarithms and discarding terms common to all classes yields the linear discriminant~\cite{ESL_Springer09}
\begin{equation}\label{eq:statistical-score}
 s_c(\mathbf z)
 =\vect\mu_c^\top\vect\Sigma^{-1}\mathbf z
 -\tfrac12\vect\mu_c^\top\vect\Sigma^{-1}\vect\mu_c
 +\log\pi_c.
\end{equation}
The mean sets the class center, the inverse covariance defines the distance metric, and the log prior contributes an additive class-specific offset.
For $\mathbf M\succ0$, write $\|\mathbf v\|_{\mathbf M}^2=\mathbf v^\top\mathbf M\mathbf v$.
With equal priors, maximizing Eq.~\eqref{eq:statistical-score} is equivalent to minimizing $\|\mathbf z-\vect\mu_c\|_{\vect\Sigma^{-1}}^2$.

In a federation, shared statistics support classes that are rare or absent locally, while local statistics can capture client-specific structure.
The reliability of these estimates depends on sample support.
The central choices in Eq.~\eqref{eq:statistical-score} are which means, covariance, and priors to share or adapt, and how strongly local evidence should influence them.

\subsection{Problem Formulation}\label{sec:problem}
For each client and class, define the training subset and sample support as
\begin{equation}\label{eq:class-support}
\begin{aligned}
 \mathcal D_c^{(k)}&=\{(x,y)\in\mathcal D^{(k)}:y=c\},\\
 n_c^{(k)}&=|\mathcal D_c^{(k)}|,\;
 N_c=\sum_{k=1}^{K}n_c^{(k)}.
\end{aligned}
\end{equation}
The label space $\mathcal Y$ comprises all federation-known classes: $N_c>0$ for every $c$, although a client may have $n_c^{(k)}=0$.
The locally unseen set is $\mathcal U^{(k)}=\{c:n_c^{(k)}=0\}$.
Both label frequencies and class-conditional input distributions may differ across clients.
Every client predicts over all $C$ classes.

Let $P^{(k)}$ be client $k$'s local target distribution and $Q_c$ the distribution of inputs from class $c$ in the shared all-class evaluation population.
The class-balanced distribution $Q$ assigns probability $1/C$ to each class, with $X\mid Y=c\sim Q_c$.
We use $\Pr$, $\mathbb E$, and $\Var$ for probability, expectation, and variance, respectively, and $\mathbf1\{\cdot\}$ for the indicator function.
For any labeled target distribution $R$, write $\mathcal E_R(f)=\Pr_{(x,y)\sim R}[f(x)\ne y]$.
The class-wise error of the deployed predictor $f^{(k)}$ is
\begin{equation}\label{eq:class-risk}
 e_c^{(k)}=\mathbb E_{x\sim Q_c}
 \big[\mathbf1\{f^{(k)}(x)\ne c\}\big],
\end{equation}
and its local risk and global all-class risk are
\begin{subequations}\label{eq:recognition-risks}
\begin{align}
 \mathcal E_{\mathrm{local}}^{(k)}
 &=\mathbb E_{(x,y)\sim P^{(k)}}\!\left[\mathbf1\{f^{(k)}(x)\ne y\}\right],\label{eq:local-risk}\\
 \mathcal E_{\mathrm{global}}^{(k)}
 &=\mathcal E_Q(f^{(k)})=\frac1C\sum_{c=1}^{C}e_c^{(k)}.\label{eq:all-risk}
\end{align}
\end{subequations}
These risks assess the same client predictor under two distributions: the local risk is evaluated under $P^{(k)}$, while the global all-class risk gives equal weight to every federation-known class.
A low $\mathcal E_{\mathrm{local}}^{(k)}$ therefore need not imply a low $\mathcal E_{\mathrm{global}}^{(k)}$.
Even a low all-class risk can conceal large $e_c^{(k)}$ for locally rare or unseen classes, and averaging over clients can hide particularly weak client predictors.
Jointly assessing local adaptation, class coverage, and client-level weaknesses better characterizes each predictor's recognition ability.

Our research goal is to improve recognition across the full label space and strengthen the weakest assessed abilities of poorly performing clients while retaining effective local adaptation.
We assess these requirements jointly through local, all-class, rare-class, unseen-class, and client-floor metrics in Section~\ref{sec:eval-protocol}.
FedGRC pursues this goal through shared representation learning, a common statistical metric, and support-dependent mean adaptation.

\section{Method}\label{sec:method}

We present FedGRC, a unified PFL framework that integrates geometric representation consensus with statistical personalization for local adaptation and full-label-space recognition.
The framework comprises three successive stages:
\textbf{(i) Consensus-Guided Representation Learning (CGRL)} regularizes feature directions and magnitudes through shared class references;
\textbf{(ii) Multilayer Metric Synthesis (MMS)} constructs a shared statistical geometry from the frozen multilayer representations; and
\textbf{(iii) Support-Adaptive Shrinkage Personalization (SASP)} adapts class means according to local sample support within this geometry.
Together, these stages aim to retain recognition of locally rare or absent classes during personalization and strengthen recognition floors across clients.
Fig.~\ref{fig:overview} provides an overview of the framework; the complete procedure is summarized in Algorithm~\ref{alg:main} and detailed in the following subsections.

\begin{figure*}[!t]
  \centering
  \includegraphics[width=\textwidth,pagebox=cropbox,clip]{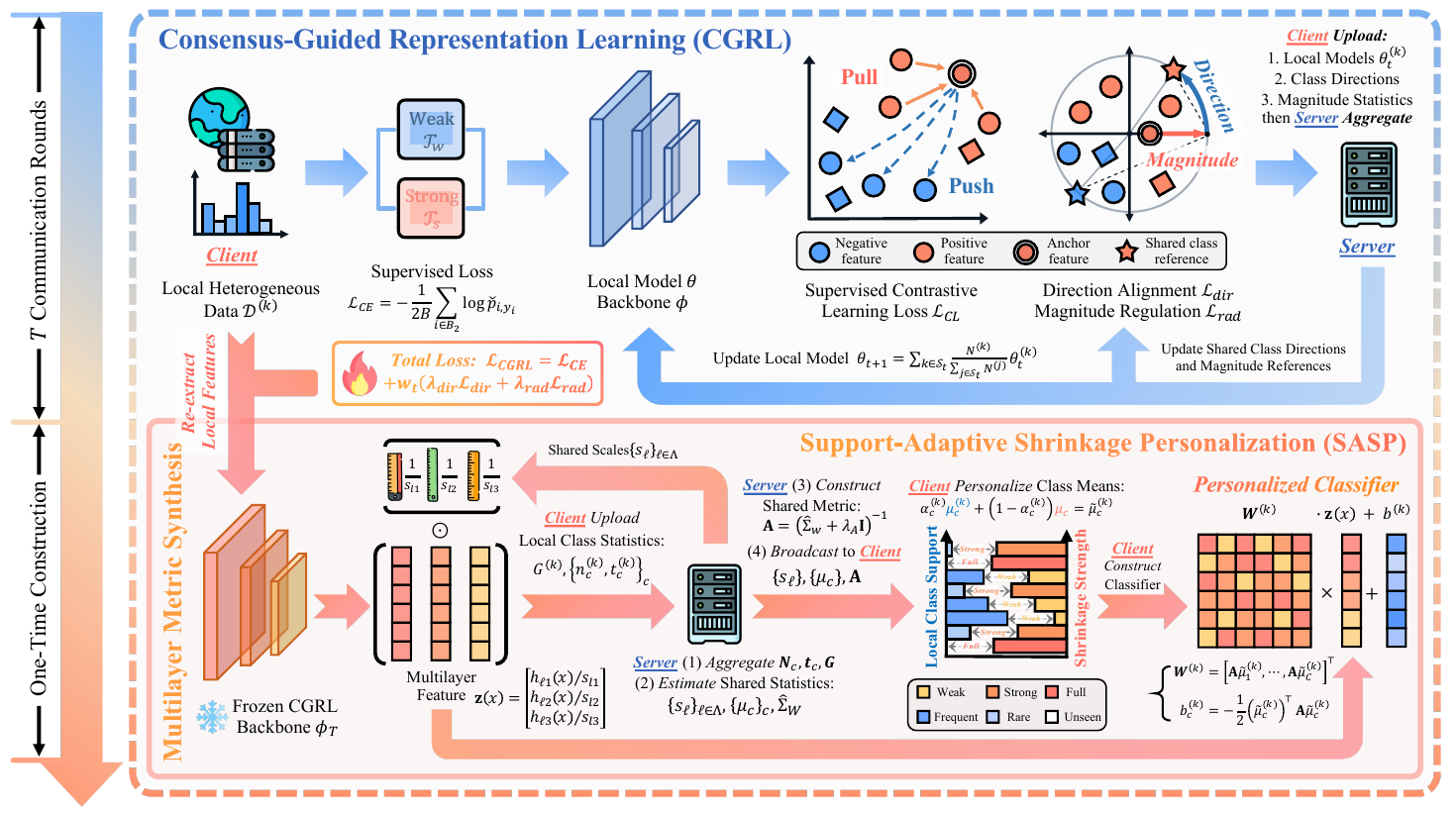}
  \caption{Overview of FedGRC. CGRL learns shared representations over $T$ communication rounds; MMS establishes a shared statistical basis in a single additional round, enabling SASP to construct personalized classifiers locally.}
  \label{fig:overview}
\end{figure*}

\begin{algorithm}[!t]
 \small\PrintSemicolon
 \SetAlCapFnt{\small\bfseries}\SetAlCapNameFnt{\small\bfseries}
 \SetKwInput{KwIn}{Input}\SetKwInput{KwOut}{Output}
 \SetKwComment{Phase}{\textcolor{black}{\(\blacktriangleright\)}\ }{}
 \SetCommentSty{textbf}
 \caption{FedGRC: Overall Procedure}
 \label{alg:main}
 \KwIn{Client data $\{\mathcal D^{(k)}\}_{k=1}^K$; layers $\Lambda$;
 Communication rounds $T$; local steps $E^{(k)}$;
 learning rate $\eta$;}
 \KwOut{$\vect\phi_T$, $\{s_\ell\}$, $\{\mathbf W^{(k)},\mathbf b^{(k)}\}_{k=1}^K$;}
 \vspace{2pt}\Phase{\normalsize (i) CGRL: Representation Learning}\vspace{1.2pt}
 Initialize client model $\vect\theta_0=(\vect\phi_0,\vect\psi_0)$\;
 Initialize shared references\;
 \For(\textcolor{gray}{\footnotesize // Communication round}){$t=0,\ldots,T-1$}{
  Sample participating clients $\mathcal S_t$\;
  \ForEach(\textcolor{gray}{\footnotesize // Client local training}){$k\in\mathcal S_t$ \textnormal{in parallel}}{
   Set local model $\vect\theta^{(k)}\leftarrow\vect\theta_t$\;
   Reset log-norm totals $H_{\ell,c}^{(k)}\leftarrow0,\ q_c^{(k)}\leftarrow0$\;
   \For(\textcolor{gray}{\footnotesize // Local updates}){$e=1,\ldots,E^{(k)}$}{
    Sample a mini-batch $\mathcal B\sim\mathcal D^{(k)}$\;
    Form paired views $\mathcal B_2\leftarrow[\mathcal T_{\mathrm w}(\mathcal B);\mathcal T_{\mathrm s}(\mathcal B)]$\;
    {\setlength{\jot}{0pt}\DontPrintSemicolon$\begin{aligned}[t]
     &\text{Compute CGRL total loss:}\\
     &\quad\mathcal L_{\mathrm{CGRL}}\leftarrow\mathcal L_{\mathrm{CE}}+w_t(\lambda_{\mathrm{dir}}\mathcal L_{\mathrm{dir}}+\lambda_{\mathrm{rad}}\mathcal L_{\mathrm{rad}});
    \end{aligned}$\;}
    Collect log-norm sums/counts $H_{\ell,c}^{(k)},q_c^{(k)}$\;
    Update $\vect\theta^{(k)}\leftarrow\vect\theta^{(k)}-\eta\nabla_{\vect\theta^{(k)}}\mathcal L_{\mathrm{CGRL}}$\;
   }
   Estimate class directions $\bar{\mathbf u}_c^{(k)}$ via Eq.~\eqref{eq:direction-update}\;
   Upload $\{\vect\theta^{(k)},\bar{\mathbf u}_c^{(k)},n_c^{(k)},H_{\ell,c}^{(k)},q_c^{(k)}\}$ to \underline{\textbf{\textit{Server}}}\;
  }
  The \underline{\textbf{\textit{Server}}} aggregates local models into $\vect\theta_{t+1}$ via Eq.~\eqref{eq:model-aggregation}, updates references via Eqs.~\eqref{eq:direction-ema}, \eqref{eq:radial-update}, then broadcasts both to \underline{\textbf{\textit{Clients}}}\;
 }
 \vspace{2pt}\Phase{\normalsize (ii) MMS: Shared Metric Synthesis}\vspace{1.2pt}
 The \underline{\textbf{\textit{Clients}}} freeze backbone $\vect\phi_T$ from trained model $\vect\theta_T$\;
 \ForEach(\textcolor{gray}{\footnotesize // Local statistics}){$k=1,\ldots,K$ \textnormal{in parallel}}{
  Extract and concatenate layer features $\mathbf h_i^{(k)}$\;
  Count class samples $n_c^{(k)}\leftarrow|\mathcal D_c^{(k)}|$\;
  {\setlength{\jot}{0pt}\DontPrintSemicolon$\begin{aligned}[t]
   &\text{Form class feature sums and outer-product sum:}\\
   &\quad\mathbf t_c^{(k)}\leftarrow\textstyle\sum_{i:y_i^{(k)}=c}\mathbf h_i^{(k)},\ \mathbf G^{(k)}\leftarrow\textstyle\sum_i\mathbf h_i^{(k)}(\mathbf h_i^{(k)})^\top;
  \end{aligned}$\;}
  Upload $\{\{n_c^{(k)},\mathbf t_c^{(k)}\}_c,\mathbf G^{(k)}\}$ to \underline{\textbf{\textit{Server}}}\;
 }
 {\setlength{\jot}{0pt}\DontPrintSemicolon$\begin{aligned}[t]
  &\text{The \underline{\textbf{\textit{Server}}} aggregates client statistics:}\\
  &\quad N_c\leftarrow\textstyle\sum_k n_c^{(k)},\ \mathbf t_c\leftarrow\textstyle\sum_k\mathbf t_c^{(k)},\ \mathbf G\leftarrow\textstyle\sum_k\mathbf G^{(k)};
 \end{aligned}$\;}\vspace{1.5pt}
 Compute shared layer scales $s_\ell$ via Eqs.~\eqref{eq:pooled-feature-covariance}, \eqref{eq:layer-variance}\;
 Derive shared statistics $\vect\mu_c$, $\widehat{\vect\Sigma}_{\mathrm w}$ via Eq.~\eqref{eq:statistics-from-sums}\;
 {\setlength{\jot}{0pt}\DontPrintSemicolon$\begin{aligned}[t]
  &\text{Construct regularized shared metric:}\\
  &\quad\lambda_A\leftarrow\gamma\tr(\widehat{\vect\Sigma}_{\mathrm w})/d,\ \mathbf A\leftarrow(\widehat{\vect\Sigma}_{\mathrm w}+\lambda_A\mathbf I_d)^{-1};
 \end{aligned}$\;}
 Broadcast $\bigl\{\{s_\ell\}_{\ell\in\Lambda},\{\vect\mu_c\}_c,\mathbf A\bigr\}$ to \underline{\textbf{\textit{Clients}}}\;
 \vspace{2pt}\Phase{\normalsize (iii) SASP: Support-Adaptive Shrinkage}\vspace{1.2pt}
 \ForEach(\textcolor{gray}{\footnotesize // Client personalization}){$k=1,\ldots,K$}{
  Compute observed-class means $\vect\mu_c^{(k)}$ via Eq.~\eqref{eq:statistics-from-sums}\;
  \ForEach(\textcolor{gray}{\footnotesize // Class adaptation}){$c=1,\ldots,C$}{
   Compute mean shift $\vect\Delta_c^{(k)}$ via Eq.~\eqref{eq:local-displacement}\;
   {\setlength{\jot}{0pt}\DontPrintSemicolon$\begin{aligned}[t]
    &\text{Personalize class mean using local support:}\\
    &\quad\widetilde{\vect\mu}_c^{(k)}\leftarrow\vect\mu_c+[n_c^{(k)}/(n_c^{(k)}+m_0)]\vect\Delta_c^{(k)};
   \end{aligned}$\;}
   Set classifier weights $\mathbf w_c^{(k)}\leftarrow\mathbf A\widetilde{\vect\mu}_c^{(k)}$\;
   Set classifier bias $b_c^{(k)}\leftarrow-1/2(\widetilde{\vect\mu}_c^{(k)})^\top\mathbf w_c^{(k)}$\;
  }
 }
 \Return{$\vect\phi_T,\{s_\ell\},\{\mathbf W^{(k)},\mathbf b^{(k)}\}_{k=1}^K$}\;
\end{algorithm}

\subsection{Consensus-Guided Representation Learning}\label{sec:cgrl}
CGRL combines cross-entropy supervision with shared directional and magnitude guidance to learn representations for statistical classification.
At each round, clients receive the shared model and class references and jointly optimize a backbone and a training-only classifier, parameterized by $\vect\theta=(\vect\phi,\vect\psi)$.
Label-preserving weak and strong augmentations, $\mathcal T_{\mathrm w}$ and $\mathcal T_{\mathrm s}$, produce $2B$ views from a mini-batch of $B$ examples.
The semicolon between augmented batches in Algorithm~\ref{alg:main} denotes batch concatenation.
Let $\mathcal B_2$ index these views and $\check p_{i,c}$ denote the predicted probability of class $c$ for view $i$.
The cross-entropy loss over both augmented views:
\begin{equation}\label{eq:cgrl-ce}
 \mathcal L_{\mathrm{CE}}=-\frac1{2B}\sum_{i\in\mathcal B_2}\log\check p_{i,y_i}.
\end{equation}

Let $\Lambda$ be the set of selected layers and $\mathbf h_\ell(x;\vect\phi)\in\mathbb R^{d_\ell}$ the pooled representation at layer $\ell\in\Lambda$, where $d_\ell$ is its feature dimension and $L\in\Lambda$ denotes the final backbone layer.
Statistical classification depends on feature magnitudes as well as directions, since both determine class means and within-class scatter.
CGRL therefore combines directional supervision at the final layer with magnitude regularization across all layers used by MMS.

For each of two features, let $R_i=\|\mathbf h_i\|_2>0$ and $\mathbf u_i=\mathbf h_i/R_i$. Their squared distance decomposes as
\begin{equation}\label{eq:angular-radial-distance}
 \|\mathbf h_i-\mathbf h_j\|_2^2
 =(R_i-R_j)^2+R_iR_j\|\mathbf u_i-\mathbf u_j\|_2^2.
\end{equation}
This decomposition motivates regulating both magnitude and direction: differences in either contribute to feature dispersion.
Within a local update, client and round indices are omitted when unambiguous.

We define the two geometric objectives and then describe their joint optimization and federated reference updates.

\textbf{(1) Shared Directional Guidance.}
Directional guidance combines batch-level discrimination with shared class references.
Let $\mathbf u_i=\unit{\mathbf h_L(x_i;\vect\phi)}$, with $\operatorname{unit}(\mathbf v)=\mathbf v/\|\mathbf v\|_2$ for $\mathbf v\ne\mathbf0$ and $\operatorname{unit}(\mathbf0)=\mathbf0$.
For $\mathcal P_i=\{j\in\mathcal B_2:j\ne i,\ y_j=y_i\}$, we use the supervised contrastive loss $\mathcal L_{\mathrm{CL}}$~\cite{SupCon_NeurIPS20} to promote within-class alignment and between-class separation:
\begin{equation}\label{eq:supcon}
 \mathcal L_{\mathrm{CL}}
 =-\frac1{2B}\sum_i\frac1{|\mathcal P_i|}
 \sum_{j\in\mathcal P_i}\log\frac{\exp(\mathbf u_i^\top\mathbf u_j/\tau)}{Z_i},
\end{equation}
where $Z_i=\sum_{j\in\mathcal B_2:j\ne i}\exp(\mathbf u_i^\top\mathbf u_j/\tau)$ and $\tau>0$ is the temperature.
The second view of each example ensures $|\mathcal P_i|>0$.
Local batches may omit classes, and their comparisons do not provide common directional references across clients.
We therefore introduce a prototype cross-entropy loss using shared class directions $\mathbf p_c(t)$:
\begin{equation}\label{eq:prototype-loss}
 \mathcal L_{\mathrm{proto}}
 =-\frac1{2B}\sum_{i\in\mathcal B_2}
 \log\frac{\exp(\mathbf u_i^\top\mathbf p_{y_i}(t)/\tau)}
 {\sum_{c=1}^{C}\exp(\mathbf u_i^\top\mathbf p_c(t)/\tau)}.
\end{equation}
This loss anchors local features to a common class geometry and includes references for classes absent from the current batch.
The directional objective combines batch-level discrimination with shared class alignment weighted by $\beta$:
\begin{equation}\label{eq:direction-loss}
 \mathcal L_{\mathrm{dir}}=\mathcal L_{\mathrm{CL}}+\beta\mathcal L_{\mathrm{proto}}.
\end{equation}

\textbf{(2) Magnitude Consensus through Shared References.}
Directional alignment does not directly constrain feature magnitudes, which also contribute to the statistics used by MMS.
We introduce a shared log-norm reference $a_{\ell,c}(t)$ for each class and layer, allowing their characteristic scales to differ.
Unlike the common radius of Ring loss~\cite{RingLoss_CVPR18}, separate references allow class- and layer-specific scales.
For $\ell\in\Lambda$ and a numerical stability constant $\varepsilon>0$, define the stabilized norm and its logarithm as
\begin{equation}\label{eq:log-norm}
 r_\ell(x)=\sqrt{\|\mathbf h_\ell(x;\vect\phi)\|_2^2+\varepsilon^2},
 \; \rho_\ell(x)=\log r_\ell(x).
\end{equation}
The identity $\rho_\ell-a_{\ell,c}=\log(r_\ell/\exp(a_{\ell,c}))$ shows that the penalty measures relative magnitude deviations.
Magnitude alignment is disabled until a shared reference is established, and the magnitude penalty is omitted for features with very small norms.
To enforce these conditions, let $v_c(t)\in\{0,1\}$ indicate whether the shared magnitude references for class $c$ have been initialized in an earlier round, and define the binary indicator $\chi_{\ell i}=v_{y_i}(t)\mathbf1\{r_\ell(x_i)\ge r_{\min}\}$, with safe-norm threshold $r_{\min}>0$.
The magnitude loss is
\begin{equation}\label{eq:radial-loss}
 \mathcal L_{\mathrm{rad}}
 =\frac1{2B|\Lambda|}\sum_{\ell\in\Lambda}\sum_i
 \chi_{\ell i}\big[\rho_\ell(x_i)-a_{\ell,y_i}(t)\big]^2.
\end{equation}
The indicator $\chi_{\ell i}$ is held fixed during differentiation and affects only the magnitude penalty.
Lemma~\ref{lem:radial} characterizes the penalty by separating within-class log-norm dispersion from the mean offset relative to the shared reference.
\begin{lemma}[Magnitude Consensus]\label{lem:radial}
Fix a class $c$, layer $\ell$, and round $t$, and treat $a=a_{\ell,c}(t)$ as constant.
Let $\rho$ and $\chi$ denote the corresponding log norm and binary indicator for samples of class $c$.
Let $p_\chi=\Pr(\chi=1)>0$ and $\bar\rho_\chi=\mathbb E[\rho\mid\chi=1]$, with $\mathbb E[\rho^2\mid\chi=1]<\infty$.
Then
\begin{equation}\label{eq:radial-decomposition}
 \mathbb E[\chi(\rho-a)^2]
 =p_\chi\!\Bigl[\underbrace{\Var(\rho\mid\chi=1)}_{\text{\normalfont dispersion}}
 +\underbrace{(\bar\rho_\chi-a)^2}_{\text{\normalfont reference offset}}\Bigr].
\end{equation}
\end{lemma}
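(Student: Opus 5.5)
The plan is to condition on the event $\{\chi=1\}$ and then apply the elementary bias--variance identity. Since $\chi\in\{0,1\}$ and $p_\chi>0$, the law of total expectation gives
\begin{equation*}
 \mathbb E[\chi(\rho-a)^2]
 =p_\chi\,\mathbb E[(\rho-a)^2\mid\chi=1]+(1-p_\chi)\cdot 0 .
\end{equation*}
The second term vanishes because the integrand contains the factor $\chi=0$ on that event. Equivalently, $\mathbb E[\chi Z]=\Pr(\chi=1)\,\mathbb E[Z\mid\chi=1]$ for any integrable $Z$. This reduces the statement to an identity for the conditional law of $\rho$ given $\chi=1$.

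Next, under the conditional distribution, write $\rho-a=(\rho-\bar\rho_\chi)+(\bar\rho_\chi-a)$ and expand the square:
\begin{equation*}
 \mathbb E[(\rho-a)^2\mid\chi=1]
 =\mathbb E[(\rho-\bar\rho_\chi)^2\mid\chi=1]
 +2(\bar\rho_\chi-a)\,\mathbb E[\rho-\bar\rho_\chi\mid\chi=1]
 +(\bar\rho_\chi-a)^2 .
\end{equation*}
The first term is $\Var(\rho\mid\chi=1)$ by definition. The cross term is zero because $\bar\rho_\chi=\mathbb E[\rho\mid\chi=1]$, and $a$ is treated as a constant, as the lemma stipulates. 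Multiplying by $p_\chi$ then yields Eq.~\eqref{eq:radial-decomposition}.

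The only point needing care is integrability, which is needed so that each expectation is finite and the expansion is legitimate. The assumption $\mathbb E[\rho^2\mid\chi=1]<\infty$ implies $\mathbb E[|\rho|\mid\chi=1]<\infty$ by Cauchy--Schwarz. Hence $\bar\rho_\chi$ and the conditional variance are finite, and linearity of conditional expectation applies termwise.

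I expect no real obstacle. The one subtlety worth stating is that $\chi$ is treated as a fixed random indicator rather than a function being differentiated, consistent with its being held fixed in Eq.~\eqref{eq:radial-loss}. With that convention, the computation is purely distributional.
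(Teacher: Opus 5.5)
Your proposal is correct and follows the paper's proof: both write $\mathbb E[\chi(\rho-a)^2]=p_\chi\,\mathbb E[(\rho-a)^2\mid\chi=1]$, then add and subtract $\bar\rho_\chi$, and the cross term vanishes by conditional centering. Your explicit check of integrability via $\mathbb E[\rho^2\mid\chi=1]<\infty$ only spells out what the paper leaves as a stated assumption.
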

\begin{proof}
The proof is given in Appendix~\ref{app:radial}.
\end{proof}
Thus, among samples with $\chi=1$, the penalty regularizes log-norm dispersion and mean deviation from shared references, whose scales may vary across classes and layers.

\textbf{(3) Local Objective and Federated Reference Updates.}
The local objective augments cross-entropy with directional and magnitude regularization:
\begin{equation}\label{eq:cgrl-objective}
 \mathcal L_{\mathrm{CGRL}}=\mathcal L_{\mathrm{CE}}
 +w_t\Bigl(\bluepart{\lambda_{\mathrm{dir}}\mathcal L_{\mathrm{dir}}}{directional\\consensus}
 +\orangepart{\lambda_{\mathrm{rad}}\mathcal L_{\mathrm{rad}}}{magnitude\\consensus}\Bigr).
\end{equation}
The weights $\lambda_{\mathrm{dir}}$ and $\lambda_{\mathrm{rad}}$ control the two regularizers.
The schedule $w_t$ gradually introduces geometric guidance as the shared references are established.
Local models are aggregated by client sample size; after training, the classifier is replaced by the statistical head.

Shared references remain fixed during local optimization and are updated between communication rounds.
Directional references aggregate post-update class directions with class-count weights, whereas magnitude references aggregate online log-norm observations.
Temporal averaging moderates reference fluctuations across rounds.
Appendix~\ref{app:updates} details the reference estimators, update conventions, and warm-up schedule.

\textbf{Complementary Geometry for Statistical Classification.}
We first analyze how the two objectives regulate feature directions and magnitudes, then connect these effects to the within-class statistics used by MMS.

Fix the references in one local step and write $\mathbf h_i=\mathbf h_L(x_i;\vect\phi)\ne\mathbf0$.
Let $q_c(\mathbf u_i)$ be the softmax probability of class $c$ from logits $\{\mathbf u_i^\top\mathbf p_r(t)/\tau\}_{r=1}^C$, and let $\ell_{\mathrm p,i}=-\log q_{y_i}(\mathbf u_i)$.
Thus, $\mathcal L_{\mathrm{proto}}=(2B)^{-1}\sum_{i\in\mathcal B_2}\ell_{\mathrm p,i}$.
Differentiation through unit normalization gives
\begin{subequations}\label{eq:direction-gradients}
\begin{align}
 \frac{\partial\mathbf u_i}{\partial\mathbf h_i}
 &=\frac{\mathbf I-\mathbf u_i\mathbf u_i^\top}{\|\mathbf h_i\|_2},\label{eq:unit-jacobian}\\
 \nabla_{\mathbf h_i}\ell_{\mathrm p,i}
 &=\frac{\mathbf I-\mathbf u_i\mathbf u_i^\top}{\tau\|\mathbf h_i\|_2}
 \Bigl(\sum_{c=1}^Cq_c(\mathbf u_i)\mathbf p_c(t)-\mathbf p_{y_i}(t)\Bigr).
 \label{eq:prototype-gradient}
\end{align}
\end{subequations}
The projection $\mathbf I-\mathbf u_i\mathbf u_i^\top$ removes the radial component, leaving a gradient tangent to the sphere at $\mathbf h_i$. Both directional terms depend on raw features through unit normalization and share this tangency property.

Abbreviate $\mathbf h=\mathbf h_\ell(x_i;\vect\phi)$ and $\ell_{\mathrm r}=\chi(\rho-a)^2$.
With the reference and indicator fixed, the magnitude penalty has the feature gradient
\begin{equation}\label{eq:radial-gradient}
 \nabla_{\mathbf h}\ell_{\mathrm r}
 =\frac{2\chi(\rho-a)}{\|\mathbf h\|_2^2+\varepsilon^2}\,\mathbf h.
\end{equation}
When $\chi=1$, the gradient is radial, inducing contraction above the reference and expansion below it under a sufficiently small feature-space descent step.
At the final layer, write $\mathbf g_i=\nabla_{\mathbf h_i}\mathcal L_{\mathrm{dir}}$ and $\mathbf v_i=\nabla_{\mathbf h_i}\ell_{\mathrm r,i}$.
The tangential and radial gradients satisfy
\begin{equation}\label{eq:feature-complementarity}
 \mathbf h_i^\top\mathbf g_i=0,\;\mathbf g_i^\top\mathbf v_i=0.
\end{equation}
Directional supervision therefore preserves magnitude to first order in feature space, while the radial term directly regulates it.
This complementarity concerns feature-space gradients; both objectives act through shared backbone parameters, whose gradients need not remain orthogonal.

To relate these feature-space mechanisms to the statistics used by MMS, fix a layer $\ell$ and class $c$ and pool its $N_c$ features across clients under a common backbone.
Abbreviate these features as $\mathbf h_i=R_i\mathbf u_i$, with mean $\bar{\mathbf h}_{\ell,c}$ and covariance $\widehat{\vect\Sigma}_{\ell,c}$ defined using denominator $N_c$.
Writing $\tr$ for the matrix trace, we expand squared distances about the mean and apply Eq.~\eqref{eq:angular-radial-distance} to obtain
\begin{equation}\label{eq:pairwise-scatter}
\begin{aligned}
 \tr(\widehat{\vect\Sigma}_{\ell,c})
 &=\frac1{N_c}\sum_{i=1}^{N_c}\|\mathbf h_i-\bar{\mathbf h}_{\ell,c}\|_2^2\\
 &=\frac1{2N_c^2}\sum_{i,j=1}^{N_c}\|\mathbf h_i-\mathbf h_j\|_2^2\\
 &=\underbrace{\frac1{2N_c^2}\sum_{i,j=1}^{N_c}(R_i-R_j)^2}_{\text{\normalfont\scriptsize radial scatter}}\\
 &\quad+\underbrace{\frac1{2N_c^2}\sum_{i,j=1}^{N_c}R_iR_j\|\mathbf u_i-\mathbf u_j\|_2^2}_{\text{\normalfont\scriptsize magnitude-weighted angular scatter}}.
\end{aligned}
\end{equation}
Radial scatter can persist even when directions are aligned, and feature magnitudes also weight the angular contribution.
This identity links the geometry regulated by final-layer directional supervision and layerwise magnitude guidance to within-class covariance estimation.
It explains the statistical relevance of both forms of guidance; CGRL optimizes their respective objectives rather than directly minimizing the covariance trace.

Appendix~\ref{app:gradients} details the derivation and numerical normalization conventions. We then freeze the final aggregated backbone to build the statistical classifier.

\subsection{Multilayer Metric Synthesis}\label{sec:mms}
MMS constructs a shared statistical metric from the frozen multilayer representations.
Integrating multiple depths adds information beyond the final layer; pooling class statistics across clients avoids separate covariance estimation from limited local observations.
MMS recomputes all statistics from features extracted by the final aggregated backbone $\vect\phi_T$, ensuring that statistics from different clients describe the same feature space.

\textbf{(1) Layer Scaling and Feature Composition.}
A single scalar per layer balances average coordinate variance across layers while preserving within-layer norm ratios.
With $\bar{\mathbf h}_\ell=N^{-1}\sum_{k,i}\mathbf h_\ell(x_i^{(k)};\vect\phi_T)$, define the shared layer scale as
\begin{equation}\label{eq:layer-variance}
 s_\ell^2=\frac1{Nd_\ell}\sum_{k,i}
 \big\|\mathbf h_\ell(x_i^{(k)};\vect\phi_T)-\bar{\mathbf h}_\ell\big\|_2^2.
\end{equation}
For a fixed ordering $(\ell_1,\ldots,\ell_m)$ of $\Lambda$, with $m=|\Lambda|$, we concatenate the scaled features as follows:
\begin{equation}\label{eq:mms-scale}
 \mathbf z(x)=\begin{bmatrix}
  \mathbf h_{\ell_1}(x;\vect\phi_T)/s_{\ell_1}\\
  \vdots\\
  \mathbf h_{\ell_m}(x;\vect\phi_T)/s_{\ell_m}
 \end{bmatrix}\in\mathbb R^d,
 \quad d=\sum_{\ell\in\Lambda}d_\ell.
\end{equation}
The scales $s_\ell$ are estimated from globally centered features, while the concatenated features are scaled without centering.
For $s_\ell>0$, each scaled block has unit average coordinate variance, establishing comparable numerical scales for covariance regularization.
Uniform scaling within each layer preserves norm ratios across samples, retaining the magnitude structure shaped by CGRL.

\textbf{(2) Federated Estimation of Class Statistics.}
Let $\mathcal I_c^{(k)}=\{i:y_i^{(k)}=c\}$.
Shared class means provide references when local observations are sparse or absent, while local means capture client-specific deviations for personalization:
\begin{subequations}\label{eq:mms-means}
\begin{align}
 \vect\mu_c^{(k)}&=\frac1{n_c^{(k)}}\sum_{i\in\mathcal I_c^{(k)}}\mathbf z(x_i^{(k)}),
 \; n_c^{(k)}>0,\label{eq:local-mean}\\
 \vect\mu_c&=\frac1{N_c}\sum_{k:n_c^{(k)}>0}n_c^{(k)}\vect\mu_c^{(k)}.\label{eq:global-mean}
\end{align}
\end{subequations}
To compare deviations from class centers, MMS uses a covariance-based metric that accounts for within-class variability and feature correlations.
MMS estimates within-class covariance relative to shared class means, capturing both within-client variability and cross-client mean shifts.
With residuals $\mathbf e_i^{(k)}=\mathbf z(x_i^{(k)})-\vect\mu_{y_i^{(k)}}$, the pooled covariance is
\begin{equation}\label{eq:mms-covariance}
 \widehat{\vect\Sigma}_{\mathrm w}
 =\frac1N\sum_{k,i}\mathbf e_i^{(k)}(\mathbf e_i^{(k)})^\top.
\end{equation}
Proposition~\ref{prop:scatter} decomposes this covariance into within-client variation and between-client mean shifts.
\begin{proposition}[Within- and Between-Client Covariance Decomposition]\label{prop:scatter}
For every nonempty client--class group, let $\widehat{\vect\Sigma}_c^{(k)}$ be its covariance about $\vect\mu_c^{(k)}$, computed with denominator $n_c^{(k)}$, and let $\vect\Delta_c^{(k)}=\vect\mu_c^{(k)}-\vect\mu_c$.
For $\mathcal J=\{(k,c):n_c^{(k)}>0\}$,
\begin{equation}\label{eq:scatter-decomposition}
 \widehat{\vect\Sigma}_{\mathrm w}
 =\sum_{(k,c)\in\mathcal J}\frac{n_c^{(k)}}N
 \Bigl(\underbrace{\widehat{\vect\Sigma}_c^{(k)}}_{\text{\normalfont\scriptsize\shortstack{within-client\\variation}}}
 +\underbrace{\vect\Delta_c^{(k)}(\vect\Delta_c^{(k)})^\top}_{\text{\normalfont\scriptsize\shortstack{between-client\\variation}}}\Bigr).
\end{equation}
\end{proposition}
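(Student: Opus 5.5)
The plan is to reorganize the sum defining $\widehat{\vect\Sigma}_{\mathrm w}$ in Eq.~\eqref{eq:mms-covariance} by client--class groups and then apply the usual within/between decomposition to each group. Every sample $(k,i)$ has a label $y_i^{(k)}=c$, and the pair $(k,c)$ lies in $\mathcal J$ because the group contains at least that sample. Hence
\[
 N\widehat{\vect\Sigma}_{\mathrm w}=\sum_{(k,c)\in\mathcal J}\sum_{i\in\mathcal I_c^{(k)}}\bigl(\mathbf z(x_i^{(k)})-\vect\mu_c\bigr)\bigl(\mathbf z(x_i^{(k)})-\vect\mu_c\bigr)^\top .
\]
Groups with $n_c^{(k)}=0$ contribute nothing, which is why the statement sums only over $\mathcal J$.

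Within a fixed group $(k,c)\in\mathcal J$, I would write each residual as
\[
 \mathbf z(x_i^{(k)})-\vect\mu_c=\bigl(\mathbf z(x_i^{(k)})-\vect\mu_c^{(k)}\bigr)+\vect\Delta_c^{(k)},
\]
where the local mean $\vect\mu_c^{(k)}$ from Eq.~\eqref{eq:local-mean} is well defined because $n_c^{(k)}>0$. Expanding the outer product gives three kinds of term:
\begin{itemize}
 \item the sum of $(\mathbf z-\vect\mu_c^{(k)})(\mathbf z-\vect\mu_c^{(k)})^\top$, which equals $n_c^{(k)}\widehat{\vect\Sigma}_c^{(k)}$ by the definition with denominator $n_c^{(k)}$;
 \item the term $n_c^{(k)}\vect\Delta_c^{(k)}(\vect\Delta_c^{(k)})^\top$, since $\vect\Delta_c^{(k)}$ does not depend on $i$;
 \item two cross terms of the form $\bigl(\sum_{i\in\mathcal I_c^{(k)}}(\mathbf z(x_i^{(k)})-\vect\mu_c^{(k)})\bigr)(\vect\Delta_c^{(k)})^\top$ and its transpose.
\end{itemize}
Each cross term vanishes because the residuals about the local mean sum to zero, by the definition of $\vect\mu_c^{(k)}$.

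Summing the surviving terms over $\mathcal J$ and dividing by $N$ yields Eq.~\eqref{eq:scatter-decomposition}. The only point that needs care is the cancellation of the cross terms. It uses that each residual in Eq.~\eqref{eq:mms-covariance} is taken about the shared mean of its own class, and that $\vect\Delta_c^{(k)}$ is constant within a group. The identity therefore holds for any reference vector $\vect\mu_c$, so the global-mean formula in Eq.~\eqref{eq:global-mean} is not needed. No other obstacle is expected.
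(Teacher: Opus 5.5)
Your proposal is correct and matches the paper's proof essentially step for step: you split each residual into the local-mean residual plus $\vect\Delta_c^{(k)}$, note that the cross terms vanish because residuals about $\vect\mu_c^{(k)}$ sum to zero, and then sum over nonempty groups and divide by $N$. Your side remark is also right: the identity holds for any per-class reference vector $\vect\mu_c$, so Eq.~\eqref{eq:global-mean} is not needed.
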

\begin{proof}
The proof is given in Appendix~\ref{app:scatter}.
\end{proof}

Pooling covariances computed about client-specific class means would omit the between-client mean-shift term.
MMS retains this variation in the shared covariance, while SASP adapts class means along the same displacements according to local support.

The concatenated representation also allows MMS to retain within-class cross-layer covariance.
Partition $\mathbf e_i^{(k)}$ into layer blocks $\mathbf e_{\ell i}^{(k)}\in\mathbb R^{d_\ell}$.
The $(\ell,\ell')$ block is
\begin{equation}\label{eq:cross-layer-covariance}
 [\widehat{\vect\Sigma}_{\mathrm w}]_{\ell\ell'}
 =\frac1N\sum_{k,i}\mathbf e_{\ell i}^{(k)}(\mathbf e_{\ell' i}^{(k)})^\top,
 \;\ell,\ell'\in\Lambda.
\end{equation}
The off-diagonal blocks retain cross-covariances between layer-wise residuals, enabling the shared metric to account for their joint variation.

The total within-class variation connects directly to the feature geometry analyzed in CGRL.
With $\widehat{\vect\Sigma}_{\ell,c}$ in Eq.~\eqref{eq:pairwise-scatter} recomputed under $\vect\phi_T$, shared layer scaling gives
\begin{equation}\label{eq:mms-scatter-trace}
 \tr(\widehat{\vect\Sigma}_{\mathrm w})
 =\sum_{c=1}^{C}\frac{N_c}{N}\sum_{\ell\in\Lambda}
 \frac{\tr(\widehat{\vect\Sigma}_{\ell,c})}{s_\ell^2}.
\end{equation}
The radial--angular decomposition links MMS's total within-class variation to radial and magnitude-weighted angular scatter, each scaled by $s_\ell^{-2}$ within layer $\ell$.

Appendix~\ref{app:moments} derives the layer scales, class means, and pooled covariance exactly from additive client counts and first- and second-order feature moments.

\textbf{(3) Regularized Shared Metric Construction.}
The estimated within-class covariance may be singular or ill-conditioned, preventing direct inversion or making class comparisons sensitive to estimation errors.
Since all personalized heads use the resulting metric, we add an isotropic ridge proportional to the estimated covariance's average variance before inversion~\cite{RDA_JASA89}:
\begin{equation}\label{eq:mms-metric}
 \lambda_A=\gamma\frac{\tr(\widehat{\vect\Sigma}_{\mathrm w})}{d},
 \;
 \mathbf A=(\widehat{\vect\Sigma}_{\mathrm w}+\lambda_A\mathbf I_d)^{-1}.
\end{equation}
Here $\tr(\widehat{\vect\Sigma}_{\mathrm w})/d$ equals the average coordinate variance, or equivalently the mean covariance eigenvalue.
Thus, $\gamma>0$ controls the relative regularization strength.
For positive within-class trace, $\lambda_A>0$ makes the regularized covariance positive definite, ensuring a well-defined positive-definite metric $\mathbf A$.
The shared metric $\mathbf A$ replaces $\vect\Sigma^{-1}$ in Eq.~\eqref{eq:statistical-score}.

\textbf{Spectral Interpretation and Conditioning.}
For the eigendecomposition $\widehat{\vect\Sigma}_{\mathrm w}=\mathbf Q\diag(\sigma_1,\ldots,\sigma_d)\mathbf Q^\top$, where $\mathbf Q$ contains orthonormal eigenvectors and $\sigma_j$ are the eigenvalues, the squared distance to a class center $\vect\mu$ is
\begin{equation}\label{eq:spectral-metric}
 \|\mathbf z-\vect\mu\|_{\mathbf A}^2
 =\sum_{j=1}^{d}\frac{[\mathbf Q^\top(\mathbf z-\vect\mu)]_j^2}{\sigma_j+\lambda_A}.
\end{equation}
The metric downweights deviations along directions with large within-class variance, while the ridge bounds sensitivity along directions with small variance.

For extreme covariance eigenvalues $\sigma_{\min}$ and $\sigma_{\max}$, the metric's operator norm and spectral condition number quantify its maximum amplification and conditioning:
\begin{subequations}\label{eq:metric-conditioning}
\begin{align}
 \|\mathbf A\|_2
 &=\frac1{\sigma_{\min}+\lambda_A}\le\frac1{\lambda_A},\label{eq:metric-gain}\\
 \kappa_2(\mathbf A)
 &=\frac{\sigma_{\max}+\lambda_A}{\sigma_{\min}+\lambda_A}
 \le 1+\frac d\gamma.\label{eq:metric-condition-number}
\end{align}
\end{subequations}
The ridge bounds the maximum amplification by $1/\lambda_A$ and the condition number by $1+d/\gamma$, including when the estimated within-class covariance is singular.
Appendix~\ref{app:conditioning} gives the derivation.

\subsection{Support-Adaptive Shrinkage Personalization}\label{sec:sasp}
Using the representation learned by CGRL and the class statistics and metric constructed by MMS, SASP builds a personalized classifier for each client.
It first adapts class means according to local support, then combines these means with the shared metric to construct a classifier over all federation-known classes.
This design targets client-specific adaptation while retaining shared references for classes with limited or no local support.
The backbone, layer scales, and covariance metric remain shared, avoiding covariance estimation from sparse local observations.

\textbf{(1) Support-Dependent Mean Adaptation.}
Local class means capture client-specific shifts, but estimates based on few samples can be unreliable.
SASP therefore shrinks each local mean toward its shared counterpart according to class support.
We define the local displacement as follows, setting it to zero for locally absent classes:
\begin{equation}\label{eq:local-displacement}
 \vect\Delta_c^{(k)}=
 \begin{cases}
 \vect\mu_c^{(k)}-\vect\mu_c,& n_c^{(k)}>0,\\
 \mathbf0,& n_c^{(k)}=0.
 \end{cases}
\end{equation}
For a reference strength $m_0>0$ shared across clients and classes, the personalized mean is defined by
\begin{equation}\label{eq:sasp-adaptation}
 \newcommand{\supportfraction}{\frac{n_c^{(k)}}{n_c^{(k)}+m_0}}
 \newcommand{\supportannotation}{\raisebox{2pt}{$\displaystyle\overbrace{\phantom{\supportfraction}}^{\alpha_c^{(k)}}$}}
 \widetilde{\vect\mu}_c^{(k)}
 =\bluepart{\vect\mu_c}{shared center}
 +\vphantom{\supportannotation}
 \orangepart{\rlap{\smash{\supportannotation}}\supportfraction\vect\Delta_c^{(k)}}{support-weighted local shift}.
\end{equation}
The adaptation weight is defined as $\alpha(n)=n/(n+m_0)$, where $n$ denotes local class support.
The corresponding shared weight is $1-\alpha(n)$.
The local weight increases with class support, while a larger $m_0$ increases the shared weight.

\textbf{Regularized Estimation and Geometric Interpretation.}
We justify the adaptation rule as a regularized mean estimator that balances local observations against the shared reference.
Proposition~\ref{prop:shrinkage} shows that this balance yields exactly the support-dependent update in Eq.~\eqref{eq:sasp-adaptation}.
\begin{proposition}[Support-Adaptive Mean Estimation]\label{prop:shrinkage}
For fixed $\mathbf A\succ0$, $\vect\mu_c$, and $m_0>0$, the unique minimizer over the candidate class mean $\mathbf m\in\mathbb R^d$ of
\begin{equation}\label{eq:shrinkage-objective}
 J_c^{(k)}(\mathbf m)
 =\sum_{i\in\mathcal I_c^{(k)}}\underbrace{\|\mathbf z(x_i^{(k)})-\mathbf m\|_{\mathbf A}^2}_{\text{\normalfont local evidence}}
 +\underbrace{m_0\|\mathbf m-\vect\mu_c\|_{\mathbf A}^2}_{\text{\normalfont shared reference}}
\end{equation}
 is $\widetilde{\vect\mu}_c^{(k)}$ in Eq.~\eqref{eq:sasp-adaptation}, including $n_c^{(k)}=0$ with an empty data sum.
\end{proposition}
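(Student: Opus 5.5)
The plan is to treat $J_c^{(k)}$ as a strictly convex quadratic in $\mathbf m$ and solve its first-order condition directly. Write $n=n_c^{(k)}$ and $\mathbf z_i=\mathbf z(x_i^{(k)})$ for $i\in\mathcal I_c^{(k)}$.

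Expanding each term with the symmetry of $\mathbf A$, the quadratic part of $J_c^{(k)}$ is $(n+m_0)\,\mathbf m^\top\mathbf A\mathbf m$. This holds also when $n=0$, where the data sum is empty and only the reference term survives. The gradient is
\begin{equation*}
 \nabla J_c^{(k)}(\mathbf m)=-2\mathbf A\sum_{i\in\mathcal I_c^{(k)}}(\mathbf z_i-\mathbf m)+2m_0\mathbf A(\mathbf m-\vect\mu_c),
\end{equation*}
and the Hessian is $2(n+m_0)\mathbf A$. Since $\mathbf A\succ0$ and $n+m_0\ge m_0>0$, the Hessian is positive definite. Hence $J_c^{(k)}$ is strictly convex and coercive, so a stationary point exists, is unique, and is the global minimizer.

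To find it, set the gradient to zero and cancel the invertible factor $2\mathbf A$. This gives $(n+m_0)\mathbf m=\sum_i\mathbf z_i+m_0\vect\mu_c$.

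\emph{Case $n>0$.} Substitute $\sum_i\mathbf z_i=n\vect\mu_c^{(k)}$ from Eq.~\eqref{eq:local-mean}. Then
\begin{equation*}
 \mathbf m=\frac{n\vect\mu_c^{(k)}+m_0\vect\mu_c}{n+m_0}=\vect\mu_c+\frac{n}{n+m_0}\bigl(\vect\mu_c^{(k)}-\vect\mu_c\bigr),
\end{equation*}
which is Eq.~\eqref{eq:sasp-adaptation} with $\vect\Delta_c^{(k)}$ as in Eq.~\eqref{eq:local-displacement}.

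\emph{Case $n=0$.} The empty sum gives $\mathbf m=\vect\mu_c$. This again matches Eq.~\eqref{eq:sasp-adaptation}, because $\vect\Delta_c^{(k)}=\mathbf0$ and $\alpha(0)=0$.

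No step is a real obstacle. The only points needing care are the $n=0$ convention, which is why the Hessian bound uses $m_0>0$ rather than $n>0$, and noting that the minimizer does not depend on $\mathbf A$ because $\mathbf A$ cancels from the stationarity equation.
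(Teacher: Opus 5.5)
Your proposal is correct and follows the paper's proof essentially step for step. You compute the gradient and the Hessian $2(n+m_0)\mathbf A\succ0$, use strict convexity for uniqueness, solve the stationarity condition $(n+m_0)\mathbf m=\sum_i\mathbf z_i+m_0\vect\mu_c$, and then specialize to $n>0$ via $\sum_i\mathbf z_i=n\vect\mu_c^{(k)}$ and to $n=0$ via the empty sum, exactly as the paper does.
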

\begin{proof}
The proof is given in Appendix~\ref{app:shrinkage}.
\end{proof}

The data term scales with class support, while $m_0$ gives the shared mean a fixed effective sample weight~\cite{Murphy_Gaussian07}.
The common positive-definite metric $\mathbf A$ cancels from the optimality condition; it determines classification geometry without altering the estimated mean.

The following identities quantify how closely the adapted mean fits the local center and how far it departs from the shared reference.
For an observed class, its distance to the local center satisfies
\begin{equation}\label{eq:local-fit}
 \|\widetilde{\vect\mu}_c^{(k)}-\vect\mu_c^{(k)}\|_{\mathbf A}
 =(1-\alpha_c^{(k)})\|\vect\mu_c-\vect\mu_c^{(k)}\|_{\mathbf A}.
\end{equation}
The departure from the shared center is $\|\widetilde{\vect\mu}_c^{(k)}-\vect\mu_c\|_{\mathbf A}=\alpha_c^{(k)}\|\vect\Delta_c^{(k)}\|_{\mathbf A}$.
For fixed local and shared means, greater support reduces the residual to the local center while permitting a larger displacement from the shared center.
This displacement determines the score perturbations examined in the stability analysis below.

\textbf{(2) Personalized Classifier Construction.}
The adapted means and the shared MMS metric define a client-specific classifier over all federation-known classes.
SASP uses class counts to control mean adaptation and equal priors to avoid an additional preference for locally frequent classes.
This choice is consistent with the equal class weighting of $\mathcal E_{\mathrm{global}}^{(k)}$; an unsmoothed local frequency prior would also exclude locally absent classes.
With $\pi_c=1/C$, the log-prior term is constant across classes.
Substituting the personalized means and shared metric into Eq.~\eqref{eq:statistical-score} yields
\begin{subequations}\label{eq:sasp-head}
\begin{align}
 \mathbf w_c^{(k)}&=\mathbf A\widetilde{\vect\mu}_c^{(k)},\;
 b_c^{(k)}=-\tfrac12(\widetilde{\vect\mu}_c^{(k)})^\top\mathbf w_c^{(k)},\label{eq:head-parameters}\\
 s_c^{(k)}(\mathbf z)&=(\mathbf w_c^{(k)})^\top\mathbf z+b_c^{(k)},\label{eq:head-score}\\
 f^{(k)}(x)&=\argmax_{c\in\mathcal Y}s_c^{(k)}(\mathbf z(x)).\label{eq:head-prediction}
\end{align}
\end{subequations}
The rows $(\mathbf w_c^{(k)})^\top$ and entries $b_c^{(k)}$ form the client head $(\mathbf W^{(k)},\mathbf b^{(k)})$.
This rule assigns each sample to the nearest personalized class mean under the shared metric $\mathbf A$.
Under the Gaussian model with personalized means and common covariance $\mathbf A^{-1}$, it is also the equal-prior Bayes rule.
Appendix~\ref{app:head} gives the distance and likelihood derivations.

\textbf{Prediction Stability under Mean Adaptation.}
We compare the personalized classifier with its shared counterpart, obtained by setting all adaptation weights to zero.
Although absent-class scores remain unchanged, adapting observed-class means can alter their decision regions.
Let $s_c^0(\mathbf z)=\vect\mu_c^\top\mathbf A\mathbf z-\tfrac12\vect\mu_c^\top\mathbf A\vect\mu_c$ be the shared score and $\vect\delta_c^{(k)}=\alpha_c^{(k)}\vect\Delta_c^{(k)}$ the adapted displacement.
Substituting $\widetilde{\vect\mu}_c^{(k)}=\vect\mu_c+\vect\delta_c^{(k)}$ into Eq.~\eqref{eq:head-score} yields the exact change
\begin{equation}\label{eq:score-change}
\begin{aligned}
 s_c^{(k)}(\mathbf z)-s_c^0(\mathbf z)
 ={}&\alpha_c^{(k)}(\vect\Delta_c^{(k)})^\top\mathbf A(\mathbf z-\vect\mu_c)\\
 &-\tfrac12(\alpha_c^{(k)})^2\|\vect\Delta_c^{(k)}\|_{\mathbf A}^2.
\end{aligned}
\end{equation}
By the Cauchy--Schwarz inequality,
\begin{equation}\label{eq:score-bound}
\begin{aligned}
 |s_c^{(k)}(\mathbf z)-s_c^0(\mathbf z)|
 &\le \epsilon_c^{(k)}(\mathbf z),\\
 \epsilon_c^{(k)}(\mathbf z)
 &=\|\vect\delta_c^{(k)}\|_{\mathbf A}
   \|\mathbf z-\vect\mu_c\|_{\mathbf A}\\
 &\quad+\tfrac12\|\vect\delta_c^{(k)}\|_{\mathbf A}^2.
\end{aligned}
\end{equation}

Let $f^0(x)=\argmax_c s_c^0(\mathbf z(x))$ be the shared predictor and $c^\star=f^0(x)$. Define its winning margin and the perturbation budget as
\begin{subequations}\label{eq:margin-budget}
\begin{align}
 M^0(x)&=s_{c^\star}^0(\mathbf z(x))-\max_{c\ne c^\star}s_c^0(\mathbf z(x)),\\
 B^{(k)}(x)&=\epsilon_{c^\star}^{(k)}(\mathbf z(x))
 +\max_{c\ne c^\star}\epsilon_c^{(k)}(\mathbf z(x)).
\end{align}
\end{subequations}
\begin{corollary}[Prediction Stability]\label{cor:preservation}
For fixed fitted statistics and $\mathbf A\succ0$,
\begin{equation}\label{eq:margin-condition}
 M^0(x)>B^{(k)}(x)\ \Longrightarrow\ f^{(k)}(x)=f^0(x).
\end{equation}
\end{corollary}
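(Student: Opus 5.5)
The plan is a direct margin-perturbation argument built on the uniform score bound in Eq.~\eqref{eq:score-bound}. Fix $x$, write $\mathbf z=\mathbf z(x)$, and let $c^\star=f^0(x)$. To show $f^{(k)}(x)=c^\star$, it suffices to prove that $s_{c^\star}^{(k)}(\mathbf z)>s_c^{(k)}(\mathbf z)$ for every $c\ne c^\star$. Strictness matters: it rules out ties, so the argmax in Eq.~\eqref{eq:head-prediction} is the unique maximizer $c^\star$ under any tie-breaking convention. The hypothesis $M^0(x)>B^{(k)}(x)$ together with $B^{(k)}(x)\ge0$ also gives $M^0(x)>0$, so $c^\star$ is the unique shared maximizer as well. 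This removes any ambiguity in defining $f^0$.

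First, I would lower-bound the winning class's personalized score. Eq.~\eqref{eq:score-bound} gives $s_{c^\star}^{(k)}(\mathbf z)\ge s_{c^\star}^0(\mathbf z)-\epsilon_{c^\star}^{(k)}(\mathbf z)$. Second, for each competitor $c\ne c^\star$, the same bound gives
\[
s_c^{(k)}(\mathbf z)\le s_c^0(\mathbf z)+\epsilon_c^{(k)}(\mathbf z)\le \max_{r\ne c^\star}s_r^0(\mathbf z)+\max_{r\ne c^\star}\epsilon_r^{(k)}(\mathbf z).
\]
Subtracting the two bounds and using the definitions in Eq.~\eqref{eq:margin-budget} yields, for every $c\ne c^\star$,
\[
s_{c^\star}^{(k)}(\mathbf z)-s_c^{(k)}(\mathbf z)\ge M^0(x)-B^{(k)}(x)>0.
\]
This is the required strict inequality.

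The only points needing care are the validity of Eq.~\eqref{eq:score-bound} and the handling of degenerate classes. I would briefly re-verify the exact score change in Eq.~\eqref{eq:score-change} by expanding $(\vect\mu_c+\vect\delta)^\top\mathbf A\mathbf z-\tfrac12(\vect\mu_c+\vect\delta)^\top\mathbf A(\vect\mu_c+\vect\delta)$, which uses the symmetry of $\mathbf A$. The bound then follows from Cauchy--Schwarz in the inner product induced by $\mathbf A\succ0$, together with the triangle inequality. Locally absent classes have $\vect\delta_c^{(k)}=\mathbf0$ and hence $\epsilon_c^{(k)}=0$, so they are covered without a separate case. The case $C=1$ is trivial and can be excluded at the outset, so the maxima over $c\ne c^\star$ are taken over a nonempty set.

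I expect no substantive obstacle. The main subtlety is ensuring strictness, so that ties in the argmax cannot arise, and noting that the fitted statistics are fixed, so $\mathbf A$, the means, and the weights $\alpha_c^{(k)}$ are deterministic in the argument.
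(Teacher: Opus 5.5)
Your proposal is correct and follows essentially the same route as the paper's proof: an exact score-change identity, the Cauchy--Schwarz bound in the $\mathbf A$-inner product, and then the lower bound $s_{c^\star}^{(k)}(\mathbf z)-s_c^{(k)}(\mathbf z)\ge M^0(x)-B^{(k)}(x)>0$ for every $c\ne c^\star$. In both arguments, shared-score ties are excluded because they force $M^0(x)=0$, and nonnegativity of the budget rules that case out.
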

\begin{proof}
The proof and associated target-risk bounds are given in Appendix~\ref{app:score}.
\end{proof}

Direct local-mean adaptation uses local means for observed classes and shared means for absent classes.
For fixed class means, metric, and query, SASP yields no larger a perturbation budget $B^{(k)}(x)$, making the prediction-preservation condition easier to satisfy.

Inference requires only the frozen backbone, shared layer scaling, and the client-specific linear head; personalization is completed before deployment.

\section{Experiments}\label{sec:exp}
We evaluate local adaptation, generalization across the full label space, and client recognition floors through four complementary analyses.
After describing the experimental setup and evaluation protocol, we
(i) benchmark FedGRC against external baselines in Section~\ref{sec:exp-comprehensive};
(ii) compare recognition before and after personalization to examine the effects of local adaptation in Section~\ref{sec:exp-personalization};
(iii) assess the contributions of the framework's components and their internal designs through ablations in Section~\ref{sec:exp-designs}; and
(iv) examine backbone--classifier synergy and gains over strong baseline combinations in Section~\ref{sec:exp-cross}.

\subsection{Experimental Setup and Evaluation Protocol}
\label{sec:exp-setup}

\subsubsection{Baselines and Datasets}
We benchmark 21 external methods, comprising 15 complete FL methods and 6 statistical classifiers evaluated on 2 backbone types.
The resulting 27 baseline configurations fall into three categories:
\begin{itemize}
  \item \textbf{Complete FL Methods.} FedAvg~\cite{FedAvg_AISTATS17}, FedRep~\cite{FedRep_ICML21}, FedROD~\cite{FedROD_ICLR22}, FedLC~\cite{FedLC_ICML22}, FedTHE~\cite{FedTHE_ICLR23}, FedETF~\cite{FedETF_ICCV23}, FedFA~\cite{FedFA_TMC24}, pFedFDA~\cite{pFedFDA_NeurIPS24}, FedVLS~\cite{FedVLS_AAAI25}, FedDr+~\cite{FedDrPlus_TMLR25}, BTFL~\cite{BTFL_KDD25}, HPFL~\cite{HPFL_ICLR25}, FedYoYo~\cite{FedYoYo_ICCV25}, FedLog~\cite{FedLog_TMLR26}, and pFedGM~\cite{pFedGM_arXiv26} are evaluated using their own training procedures and designated classifiers to compare complete learning and prediction systems.
  \item \textbf{Statistical Classifiers on FedAvg.} CCVR~\cite{CCVR_NeurIPS21}, FedNCM~\cite{FedNCM_NeurIPS23}, Fed3R~\cite{Fed3R_ICML24}, FedCGS~\cite{FedCGS_AAAI25}, FedCOF~\cite{FedCOF_NeurIPS25}, and GH-LDA~\cite{GHLDA_ICLR26} are applied to the FedAvg backbone. These combinations assess the effects of statistical readouts on recognition with a standard federated representation.
  \item \textbf{Statistical Classifiers on FedYoYo.} CCVR, FedNCM, Fed3R, FedCGS, FedCOF, and GH-LDA are also applied to the FedYoYo backbone. These combinations provide stronger references by pairing an advanced representation-learning method with existing statistical classifiers.
\end{itemize}
Evaluation spans 5 heterogeneity settings, 3 label-space sizes, and full or dynamic partial client participation:
\begin{itemize}
  \item \textbf{CIFAR-10}~\cite{CIFAR_TR09}. We partition data from 10 classes among 20 clients using Dirichlet label partitions~\cite{Dirichlet_arXiv19}, with Dir(0.3) and Dir(0.1) representing strong and extreme label heterogeneity, respectively. All clients participate in each round.
  \item \textbf{CIFAR-100}~\cite{CIFAR_TR09}. We distribute data from 100 classes among 20 clients using Dir(0.3) and Dir(0.1) label partitions, with full participation each round.
  \item \textbf{FedRS}~\cite{FedRSBench_FedKDD25}. We use the NIID-1 partition, comprising 15 unified classes and 135 clients from 8 remote-sensing datasets. This setting combines label and sample-size imbalance with sensor- and region-dependent feature shifts. Dynamic participation is simulated by sampling 10 of the 135 clients per round to represent clients joining and leaving training.
\end{itemize}
All experiments use 3 random seeds. Results are reported as mean $\pm$ sample standard deviation, with means alone shown in the ablation, personalization, and cross-comparison tables.
Each metric is computed within each run and then averaged across seeds.
The comparison tables group the three baseline categories into separate blocks: (+) denotes an attached classifier, ($\dagger$) and ($*$) mark the best and second-best results, and ($\Delta$) gives FedGRC's gain over the best baseline for each metric.
Accuracies are reported in (\%), and gains are absolute differences in percentage points.

\subsubsection{Evaluation Protocol}\label{sec:eval-protocol}
Each deployed client model predicts over all $C$ federation-known classes. The same predictor is evaluated on its local test set $\mathcal T_{\mathrm{local}}^{(k)}$ and a shared global test set $\mathcal T_{\mathrm{global}}$ covering every class, including those rare or absent in its local training data.
The local test sets provide \textit{local} accuracy and \textit{L@25}; the shared global test set provides \textit{global}, \textit{unseen}, and \textit{rare} accuracy, together with \textit{U@25} and \textit{R@25}.

Let $L^{(k)}$ denote client $k$'s \textit{local} accuracy and $m_L^{(k)}=|\mathcal T_{\mathrm{local}}^{(k)}|$ its local test-set size. For class $c$, $a_c^{(k)}$ is the number of correct predictions made by client $k$ on $\mathcal T_{\mathrm{global}}$, and $t_c>0$ is the number of test samples in that class.

\begin{itemize}
  \item \textbf{Local Adaptation and Global All-Class Generalization.} The overall accuracies are
  \begin{equation}\label{eq:eval-local-global}
    \textit{local}=\frac{\sum_k m_L^{(k)}L^{(k)}}{\sum_k m_L^{(k)}},\quad
    \textit{global}=\frac1{KC}\sum_{k,c}\frac{a_c^{(k)}}{t_c}.
  \end{equation}
  The \textit{local} metric weights clients by their local test-set sizes. The \textit{global} metric gives equal weight to classes within each client and then to clients, measuring the deployed client models' recognition on a common class-balanced all-class distribution.

  \item \textbf{Locally Unseen and Rare-Class Performance.} For client $k$, locally unseen classes form $\mathcal U^{(k)}=\{c:n_c^{(k)}=0\}$, with $m_U^{(k)}=\sum_{c\in\mathcal U^{(k)}}t_c$ corresponding test samples. For clients with locally unseen classes, define client-level and aggregate \textit{unseen} accuracies as
  \begin{equation}\label{eq:eval-client-subsets}
    U^{(k)}=\frac{\sum_{c\in\mathcal U^{(k)}}a_c^{(k)}}{m_U^{(k)}},\,
    \textit{unseen}=\frac{\sum_k m_U^{(k)}U^{(k)}}{\sum_k m_U^{(k)}}.
  \end{equation}
  Clients with no locally unseen classes are excluded from these sums and the \textit{U@25} ranking, while contributing to all other metrics.

  The locally rare set $\mathcal R^{(k)}$ contains the 2 classes with the smallest positive local training counts, with ties broken by class ID; if the client has 1 observed class, that class forms the set. With $m_R^{(k)}=\sum_{c\in\mathcal R^{(k)}}t_c$ corresponding test samples, the client-level accuracy and aggregate \textit{rare} metric are
  \begin{equation}\label{eq:eval-masked}
    R^{(k)}=\frac{\sum_{c\in\mathcal R^{(k)}}a_c^{(k)}}{m_R^{(k)}},\quad
    \textit{rare}=\frac{\sum_k m_R^{(k)}R^{(k)}}{\sum_k m_R^{(k)}}.
  \end{equation}
  Both aggregate metrics weight each client's accuracy by its corresponding class-subset test count.

  \item \textbf{Ability-Specific Personalization Floors.} The \textit{L@25}, \textit{U@25}, and \textit{R@25} metrics average the accuracies of the worst-performing 25\% of clients in local adaptation, unseen-class recognition, and rare-class recognition, respectively.
  Each metric summarizes one ability's lower tail using its own client ranking.

  For a metric with $m_v$ eligible clients, let $\mathbf v$ collect their accuracies, sorted as $v_{(1)}\le\cdots\le v_{(m_v)}$. The target tail size is $q=\lceil K/4\rceil$, based on all $K$ clients, with $q_v=\min(q,m_v)$ eligible clients included in the average. The common averaging rule is
  \begin{equation}\label{eq:eval-tail}
    \operatorname{Tail}_{25}(\mathbf v)=\frac1{q_v}\sum_{j=1}^{q_v}v_{(j)}.
  \end{equation}
  Writing the client accuracy vectors as $\mathbf L$, $\mathbf U$, and $\mathbf R$, respectively, gives $\textit{L@25}=\operatorname{Tail}_{25}(\mathbf L)$, $\textit{U@25}=\operatorname{Tail}_{25}(\mathbf U)$, and $\textit{R@25}=\operatorname{Tail}_{25}(\mathbf R)$. The target is 5 of 20 clients for CIFAR and 34 of 135 for FedRS.

  \item \textbf{Overall Personalization Floor.} The \textit{F@25} metric takes the minimum of each client's local, unseen-class, and rare-class accuracies before averaging the lowest 25\% of the resulting client scores.
  For clients with no locally unseen classes, the minimum is taken over local and rare-class accuracies. With $\mathbf F=(F^{(1)},\ldots,F^{(K)})^\top$,
  \begin{align}
    F^{(k)}&=\min\{L^{(k)},U^{(k)},R^{(k)}\},\label{eq:eval-client-floor}\\
    \textit{F@25}&=\operatorname{Tail}_{25}(\mathbf F).\label{eq:eval-f25}
  \end{align}
  Taking the within-client minimum retains each client's weakest ability; averaging the lowest 25\% then captures the clients with the weakest such scores. The resulting personalization floor summarizes weaknesses across both abilities and clients, even when the weakest ability differs from one client to another. Appendix~\ref{app:more} explains how this aggregation and $\min\{\textit{L@25},\textit{U@25},\textit{R@25}\}$ differ and complement each other.
\end{itemize}

\subsubsection{Implementation}
Appendix~\ref{app:updates} details FedGRC's implementation, and Table~\ref{tab:fedgrc-parameters} consolidates its parameter settings and configuration.
All experiments were conducted on an Ubuntu 22.04.5 LTS server with two Intel Xeon Platinum 8481C CPUs and four NVIDIA GeForce RTX 4090 GPUs, using Python 3.11.15 and PyTorch 2.0.1 with CUDA 11.8.

\begin{table*}[!t]
\centering
\caption{Comprehensive comparison on CIFAR-10~\cite{CIFAR_TR09} under Dirichlet label heterogeneity. All methods are trained using ResNet-18. Symbols: ($\dagger$) denotes the best, ($*$) the second-best, and ($\Delta$) the gain over the best baseline.}
\label{tab:main-cifar10}
\footnotesize
\renewcommand{\arraystretch}{1.05}
\def\tabbody{%
\toprule
\multirow{2}{*}{\footnotesize Method} & \multirow{2}{*}{\footnotesize Venue} & \multicolumn{5}{c}{CIFAR-10, Dir(0.3), 20 clients} &  & \multicolumn{5}{c}{CIFAR-10, Dir(0.1), 20 clients} \\
\cmidrule(lr){3-7}\cmidrule(lr){9-13}
& & local & global & unseen & rare & F@25 &  & local & global & unseen & rare & F@25 \\
\midrule
\multicolumn{13}{@{}l}{\grp{Complete FL methods (method-specific training and classifiers)}} \\
FedAvg~\cite{FedAvg_AISTATS17} & AISTATS'17 & \ms{60.77}{0.39} & \ms{59.05}{0.98} & \ms{58.16}{0.59} & \ms{54.45}{1.29} & \ms{40.40}{1.94} &  & \ms{44.84}{0.52} & \ms{45.01}{0.75} & \ms{42.72}{0.95} & \ms{50.32}{0.22} & \ms{21.84}{0.92} \\
FedRep~\cite{FedRep_ICML21} & ICML'21 & \ms{79.25}{0.25} & \ms{30.77}{0.09} & \ms{0.00}{0.00} & \ms{7.44}{0.20} & \ms{0.00}{0.00} &  & \ms{88.29}{0.07} & \ms{18.35}{0.06} & \ms{0.00}{0.00} & \ms{13.20}{0.53} & \ms{0.00}{0.00} \\
FedROD~\cite{FedROD_ICLR22} & ICLR'22 & \ms{81.51}{0.09} & \ms{59.94}{0.99} & \ms{69.62}{1.03} & \ms{52.60}{0.79} & \ms{41.81}{0.58} &  & \ms{89.26}{0.15} & \ms{48.35}{0.62} & \ms{50.47}{0.59} & \ms{43.39}{0.81} & \ms{22.73}{1.21} \\
FedLC~\cite{FedLC_ICML22} & ICML'22 & \ms{61.16}{0.28} & \ms{59.76}{0.53} & \ms{70.19}{0.58} & \ms{52.05}{0.25} & \ms{39.19}{0.32} &  & \ms{44.88}{0.11} & \ms{45.18}{0.46} & \ms{47.80}{0.39} & \ms{38.44}{0.87} & \ms{10.48}{1.09} \\
FedTHE~\cite{FedTHE_ICLR23} & ICLR'23 & \ms{74.96}{1.01} & \ms{55.20}{0.97} & \ms{61.31}{0.88} & \ms{48.97}{1.04} & \ms{38.81}{1.55} &  & \ms{82.89}{0.55} & \ms{43.12}{0.60} & \ms{43.34}{0.69} & \ms{42.73}{0.40} & \ms{27.68}{0.52} \\
FedETF~\cite{FedETF_ICCV23} & ICCV'23 & \ms{70.66}{0.38} & \ms{69.27}{0.86} & \ms{74.62}{1.40} & \ms{63.54}{0.51} & \ms{52.88}{0.24} &  & \ms{57.35}{0.14} & \ms{56.86}{0.43} & \ms{55.79}{0.65} & \ms{60.03}{0.16} & \ms{33.84}{4.61} \\
FedFA~\cite{FedFA_TMC24} & TMC'24 & \ms{39.05}{3.79} & \ms{38.27}{3.07} & \ms{35.93}{2.68} & \ms{35.61}{3.52} & \ms{14.47}{6.92} &  & \ms{18.40}{1.48} & \ms{18.45}{1.54} & \ms{17.55}{3.90} & \ms{21.74}{8.16} & \ms{0.35}{0.31} \\
pFedFDA~\cite{pFedFDA_NeurIPS24} & NeurIPS'24 & \ms{80.81}{0.28} & \ms{37.99}{0.21} & \ms{6.10}{1.40} & \ms{15.96}{0.09} & \ms{1.70}{0.55} &  & \ms{88.29}{0.18} & \ms{19.38}{0.27} & \ms{0.28}{0.05} & \ms{15.44}{0.70} & \ms{0.00}{0.01} \\
FedVLS~\cite{FedVLS_AAAI25} & AAAI'25 & \ms{60.76}{0.16} & \ms{59.28}{0.78} & \ms{69.32}{0.81} & \ms{51.39}{1.00} & \ms{37.43}{1.04} &  & \ms{46.32}{0.26} & \ms{46.25}{0.37} & \ms{48.51}{0.29} & \ms{40.56}{0.52} & \ms{14.10}{0.47} \\
FedDr+~\cite{FedDrPlus_TMLR25} & TMLR'25 & \ms{69.16}{1.09} & \ms{68.31}{1.34} & \ms{67.26}{2.31} & \ms{64.36}{1.31} & \ms{51.22}{2.05} &  & \ms{32.30}{1.96} & \ms{32.49}{2.10} & \ms{28.88}{1.52} & \ms{37.21}{4.41} & \ms{1.66}{1.56} \\
BTFL~\cite{BTFL_KDD25} & KDD'25 & \ms{80.30}{0.27} & \ms{53.91}{0.40} & \ms{30.00}{0.70} & \ms{46.87}{0.89} & \ms{18.50}{1.67} &  & \msb{89.45}{0.25} & \ms{29.64}{0.50} & \ms{7.45}{0.59} & \ms{38.92}{0.80} & \ms{0.84}{0.44} \\
HPFL~\cite{HPFL_ICLR25} & ICLR'25 & \mss{81.72}{0.41} & \ms{50.06}{0.35} & \ms{50.55}{0.24} & \ms{44.24}{0.79} & \ms{17.68}{1.64} &  & \mss{89.41}{0.16} & \ms{26.03}{0.46} & \ms{24.83}{0.78} & \ms{25.39}{0.93} & \ms{10.86}{2.31} \\
FedYoYo~\cite{FedYoYo_ICCV25} & ICCV'25 & \ms{76.96}{0.06} & \ms{76.64}{0.48} & \ms{77.17}{1.50} & \ms{74.52}{0.57} & \ms{68.18}{1.05} &  & \ms{59.15}{0.97} & \ms{58.80}{0.81} & \ms{56.21}{1.44} & \ms{66.12}{1.43} & \ms{28.23}{3.75} \\
FedLog~\cite{FedLog_TMLR26} & TMLR'26 & \ms{76.80}{0.21} & \ms{29.32}{0.06} & \ms{0.00}{0.00} & \ms{6.17}{0.29} & \ms{0.00}{0.00} &  & \ms{88.90}{0.35} & \ms{19.50}{0.21} & \ms{0.00}{0.00} & \ms{15.11}{0.25} & \ms{0.00}{0.00} \\
pFedGM~\cite{pFedGM_arXiv26} & arXiv'26 & \ms{81.51}{0.11} & \ms{36.16}{0.44} & \ms{0.14}{0.15} & \ms{15.36}{0.38} & \ms{0.00}{0.00} &  & \ms{87.37}{0.13} & \ms{17.33}{0.59} & \ms{0.00}{0.00} & \ms{11.74}{1.22} & \ms{0.00}{0.00} \\
\addlinespace[2pt]
\multicolumn{13}{@{}l}{\grp{Statistical classifiers assembled on a vanilla backbone (FedAvg~+)}} \\
+ CCVR~\cite{CCVR_NeurIPS21} & NeurIPS'21 & \ms{63.05}{0.02} & \ms{61.46}{1.05} & \ms{65.64}{1.12} & \ms{56.94}{1.24} & \ms{47.99}{1.90} &  & \ms{51.98}{0.52} & \ms{51.32}{0.63} & \ms{50.91}{0.60} & \ms{52.82}{0.84} & \ms{39.17}{0.88} \\
+ FedNCM~\cite{FedNCM_NeurIPS23} & NeurIPS'23 & \ms{63.17}{0.24} & \ms{61.82}{0.88} & \ms{66.69}{0.71} & \ms{57.15}{1.23} & \ms{48.51}{1.61} &  & \ms{52.03}{0.53} & \ms{51.72}{0.11} & \ms{51.23}{0.27} & \ms{53.39}{0.13} & \ms{40.60}{1.08} \\
+ Fed3R~\cite{Fed3R_ICML24} & ICML'24 & \ms{63.49}{0.24} & \ms{62.42}{1.24} & \ms{66.17}{0.92} & \ms{57.06}{1.67} & \ms{44.84}{2.27} &  & \ms{55.50}{0.30} & \ms{54.76}{0.23} & \ms{53.93}{0.27} & \ms{57.60}{0.11} & \ms{41.35}{0.57} \\
+ FedCGS~\cite{FedCGS_AAAI25} & AAAI'25 & \ms{63.15}{0.38} & \ms{61.77}{0.71} & \ms{65.78}{0.73} & \ms{57.98}{0.85} & \ms{50.80}{1.64} &  & \ms{55.08}{0.40} & \ms{54.36}{0.24} & \ms{54.50}{0.11} & \ms{54.33}{0.55} & \ms{44.89}{0.34} \\
+ FedCOF~\cite{FedCOF_NeurIPS25} & NeurIPS'25 & \ms{55.78}{0.69} & \ms{54.10}{0.44} & \ms{55.74}{0.42} & \ms{50.18}{0.23} & \ms{39.53}{0.22} &  & \ms{46.77}{0.60} & \ms{46.68}{0.36} & \ms{46.00}{0.25} & \ms{48.95}{1.13} & \ms{34.66}{1.32} \\
+ GH-LDA~\cite{GHLDA_ICLR26} & ICLR'26 & \ms{62.83}{0.39} & \ms{61.36}{0.92} & \ms{65.11}{1.00} & \ms{57.95}{0.96} & \ms{50.63}{1.83} &  & \ms{55.22}{0.27} & \ms{54.89}{0.40} & \ms{54.91}{0.46} & \ms{55.23}{0.39} & \ms{45.24}{0.41} \\
\addlinespace[2pt]
\multicolumn{13}{@{}l}{\grp{Statistical classifiers assembled on the SOTA backbone (FedYoYo~+)}} \\
+ CCVR~\cite{CCVR_NeurIPS21} & NeurIPS'21 & \ms{78.18}{0.19} & \ms{77.53}{0.06} & \ms{80.71}{0.28} & \ms{73.38}{0.17} & \ms{65.89}{0.93} &  & \ms{67.80}{0.10} & \ms{66.78}{0.48} & \ms{66.34}{0.59} & \ms{68.10}{0.30} & \ms{53.25}{1.32} \\
+ FedNCM~\cite{FedNCM_NeurIPS23} & NeurIPS'23 & \ms{77.69}{0.14} & \ms{77.30}{0.42} & \ms{80.69}{0.47} & \ms{72.94}{0.51} & \ms{64.72}{1.11} &  & \ms{63.56}{0.60} & \ms{61.93}{0.12} & \ms{61.12}{0.09} & \ms{64.74}{0.70} & \ms{50.43}{0.72} \\
+ Fed3R~\cite{Fed3R_ICML24} & ICML'24 & \ms{79.19}{0.23} & \ms{78.39}{0.20} & \mss{82.18}{0.33} & \ms{73.80}{0.10} & \ms{65.10}{0.38} &  & \ms{70.71}{0.17} & \mss{69.55}{0.37} & \ms{68.87}{0.46} & \mss{71.61}{0.47} & \ms{52.49}{2.01} \\
+ FedCGS~\cite{FedCGS_AAAI25} & AAAI'25 & \ms{79.08}{0.28} & \mss{78.59}{0.09} & \ms{81.53}{0.36} & \mss{75.34}{0.23} & \mss{69.72}{0.69} &  & \ms{70.40}{0.35} & \ms{69.03}{0.36} & \mss{68.91}{0.41} & \ms{69.33}{0.23} & \mss{61.03}{0.14} \\
+ FedCOF~\cite{FedCOF_NeurIPS25} & NeurIPS'25 & \ms{76.70}{0.10} & \ms{76.23}{0.19} & \ms{79.67}{0.29} & \ms{72.16}{0.42} & \ms{62.47}{0.92} &  & \ms{63.99}{0.47} & \ms{62.76}{0.16} & \ms{62.06}{0.19} & \ms{64.98}{0.32} & \ms{50.88}{1.22} \\
+ GH-LDA~\cite{GHLDA_ICLR26} & ICLR'26 & \ms{78.89}{0.23} & \ms{78.53}{0.08} & \ms{81.40}{0.34} & \ms{75.28}{0.10} & \ms{69.63}{0.53} &  & \ms{69.82}{0.41} & \ms{68.68}{0.35} & \ms{68.42}{0.40} & \ms{69.42}{0.51} & \ms{59.40}{0.58} \\
\midrule
\rowcolor{gray!12}
\textbf{FedGRC (Ours)} & -- & \msb{82.16}{0.27} & \msb{80.76}{0.31} & \msb{84.21}{0.46} & \msb{77.51}{0.41} & \msb{71.85}{0.66} &  & \ms{73.81}{0.46} & \msb{73.05}{0.33} & \msb{73.88}{0.40} & \msb{71.95}{0.41} & \msb{63.85}{0.13} \\
$\Delta$ vs.\ Best &  & \dltp{$+$0.44} & \dltp{$+$2.17} & \dltp{$+$2.03} & \dltp{$+$2.17} & \dltp{$+$2.13} &  & \dltm{$-$15.64} & \dltp{$+$3.50} & \dltp{$+$4.97} & \dltp{$+$0.34} & \dltp{$+$2.82} \\
\bottomrule}%
\setlength{\tabcolsep}{0pt}%
\sbox{\tabbox}{\begin{tabular}{ll rrrrr @{\hskip2\tabcolsep}c@{\hskip1\tabcolsep} rrrrr}\tabbody\end{tabular}}%
\ifdim\wd\tabbox<\textwidth
  \setlength{\tabcolsep}{\dimexpr(\textwidth-\wd\tabbox)/27\relax}%
\else
  \setlength{\tabcolsep}{1pt}%
\fi
\begin{tabular}{ll rrrrr @{\hskip2\tabcolsep}c@{\hskip1\tabcolsep} rrrrr}\tabbody\end{tabular}
\end{table*}

\begin{table*}[!t]
\centering
\caption{Comprehensive comparison on CIFAR-100~\cite{CIFAR_TR09} under Dirichlet label heterogeneity. All methods are trained using ResNet-18. Symbols: ($\dagger$) denotes the best, ($*$) the second-best, and ($\Delta$) the gain over the best baseline.}
\label{tab:main-cifar100}
\footnotesize
\renewcommand{\arraystretch}{1.05}
\def\tabbody{%
\toprule
\multirow{2}{*}{\footnotesize Method} & \multirow{2}{*}{\footnotesize Venue} & \multicolumn{5}{c}{CIFAR-100, Dir(0.3), 20 clients} &  & \multicolumn{5}{c}{CIFAR-100, Dir(0.1), 20 clients} \\
\cmidrule(lr){3-7}\cmidrule(lr){9-13}
& & local & global & unseen & rare & F@25 &  & local & global & unseen & rare & F@25 \\
\midrule
\multicolumn{13}{@{}l}{\grp{Complete FL methods (method-specific training and classifiers)}} \\
FedAvg~\cite{FedAvg_AISTATS17} & AISTATS'17 & \ms{35.37}{0.45} & \ms{34.80}{0.82} & \ms{33.33}{0.97} & \ms{36.57}{1.28} & \ms{23.00}{4.13} &  & \ms{32.85}{0.27} & \ms{32.73}{0.23} & \ms{32.39}{0.23} & \ms{28.85}{1.25} & \ms{17.11}{1.83} \\
FedRep~\cite{FedRep_ICML21} & ICML'21 & \ms{37.74}{0.29} & \ms{11.88}{0.17} & \ms{0.00}{0.00} & \ms{0.37}{0.15} & \ms{0.00}{0.00} &  & \ms{49.81}{0.31} & \ms{9.33}{0.08} & \ms{0.00}{0.00} & \ms{0.23}{0.11} & \ms{0.00}{0.00} \\
FedROD~\cite{FedROD_ICLR22} & ICLR'22 & \ms{48.66}{0.12} & \ms{34.72}{0.40} & \ms{34.99}{0.51} & \ms{35.10}{0.86} & \ms{23.67}{1.77} &  & \msb{56.97}{0.75} & \ms{32.93}{0.69} & \ms{33.39}{0.67} & \ms{28.82}{0.45} & \ms{18.22}{1.80} \\
FedLC~\cite{FedLC_ICML22} & ICML'22 & \ms{32.78}{0.17} & \ms{32.04}{0.30} & \ms{33.03}{0.10} & \ms{31.53}{1.15} & \ms{20.50}{1.53} &  & \ms{30.78}{0.25} & \ms{30.91}{0.16} & \ms{31.73}{0.18} & \ms{26.86}{0.36} & \ms{16.72}{0.81} \\
FedTHE~\cite{FedTHE_ICLR23} & ICLR'23 & \ms{37.03}{0.32} & \ms{28.76}{0.43} & \ms{28.64}{0.37} & \ms{29.44}{0.83} & \ms{18.94}{0.74} &  & \ms{44.69}{0.14} & \ms{27.29}{0.16} & \ms{27.56}{0.13} & \ms{24.67}{1.05} & \ms{14.78}{1.55} \\
FedETF~\cite{FedETF_ICCV23} & ICCV'23 & \ms{39.08}{0.08} & \ms{38.77}{0.14} & \ms{36.94}{0.63} & \ms{40.57}{1.11} & \ms{24.02}{3.05} &  & \ms{38.00}{0.63} & \ms{37.80}{0.45} & \ms{37.20}{0.45} & \ms{34.52}{0.36} & \ms{22.17}{1.43} \\
FedFA~\cite{FedFA_TMC24} & TMC'24 & \ms{7.53}{2.62} & \ms{7.66}{2.78} & \ms{6.29}{1.98} & \ms{12.01}{5.21} & \ms{2.41}{2.54} &  & \ms{3.20}{0.37} & \ms{3.06}{0.48} & \ms{2.93}{0.51} & \ms{5.39}{4.18} & \ms{0.00}{0.00} \\
pFedFDA~\cite{pFedFDA_NeurIPS24} & NeurIPS'24 & \ms{46.04}{0.54} & \ms{25.64}{0.28} & \ms{14.27}{0.22} & \ms{21.65}{1.15} & \ms{9.62}{1.88} &  & \ms{54.74}{0.55} & \ms{21.20}{0.69} & \ms{12.35}{0.82} & \ms{15.54}{0.80} & \ms{7.17}{0.44} \\
FedVLS~\cite{FedVLS_AAAI25} & AAAI'25 & \ms{31.20}{0.65} & \ms{30.79}{0.71} & \ms{33.41}{0.53} & \ms{29.51}{1.60} & \ms{19.06}{1.09} &  & \ms{29.87}{0.49} & \ms{29.38}{1.01} & \ms{30.70}{1.08} & \ms{24.86}{1.10} & \ms{14.76}{0.37} \\
FedDr+~\cite{FedDrPlus_TMLR25} & TMLR'25 & \ms{39.36}{0.41} & \ms{39.13}{0.17} & \ms{36.42}{0.18} & \ms{42.57}{0.63} & \ms{25.21}{0.80} &  & \ms{34.26}{0.93} & \ms{34.59}{0.71} & \ms{33.11}{0.80} & \ms{31.96}{0.95} & \ms{20.67}{0.73} \\
BTFL~\cite{BTFL_KDD25} & KDD'25 & \ms{40.74}{0.29} & \ms{32.50}{0.58} & \ms{27.26}{1.01} & \ms{34.08}{0.38} & \ms{20.85}{0.57} &  & \ms{50.17}{0.54} & \ms{29.64}{0.25} & \ms{25.13}{0.20} & \ms{29.17}{1.09} & \ms{17.91}{0.70} \\
HPFL~\cite{HPFL_ICLR25} & ICLR'25 & \ms{43.88}{0.12} & \ms{32.92}{0.33} & \ms{31.66}{0.60} & \ms{34.01}{0.97} & \ms{20.47}{3.20} &  & \ms{53.13}{0.82} & \ms{27.24}{0.32} & \ms{26.89}{0.32} & \ms{24.26}{0.78} & \ms{14.78}{1.13} \\
FedYoYo~\cite{FedYoYo_ICCV25} & ICCV'25 & \ms{35.71}{1.65} & \ms{34.59}{1.72} & \ms{35.90}{2.08} & \ms{31.75}{1.41} & \ms{13.17}{3.08} &  & \ms{39.97}{1.35} & \ms{39.25}{1.69} & \ms{39.01}{1.59} & \ms{40.90}{1.78} & \ms{28.39}{5.48} \\
FedLog~\cite{FedLog_TMLR26} & TMLR'26 & \ms{35.26}{0.10} & \ms{10.19}{0.03} & \ms{0.01}{0.00} & \ms{0.28}{0.02} & \ms{0.00}{0.00} &  & \ms{47.81}{0.17} & \ms{8.38}{0.09} & \ms{0.00}{0.00} & \ms{0.15}{0.05} & \ms{0.00}{0.00} \\
pFedGM~\cite{pFedGM_arXiv26} & arXiv'26 & \ms{45.53}{0.73} & \ms{25.40}{0.20} & \ms{6.14}{0.51} & \ms{22.57}{0.54} & \ms{4.21}{0.38} &  & \mss{54.75}{0.23} & \ms{14.83}{0.22} & \ms{1.46}{0.20} & \ms{13.31}{1.37} & \ms{0.70}{0.10} \\
\addlinespace[2pt]
\multicolumn{13}{@{}l}{\grp{Statistical classifiers assembled on a vanilla backbone (FedAvg~+)}} \\
+ CCVR~\cite{CCVR_NeurIPS21} & NeurIPS'21 & \ms{35.09}{0.40} & \ms{34.56}{0.74} & \ms{34.80}{0.76} & \ms{34.64}{1.37} & \ms{22.33}{2.97} &  & \ms{33.56}{0.41} & \ms{33.48}{0.47} & \ms{33.84}{0.42} & \ms{30.98}{1.05} & \ms{18.83}{0.73} \\
+ FedNCM~\cite{FedNCM_NeurIPS23} & NeurIPS'23 & \ms{36.14}{0.18} & \ms{35.47}{0.66} & \ms{35.52}{0.67} & \ms{36.11}{1.40} & \ms{24.27}{3.10} &  & \ms{34.24}{0.43} & \ms{34.22}{0.27} & \ms{34.53}{0.14} & \ms{30.27}{0.77} & \ms{18.78}{1.00} \\
+ Fed3R~\cite{Fed3R_ICML24} & ICML'24 & \ms{34.27}{0.46} & \ms{33.67}{0.39} & \ms{32.91}{0.35} & \ms{35.07}{1.83} & \ms{21.72}{3.38} &  & \ms{33.10}{0.31} & \ms{32.87}{0.14} & \ms{32.97}{0.15} & \ms{28.78}{0.52} & \ms{16.78}{1.39} \\
+ FedCGS~\cite{FedCGS_AAAI25} & AAAI'25 & \ms{33.54}{0.10} & \ms{33.05}{0.59} & \ms{33.67}{0.54} & \ms{33.21}{1.07} & \ms{21.67}{3.38} &  & \ms{32.60}{0.37} & \ms{32.24}{0.59} & \ms{32.78}{0.58} & \ms{28.85}{0.63} & \ms{16.50}{1.59} \\
+ FedCOF~\cite{FedCOF_NeurIPS25} & NeurIPS'25 & \ms{35.19}{0.24} & \ms{34.54}{0.55} & \ms{34.20}{0.84} & \ms{35.72}{0.94} & \ms{23.89}{3.73} &  & \ms{33.43}{0.69} & \ms{32.97}{0.19} & \ms{33.10}{0.26} & \ms{29.29}{1.19} & \ms{17.89}{0.75} \\
+ GH-LDA~\cite{GHLDA_ICLR26} & ICLR'26 & \ms{34.56}{0.06} & \ms{34.10}{0.64} & \ms{34.80}{0.53} & \ms{33.93}{0.98} & \ms{22.78}{3.26} &  & \ms{33.50}{0.20} & \ms{33.08}{0.29} & \ms{33.60}{0.32} & \ms{30.32}{0.59} & \ms{18.28}{0.25} \\
\addlinespace[2pt]
\multicolumn{13}{@{}l}{\grp{Statistical classifiers assembled on the SOTA backbone (FedYoYo~+)}} \\
+ CCVR~\cite{CCVR_NeurIPS21} & NeurIPS'21 & \ms{52.38}{0.20} & \mss{51.56}{0.33} & \mss{50.88}{0.55} & \ms{55.58}{1.13} & \mss{41.10}{1.36} &  & \ms{49.95}{0.19} & \ms{49.71}{0.31} & \ms{49.57}{0.27} & \ms{47.44}{0.35} & \ms{36.22}{0.67} \\
+ FedNCM~\cite{FedNCM_NeurIPS23} & NeurIPS'23 & \ms{49.36}{0.33} & \ms{48.51}{0.16} & \ms{47.11}{0.17} & \ms{53.73}{0.36} & \ms{36.43}{1.18} &  & \ms{45.61}{0.24} & \ms{44.95}{0.53} & \ms{44.50}{0.50} & \ms{42.48}{0.97} & \ms{28.11}{0.42} \\
+ Fed3R~\cite{Fed3R_ICML24} & ICML'24 & \mss{52.43}{0.07} & \ms{51.26}{0.10} & \ms{49.69}{0.31} & \mss{56.46}{1.43} & \ms{36.39}{1.08} &  & \ms{50.00}{0.03} & \mss{49.78}{0.39} & \ms{49.40}{0.36} & \ms{46.40}{0.36} & \ms{31.72}{1.08} \\
+ FedCGS~\cite{FedCGS_AAAI25} & AAAI'25 & \ms{51.77}{0.13} & \ms{50.51}{0.33} & \ms{50.52}{0.36} & \ms{52.29}{1.08} & \ms{40.92}{1.32} &  & \ms{49.91}{0.32} & \ms{49.72}{0.82} & \mss{49.68}{0.77} & \mss{48.36}{0.60} & \mss{38.49}{1.44} \\
+ FedCOF~\cite{FedCOF_NeurIPS25} & NeurIPS'25 & \ms{50.92}{0.30} & \ms{50.10}{0.26} & \ms{48.88}{0.30} & \ms{55.11}{0.48} & \ms{39.68}{0.41} &  & \ms{47.59}{0.30} & \ms{46.84}{0.63} & \ms{46.36}{0.59} & \ms{45.30}{0.43} & \ms{33.61}{0.98} \\
+ GH-LDA~\cite{GHLDA_ICLR26} & ICLR'26 & \ms{51.39}{0.15} & \ms{50.41}{0.32} & \ms{49.93}{0.36} & \ms{52.61}{0.85} & \ms{39.97}{0.93} &  & \ms{49.58}{0.31} & \ms{49.02}{0.64} & \ms{48.92}{0.63} & \ms{47.03}{0.33} & \ms{36.82}{1.11} \\
\midrule
\rowcolor{gray!12}
\textbf{FedGRC (Ours)} & -- & \msb{55.61}{0.32} & \msb{55.71}{0.65} & \msb{55.75}{0.78} & \msb{58.00}{0.52} & \msb{47.56}{1.83} &  & \ms{53.25}{0.35} & \msb{53.22}{0.26} & \msb{53.61}{0.32} & \msb{52.92}{0.71} & \msb{41.83}{1.53} \\
$\Delta$ vs.\ Best &  & \dltp{$+$3.18} & \dltp{$+$4.15} & \dltp{$+$4.87} & \dltp{$+$1.54} & \dltp{$+$6.46} &  & \dltm{$-$3.72} & \dltp{$+$3.44} & \dltp{$+$3.93} & \dltp{$+$4.56} & \dltp{$+$3.34} \\
\bottomrule}%
\setlength{\tabcolsep}{0pt}%
\sbox{\tabbox}{\begin{tabular}{ll rrrrr @{\hskip2\tabcolsep}c@{\hskip1\tabcolsep} rrrrr}\tabbody\end{tabular}}%
\ifdim\wd\tabbox<\textwidth
  \setlength{\tabcolsep}{\dimexpr(\textwidth-\wd\tabbox)/27\relax}%
\else
  \setlength{\tabcolsep}{1pt}%
\fi
\begin{tabular}{ll rrrrr @{\hskip2\tabcolsep}c@{\hskip1\tabcolsep} rrrrr}\tabbody\end{tabular}
\end{table*}

\begin{table*}[!t]
\centering
\caption{Comprehensive comparison on FedRS~\cite{FedRSBench_FedKDD25} with 8 remote-sensing sources, 15 classes, and 135 clients. All methods are trained using ResNet-18. Symbols: ($\dagger$) denotes the best, ($*$) the second-best, and ($\Delta$) the gain over the best baseline.}
\label{tab:main-fedrs}
\footnotesize
\renewcommand{\arraystretch}{1.05}
\def\tabbody{%
\toprule
\multirow{2}{*}{\footnotesize Method} & \multirow{2}{*}{\footnotesize Venue} & \multicolumn{4}{c}{Overall performance} & & \multicolumn{4}{c}{Worst 25\% of clients} \\
\cmidrule(lr){3-6}\cmidrule(lr){8-11}
& & local & global & unseen & rare & & L@25 & U@25 & R@25 & F@25 \\
\midrule
\multicolumn{11}{@{}l}{\grp{Complete FL methods (method-specific training and classifiers)}} \\
FedAvg~\cite{FedAvg_AISTATS17} & AISTATS'17 & \ms{59.97}{0.28} & \ms{61.27}{0.49} & \ms{59.15}{0.93} & \ms{64.32}{2.03} &  & \ms{41.49}{3.59} & \ms{40.56}{4.74} & \ms{47.87}{2.98} & \ms{37.10}{3.28} \\
FedRep~\cite{FedRep_ICML21} & ICML'21 & \ms{71.30}{0.46} & \ms{25.19}{0.45} & \ms{0.00}{0.00} & \ms{18.93}{0.44} &  & \ms{62.48}{0.42} & \ms{0.00}{0.00} & \ms{4.52}{0.07} & \ms{0.00}{0.00} \\
FedROD~\cite{FedROD_ICLR22} & ICLR'22 & \ms{71.62}{0.11} & \ms{62.26}{2.19} & \ms{63.13}{3.71} & \ms{62.61}{0.72} &  & \ms{61.63}{1.03} & \ms{50.80}{4.48} & \ms{49.37}{2.53} & \ms{46.63}{3.26} \\
FedLC~\cite{FedLC_ICML22} & ICML'22 & \ms{57.10}{0.78} & \ms{61.77}{0.22} & \ms{63.21}{0.25} & \ms{61.56}{1.55} &  & \ms{39.74}{0.90} & \ms{48.59}{0.86} & \ms{47.95}{2.04} & \ms{37.88}{0.49} \\
FedTHE~\cite{FedTHE_ICLR23} & ICLR'23 & \ms{67.03}{0.62} & \ms{58.21}{1.76} & \ms{57.20}{2.27} & \ms{60.50}{1.51} &  & \ms{54.53}{0.84} & \ms{40.76}{3.27} & \ms{45.58}{1.69} & \ms{39.26}{3.15} \\
FedETF~\cite{FedETF_ICCV23} & ICCV'23 & \ms{59.99}{1.43} & \ms{60.60}{1.89} & \ms{59.28}{1.17} & \ms{62.44}{3.69} &  & \ms{39.01}{7.60} & \ms{40.55}{8.12} & \ms{45.63}{3.68} & \ms{35.74}{6.79} \\
FedFA~\cite{FedFA_TMC24} & TMC'24 & \ms{52.15}{4.69} & \ms{52.30}{5.88} & \ms{47.27}{7.39} & \ms{57.89}{4.34} &  & \ms{33.00}{4.36} & \ms{34.78}{11.14} & \ms{42.30}{6.33} & \ms{29.74}{7.94} \\
pFedFDA~\cite{pFedFDA_NeurIPS24} & NeurIPS'24 & \ms{70.27}{1.14} & \ms{47.59}{1.03} & \ms{24.96}{2.55} & \ms{49.57}{0.38} &  & \ms{59.45}{1.37} & \ms{15.65}{3.32} & \ms{33.53}{1.10} & \ms{15.60}{3.29} \\
FedVLS~\cite{FedVLS_AAAI25} & AAAI'25 & \ms{53.81}{0.75} & \ms{57.89}{0.99} & \ms{57.63}{1.81} & \ms{58.59}{2.14} &  & \ms{33.63}{1.72} & \ms{45.67}{1.83} & \ms{45.25}{2.16} & \ms{32.60}{1.66} \\
FedDr+~\cite{FedDrPlus_TMLR25} & TMLR'25 & \ms{62.14}{0.67} & \ms{62.19}{1.27} & \ms{59.76}{0.90} & \ms{65.98}{2.19} &  & \ms{42.86}{2.23} & \ms{39.59}{5.03} & \ms{48.90}{2.28} & \ms{37.31}{3.86} \\
BTFL~\cite{BTFL_KDD25} & KDD'25 & \ms{67.40}{0.85} & \ms{57.99}{2.25} & \ms{47.98}{3.42} & \ms{63.75}{1.95} &  & \ms{54.82}{1.22} & \ms{29.89}{5.35} & \ms{47.33}{2.52} & \ms{29.74}{5.31} \\
HPFL~\cite{HPFL_ICLR25} & ICLR'25 & \ms{65.94}{0.68} & \ms{61.70}{0.15} & \ms{59.57}{0.85} & \ms{64.65}{1.72} &  & \ms{53.51}{1.05} & \ms{43.03}{3.31} & \ms{49.87}{1.33} & \ms{41.84}{2.38} \\
FedYoYo~\cite{FedYoYo_ICCV25} & ICCV'25 & \ms{69.36}{1.16} & \ms{65.01}{1.47} & \ms{59.38}{2.50} & \ms{71.55}{1.29} &  & \ms{54.12}{3.20} & \ms{50.17}{9.80} & \ms{55.26}{2.26} & \ms{45.66}{8.37} \\
FedLog~\cite{FedLog_TMLR26} & TMLR'26 & \ms{52.31}{1.27} & \ms{16.88}{0.28} & \ms{0.02}{0.03} & \ms{11.92}{0.67} &  & \ms{41.11}{1.11} & \ms{0.00}{0.00} & \ms{0.72}{0.26} & \ms{0.00}{0.00} \\
pFedGM~\cite{pFedGM_arXiv26} & arXiv'26 & \ms{67.34}{0.72} & \ms{42.42}{1.39} & \ms{10.09}{1.50} & \ms{53.25}{1.55} &  & \ms{54.71}{0.66} & \ms{4.88}{1.17} & \ms{38.28}{2.71} & \ms{4.88}{1.17} \\
\addlinespace[2pt]
\multicolumn{11}{@{}l}{\grp{Statistical classifiers assembled on a vanilla backbone (FedAvg~+)}} \\
+ CCVR~\cite{CCVR_NeurIPS21} & NeurIPS'21 & \ms{60.73}{0.36} & \ms{61.08}{0.81} & \ms{58.62}{1.29} & \ms{64.60}{0.53} &  & \ms{48.51}{0.66} & \ms{50.87}{3.89} & \ms{53.62}{1.31} & \ms{45.81}{1.75} \\
+ FedNCM~\cite{FedNCM_NeurIPS23} & NeurIPS'23 & \ms{60.49}{0.44} & \ms{60.87}{1.01} & \ms{59.07}{1.35} & \ms{64.05}{0.98} &  & \ms{46.15}{1.15} & \ms{50.82}{4.77} & \ms{51.32}{1.40} & \ms{43.57}{1.91} \\
+ Fed3R~\cite{Fed3R_ICML24} & ICML'24 & \ms{60.69}{0.21} & \ms{60.75}{0.36} & \ms{57.91}{0.73} & \ms{64.48}{0.68} &  & \ms{47.57}{0.33} & \ms{46.39}{1.18} & \ms{51.57}{1.99} & \ms{43.59}{0.31} \\
+ FedCGS~\cite{FedCGS_AAAI25} & AAAI'25 & \ms{60.10}{0.08} & \ms{57.51}{1.12} & \ms{52.66}{1.51} & \ms{62.90}{0.54} &  & \ms{47.48}{0.24} & \ms{46.33}{0.81} & \ms{52.22}{1.35} & \ms{43.94}{0.38} \\
+ FedCOF~\cite{FedCOF_NeurIPS25} & NeurIPS'25 & \ms{59.65}{0.08} & \ms{60.31}{1.12} & \ms{58.70}{1.31} & \ms{62.81}{0.67} &  & \ms{47.42}{0.78} & \ms{45.35}{2.27} & \ms{50.92}{1.13} & \ms{42.97}{1.42} \\
+ GH-LDA~\cite{GHLDA_ICLR26} & ICLR'26 & \ms{60.70}{0.14} & \ms{59.81}{0.10} & \ms{55.98}{0.27} & \ms{64.24}{0.72} &  & \ms{49.01}{0.50} & \ms{50.93}{2.24} & \ms{55.15}{1.67} & \ms{47.15}{0.68} \\
\addlinespace[2pt]
\multicolumn{11}{@{}l}{\grp{Statistical classifiers assembled on the SOTA backbone (FedYoYo~+)}} \\
+ CCVR~\cite{CCVR_NeurIPS21} & NeurIPS'21 & \ms{84.74}{0.28} & \ms{82.85}{1.02} & \ms{82.11}{1.07} & \ms{84.34}{1.24} &  & \ms{76.38}{0.50} & \mss{80.70}{0.67} & \ms{78.24}{0.86} & \ms{74.66}{0.02} \\
+ FedNCM~\cite{FedNCM_NeurIPS23} & NeurIPS'23 & \ms{81.71}{0.95} & \ms{81.48}{0.82} & \ms{81.65}{0.86} & \ms{82.44}{0.69} &  & \ms{68.11}{1.94} & \ms{79.20}{2.15} & \ms{74.49}{1.63} & \ms{67.19}{1.96} \\
+ Fed3R~\cite{Fed3R_ICML24} & ICML'24 & \mss{86.96}{0.04} & \mss{83.78}{0.68} & \mss{83.88}{0.67} & \ms{84.78}{1.20} &  & \ms{79.68}{1.15} & \ms{76.75}{3.34} & \ms{77.57}{1.44} & \ms{74.10}{1.84} \\
+ FedCGS~\cite{FedCGS_AAAI25} & AAAI'25 & \ms{86.56}{0.24} & \ms{83.75}{0.75} & \ms{82.05}{1.06} & \ms{85.90}{0.76} &  & \ms{79.72}{0.80} & \ms{79.86}{1.93} & \mss{80.82}{0.93} & \mss{77.04}{1.48} \\
+ FedCOF~\cite{FedCOF_NeurIPS25} & NeurIPS'25 & \ms{84.41}{0.26} & \ms{83.54}{0.86} & \ms{83.68}{0.95} & \ms{84.40}{1.22} &  & \ms{76.36}{0.30} & \ms{79.97}{4.09} & \ms{78.14}{1.42} & \ms{74.22}{1.16} \\
+ GH-LDA~\cite{GHLDA_ICLR26} & ICLR'26 & \ms{86.59}{0.02} & \mss{83.78}{0.72} & \ms{82.12}{0.97} & \mss{86.10}{0.82} &  & \mss{80.04}{0.53} & \ms{79.51}{2.66} & \ms{80.52}{0.82} & \ms{76.86}{1.33} \\
\midrule
\rowcolor{gray!12}
\textbf{FedGRC (Ours)} & -- & \msb{89.11}{0.05} & \msb{87.08}{0.49} & \msb{87.34}{0.45} & \msb{87.54}{0.24} &  & \msb{84.28}{0.81} & \msb{85.53}{0.79} & \msb{82.91}{0.66} & \msb{81.65}{1.54} \\
$\Delta$ vs.\ Best & & \dltp{$+$2.15} & \dltp{$+$3.30} & \dltp{$+$3.46} & \dltp{$+$1.44} & & \dltp{$+$4.24} & \dltp{$+$4.83} & \dltp{$+$2.09} & \dltp{$+$4.61} \\
\bottomrule}%
\setlength{\tabcolsep}{0pt}%
\sbox{\tabbox}{\begin{tabular}{ll rrrr @{\hskip2\tabcolsep}c@{\hskip0.4\tabcolsep} rrrr}\tabbody\end{tabular}}%
\ifdim\wd\tabbox<\textwidth
  \setlength{\tabcolsep}{\dimexpr(\textwidth-\wd\tabbox)/22\relax}%
\else
  \setlength{\tabcolsep}{1pt}%
\fi
\begin{tabular}{ll rrrr @{\hskip2\tabcolsep}c@{\hskip0.4\tabcolsep} rrrr}\tabbody\end{tabular}
\end{table*}

\subsection{Comprehensive Evaluation of Personalization, Generalization, and Recognition Floors}
\label{sec:exp-comprehensive}
Tables~\ref{tab:main-cifar10}--\ref{tab:main-fedrs} compare local adaptation, generalization across the full label space, and client recognition floors.
FedGRC achieves the highest mean \textit{global}, \textit{unseen}, \textit{rare}, and \textit{F@25} scores in all 5 heterogeneity settings. Its gains over the respective best baselines are 2.17\%--4.15\% in \textit{global} accuracy and 2.13\%--6.46\% in \textit{F@25}.
It also leads in \textit{local} accuracy on both CIFAR datasets under Dir(0.3) and on FedRS; the two Dir(0.1) settings expose a trade-off between local adaptation and broader recognition.

\textbf{Local Adaptation and Class Coverage.}
As heterogeneity increases from Dir(0.3) to Dir(0.1), pFedFDA~\cite{pFedFDA_NeurIPS24}, FedLog~\cite{FedLog_TMLR26}, and pFedGM~\cite{pFedGM_arXiv26} achieve higher \textit{local} but lower \textit{global} accuracy on both CIFAR datasets (Fig.~\ref{fig:motivation}).
The same pattern occurs for FedRep~\cite{FedRep_ICML21}, FedROD~\cite{FedROD_ICLR22}, FedTHE~\cite{FedTHE_ICLR23}, BTFL~\cite{BTFL_KDD25}, and HPFL~\cite{HPFL_ICLR25}.
For example, the \textit{local} accuracy of FedLog~\cite{FedLog_TMLR26} on CIFAR-10 rises from 76.80\% to 88.90\%, while its \textit{global} accuracy falls from 29.32\% to 19.50\%.
As labels become more concentrated, frequent classes receive more weight in local evaluation, so higher \textit{local} accuracy can coexist with weaker recognition across the full label space.
Indeed, under Dir(0.1), pFedFDA~\cite{pFedFDA_NeurIPS24}, FedLog~\cite{FedLog_TMLR26}, and pFedGM~\cite{pFedGM_arXiv26} attain 87.37\%--88.90\% \textit{local} accuracy on CIFAR-10, yet their \textit{F@25} scores round to 0.00\%.
Higher local scores therefore conceal severe weaknesses in individual clients' least reliable recognition abilities.

\textbf{Benefits of Statistical Classifiers.}
The gains from statistical classification depend on the underlying representation.
On CIFAR-100 Dir(0.3), the six statistical classifiers raise FedYoYo's~\cite{FedYoYo_ICCV25} \textit{global} accuracy from 34.59\% to 48.51\%--51.56\%, whereas the same classifiers on FedAvg~\cite{FedAvg_AISTATS17} yield 33.05\%--35.47\%, compared with 34.80\% for its original head.
This contrast underscores the importance of designing the representation and statistical classifier together.
FedGRC further exceeds the strongest baseline combinations in \textit{global} accuracy and \textit{F@25} across all five settings.

\textbf{Comprehensive Performance on FedRS.}
On FedRS (Table~\ref{tab:main-fedrs}), FedGRC reaches 89.11\% \textit{local} and 87.08\% \textit{global} accuracy, exceeding the respective best baselines by 2.15\% and 3.30\%.
It also improves \textit{unseen} and \textit{rare} accuracy by 3.46\% and 1.44\%, respectively.
These gains span local adaptation and all-class generalization, including recognition of classes with little or no local training support.
This combination is especially relevant to remote-sensing clients with limited class coverage and sensor and geographic heterogeneity.

\textbf{Trade-Offs under Extreme Label Heterogeneity.}
Under Dir(0.1), FedGRC trails the best \textit{local} baseline by 15.64\% on CIFAR-10 and 3.72\% on CIFAR-100, despite leading in \textit{global}, \textit{unseen}, \textit{rare}, and \textit{F@25}.
On CIFAR-10, BTFL~\cite{BTFL_KDD25} attains 89.45\% \textit{local} accuracy, compared with FedGRC's 73.81\%. BTFL's \textit{global} accuracy and \textit{F@25} are 29.64\% and 0.84\%, compared with 73.05\% and 63.85\% for FedGRC.
Under extreme label heterogeneity, these broader recognition and client-floor gains come at a measurable local-accuracy cost.

\textbf{Recognition Floors across Clients.}
On FedRS, Table~\ref{tab:main-fedrs} shows gains of 4.24\%, 4.83\%, and 2.09\% in \textit{L@25}, \textit{U@25}, and \textit{R@25}, respectively, over the best baseline for each metric.
These gains extend the aggregate improvements to the weakest client groups in each recognition ability.
FedGRC also raises \textit{F@25} by 4.61\% over its best baseline.
This gain captures a higher overall personalization floor even when the weakest ability differs across clients.
The CIFAR results in Appendix~\ref{app:more}, Tables~\ref{tab:pfp-cifar10} and~\ref{tab:pfp-cifar100}, likewise show the highest \textit{U@25}, \textit{R@25}, and \textit{F@25} in all 4 CIFAR heterogeneity settings.
Under Dir(0.1), however, \textit{L@25} trails the best baseline by 13.57\% on CIFAR-10 and 3.59\% on CIFAR-100, showing that the local-adaptation trade-off also affects the lower tail.
We next examine how personalization changes this balance.

\subsection{Recognition Before and After Personalization}
\label{sec:exp-personalization}
Table~\ref{tab:personalization} compares predictors immediately before and after terminal personalization for 3 methods with explicit local adaptation stages.
FedETF~\cite{FedETF_ICCV23} adapts both the representation and classifier, whereas pFedGM~\cite{pFedGM_arXiv26} freezes its shared generator and adapts the statistical classifier.
FedGRC keeps the shared CGRL backbone, MMS layer scales, and covariance metric fixed and replaces shared class means with SASP means, isolating the effect of its final adaptation stage.

\begin{table}[!htbp]
\centering
\fontsize{8}{9.5}\selectfont
\caption{Recognition before and after local personalization. Before: shared predictors. After: FedETF~\cite{FedETF_ICCV23}, full-model adaptation; pFedGM~\cite{pFedGM_arXiv26}, classifier adaptation with the generator unchanged; FedGRC, SASP with CGRL+MMS unchanged. The signed values in \textit{parentheses} indicate changes (After minus Before).}
\label{tab:personalization}
\renewcommand{\arraystretch}{1.0}
\setlength{\tabcolsep}{1.6pt}
\begin{tabular}{ll*{5}{>{\centering\arraybackslash}p{30pt}}}
\toprule
Method & State & local & global & unseen & rare & F@25 \\
\midrule
\multicolumn{7}{l}{\grp{(a) CIFAR-100, Dir(0.3)}} \\
 & Before & 39.08 & 38.77 & 36.94 & 40.57 & 24.02 \\
 &  & 46.20 & 25.58 & 14.04 & 18.72 & 1.33 \\
\multicolumn{1}{c}{\multirow[c]{-3}{*}{FedETF~\cite{FedETF_ICCV23}}} & \multirow{-2}{*}{After} & (\gp{+7.12}) & (\gm{-13.19}) & (\gm{-22.90}) & (\gm{-21.85}) & (\gm{-22.69}) \\
\addlinespace[1pt]
 & Before & 32.13 & 32.39 & 30.97 & 34.03 & 21.75 \\
 &  & 45.53 & 25.40 & 6.14 & 22.57 & 4.21 \\
\multicolumn{1}{c}{\multirow[c]{-3}{*}{pFedGM~\cite{pFedGM_arXiv26}}} & \multirow{-2}{*}{After} & (\gp{+13.40}) & (\gm{-6.99}) & (\gm{-24.83}) & (\gm{-11.46}) & (\gm{-17.54}) \\
\addlinespace[1pt]
\rowcolor{gray!12}
 & Before & 55.53 & 55.98 & 55.67 & 57.83 & 46.93 \\
\rowcolor{gray!12}
 &  & 55.61 & 55.71 & 55.75 & 58.00 & 47.56 \\
\rowcolor{gray!12}
\multicolumn{1}{c}{\multirow[c]{-3}{*}{\makecell[c]{\textbf{FedGRC}\\\textbf{(Ours)}}}} & \multirow{-2}{*}{After} & (\gp{+0.08}) & (\gm{-0.27}) & (\gp{+0.08}) & (\gp{+0.17}) & (\gp{+0.63}) \\
\addlinespace[1pt]
\midrule
\multicolumn{7}{l}{\grp{(b) FedRS}} \\
 & Before & 59.99 & 60.60 & 59.28 & 62.44 & 35.74 \\
 &  & 75.74 & 35.35 & 16.95 & 29.81 & 3.96 \\
\multicolumn{1}{c}{\multirow[c]{-3}{*}{FedETF~\cite{FedETF_ICCV23}}} & \multirow{-2}{*}{After} & (\gp{+15.75}) & (\gm{-25.25}) & (\gm{-42.33}) & (\gm{-32.63}) & (\gm{-31.78}) \\
\addlinespace[1pt]
 & Before & 57.75 & 56.38 & 52.51 & 60.51 & 33.29 \\
 &  & 67.34 & 42.42 & 10.09 & 53.25 & 4.88 \\
\multicolumn{1}{c}{\multirow[c]{-3}{*}{pFedGM~\cite{pFedGM_arXiv26}}} & \multirow{-2}{*}{After} & (\gp{+9.59}) & (\gm{-13.96}) & (\gm{-42.42}) & (\gm{-7.26}) & (\gm{-28.41}) \\
\addlinespace[1pt]
\rowcolor{gray!12}
 & Before & 87.70 & 86.89 & 86.87 & 87.11 & 80.04 \\
\rowcolor{gray!12}
 &  & 89.11 & 87.08 & 87.34 & 87.54 & 81.65 \\
\rowcolor{gray!12}
\multicolumn{1}{c}{\multirow[c]{-3}{*}{\makecell[c]{\textbf{FedGRC}\\\textbf{(Ours)}}}} & \multirow{-2}{*}{After} & (\gp{+1.41}) & (\gp{+0.19}) & (\gp{+0.47}) & (\gp{+0.43}) & (\gp{+1.61}) \\
\addlinespace[1pt]
\bottomrule
\end{tabular}
\end{table}

Both reference methods improve \textit{local} accuracy while substantially reducing generalization and client recognition floors.
On CIFAR-100 Dir(0.3), their \textit{local} gains of 7.12\%--13.40\% accompany losses of 6.99\%--13.19\% in \textit{global} accuracy and 17.54\%--22.69\% in \textit{F@25}.
The same pattern occurs on FedRS: FedETF~\cite{FedETF_ICCV23} gains 15.75\% locally but loses 25.25\% in \textit{global} accuracy and 31.78\% in \textit{F@25}.
These changes show that fitting the local distribution can substantially weaken recognition beyond it, even when adaptation is confined to the classifier.
They motivate controlling the influence of local evidence during personalization.

SASP improves local adaptation and recognition floors while largely retaining the shared classifier's all-class accuracy.
On CIFAR-100 Dir(0.3), it raises \textit{local} accuracy from 55.53\% to 55.61\% and \textit{F@25} from 46.93\% to 47.56\%, with a 0.27\% decrease in \textit{global} accuracy.
On FedRS, all five metrics improve, including gains of 1.41\% in \textit{local} accuracy and 1.61\% in \textit{F@25}.
These results support class-mean adaptation within a shared representation and metric as a way to preserve broad recognition during personalization.
Next, ablations isolate the contributions of representation learning, statistical readout, and support-adaptive mean estimation.

\subsection{Assessing Contributions of the Proposed Designs}
\label{sec:exp-designs}

\subsubsection{Ablation of FedGRC: Component Contributions to Generalization and Personalized Recognition}
\label{sec:exp-construction}

Table~\ref{tab:abl-construction} traces the construction of FedGRC on CIFAR-100 Dir(0.3) and FedRS by adding CGRL, MMS, and SASP sequentially. The initial system is trained with supervised cross-entropy averaged over paired weakly and strongly augmented views of each example and evaluated with the aggregated linear head.
CGRL adds geometric guidance during representation learning while retaining the linear head; MMS replaces that head with a shared multilayer statistical readout; SASP then adapts the class means.
Reported changes are relative to the initial system, while comparisons between adjacent stages reveal the effect of each addition.

The complete configuration improves all 5 metrics over the initial system in both settings, demonstrating gains in local adaptation, broader recognition, and client floors from the combined design.

With the aggregated linear head retained, CGRL improves all five metrics on FedRS.
On CIFAR-100 Dir(0.3), adding CGRL while retaining the aggregated linear head reduces \textit{F@25} from 37.77\% to 29.17\%; replacing that head with MMS raises it to 46.93\%.
MMS also improves all five metrics on FedRS, including a 15.32\% gain in \textit{F@25} over the CGRL-only stage.
These comparisons support matching the statistical readout to geometric representation learning.

\begin{table*}[!t]
  \centering
  \begin{minipage}[t]{\textwidth}
    \vspace{0pt}
    \captionsetup{type=table}
    \centering
\caption{FedGRC module construction on CIFAR-100 Dir(0.3) and FedRS: CGRL, MMS, and SASP are added sequentially. (\yes) denotes inclusion. The signed values in \textit{parentheses} indicate changes relative to the initial system.}
\label{tab:abl-construction}
\footnotesize
\settowidth{\incw}{($-$00.00)}
\renewcommand{\arraystretch}{1.05}
\def\tabbody{%
\toprule
SASP & MMS & CGRL & local & global & unseen & rare & F@25 \\
\midrule
\multicolumn{8}{l}{\grp{(a) CIFAR-100~\cite{CIFAR_TR09}, Dir(0.3), 20 clients}} \\
\no & \no & \no & \abn{47.19} & \abn{45.67} & \abn{42.24} & \abn{50.46} & \abn{37.77} \\
\no & \no & \yes & \ab{46.46}{\gm{$-$0.73}} & \ab{46.53}{\gp{$+$0.86}} & \ab{42.01}{\gm{$-$0.23}} & \ab{45.83}{\gm{$-$4.63}} & \ab{29.17}{\gm{$-$8.60}} \\
\no & \yes & \yes & \ab{55.53}{\gp{$+$8.34}} & \ab{55.98}{\gp{$+$10.31}} & \ab{55.67}{\gp{$+$13.43}} & \ab{57.83}{\gp{$+$7.37}} & \ab{46.93}{\gp{$+$9.16}} \\
\rowcolor{gray!12}
\textbf{\yes} & \textbf{\yes} & \textbf{\yes} & \ab{\textbf{55.61}}{\gp{$+$8.42}} & \ab{\textbf{55.71}}{\gp{$+$10.04}} & \ab{\textbf{55.75}}{\gp{$+$13.51}} & \ab{\textbf{58.00}}{\gp{$+$7.54}} & \ab{\textbf{47.56}}{\gp{$+$9.79}} \\
\addlinespace[3pt]
\multicolumn{8}{l}{\grp{(b) FedRS~\cite{FedRSBench_FedKDD25}, 8 sources, 15 classes, 135 clients (10 per round)}} \\
\no & \no & \no & \abn{78.09} & \abn{78.70} & \abn{76.45} & \abn{81.84} & \abn{64.09} \\
\no & \no & \yes & \ab{81.07}{\gp{$+$2.98}} & \ab{81.03}{\gp{$+$2.33}} & \ab{79.01}{\gp{$+$2.56}} & \ab{84.03}{\gp{$+$2.19}} & \ab{64.72}{\gp{$+$0.63}} \\
\no & \yes & \yes & \ab{87.70}{\gp{$+$9.61}} & \ab{86.89}{\gp{$+$8.19}} & \ab{86.87}{\gp{$+$10.42}} & \ab{87.11}{\gp{$+$5.27}} & \ab{80.04}{\gp{$+$15.95}} \\
\rowcolor{gray!12}
\textbf{\yes} & \textbf{\yes} & \textbf{\yes} & \ab{\textbf{89.11}}{\gp{$+$11.02}} & \ab{\textbf{87.08}}{\gp{$+$8.38}} & \ab{\textbf{87.34}}{\gp{$+$10.89}} & \ab{\textbf{87.54}}{\gp{$+$5.70}} & \ab{\textbf{81.65}}{\gp{$+$17.56}} \\
\bottomrule}%
\setlength{\tabcolsep}{0pt}%
\sbox{\tabbox}{\begin{tabular}{ccc rrrrr}\tabbody\end{tabular}}%
\ifdim\wd\tabbox<\textwidth
  \setlength{\tabcolsep}{\dimexpr(\textwidth-\wd\tabbox)/16\relax}%
\else
  \setlength{\tabcolsep}{1pt}%
\fi
\ifdim\tabcolsep>8pt \setlength{\tabcolsep}{8pt}\fi
\begin{tabular}{ccc rrrrr}\tabbody\end{tabular}

  \end{minipage}

  \par\vspace{12pt}

  \noindent
  \begin{minipage}[t]{\dimexpr(\textwidth-\columnsep)/2\relax}
    \vspace{0pt}
    \captionsetup{type=table}
    \centering
\caption{Ablation of directional and radial guidance in CGRL. (\yes) denotes inclusion. All configurations retain cross-entropy training and the same MMS+SASP readout. The signed values in \textit{parentheses} indicate changes relative to training without either guidance component.}
\label{tab:preview-cgrl}
\footnotesize
\renewcommand{\arraystretch}{1.15}
\def\splitcell#1#2{\ensuremath{\vcenter{\hbox{\shortstack[r]{#1\\[-0.5pt](#2)}}}}}
\def\tabbody{%
\toprule
Radial & Direction & \multicolumn{1}{c}{local} & \multicolumn{1}{c}{global} & \multicolumn{1}{c}{unseen} & \multicolumn{1}{c}{rare} & \multicolumn{1}{c}{F@25} \\
\midrule
\multicolumn{7}{l}{\grp{(a) CIFAR-100, Dir(0.3), 20 clients}} \\
\addlinespace[2pt]
\no & \no & 49.60 & 49.56 & 50.29 & 51.42 & 40.19 \\
\addlinespace[3pt]
\yes & \no & \splitcell{49.97}{\gp{$+$0.37}} & \splitcell{50.46}{\gp{$+$0.90}} & \splitcell{51.59}{\gp{$+$1.30}} & \splitcell{51.21}{\gm{$-$0.21}} & \splitcell{40.17}{\gm{$-$0.02}} \\
\addlinespace[3pt]
\no & \yes & \splitcell{54.99}{\gp{$+$5.39}} & \splitcell{55.06}{\gp{$+$5.50}} & \splitcell{55.42}{\gp{$+$5.13}} & \splitcell{57.13}{\gp{$+$5.71}} & \splitcell{46.44}{\gp{$+$6.25}} \\
\addlinespace[3pt]
\rowcolor{gray!12}
\textbf{\yes} & \textbf{\yes} & \splitcell{\textbf{55.61}}{\gp{$+$6.01}} & \splitcell{\textbf{55.71}}{\gp{$+$6.15}} & \splitcell{\textbf{55.75}}{\gp{$+$5.46}} & \splitcell{\textbf{58.00}}{\gp{$+$6.58}} & \splitcell{\textbf{47.56}}{\gp{$+$7.37}} \\
\midrule
\multicolumn{7}{l}{\grp{(b) FedRS, 135 clients}} \\
\addlinespace[2pt]
\no & \no & 89.21 & 82.40 & 81.46 & 86.37 & 76.53 \\
\addlinespace[3pt]
\yes & \no & \splitcell{89.03}{\gm{$-$0.18}} & \splitcell{83.53}{\gp{$+$1.13}} & \splitcell{84.34}{\gp{$+$2.88}} & \splitcell{85.93}{\gm{$-$0.44}} & \splitcell{78.68}{\gp{$+$2.15}} \\
\addlinespace[3pt]
\no & \yes & \splitcell{89.23}{\gp{$+$0.02}} & \splitcell{86.70}{\gp{$+$4.30}} & \splitcell{87.29}{\gp{$+$5.83}} & \splitcell{87.71}{\gp{$+$1.34}} & \splitcell{79.42}{\gp{$+$2.89}} \\
\addlinespace[3pt]
\rowcolor{gray!12}
\textbf{\yes} & \textbf{\yes} & \splitcell{\textbf{89.11}}{\gm{$-$0.10}} & \splitcell{\textbf{87.08}}{\gp{$+$4.68}} & \splitcell{\textbf{87.34}}{\gp{$+$5.88}} & \splitcell{\textbf{87.54}}{\gp{$+$1.17}} & \splitcell{\textbf{81.65}}{\gp{$+$5.12}} \\
\bottomrule}%
\setlength{\tabcolsep}{0pt}
\sbox{\tabbox}{\begin{tabular}{cc rrrrr}\tabbody\end{tabular}}
\setlength{\tabcolsep}{\dimexpr(\linewidth-\wd\tabbox)/14\relax}
\begin{tabular}{cc rrrrr}\tabbody\end{tabular}

  \end{minipage}\hfill%
  \begin{minipage}[t]{\dimexpr(\textwidth-\columnsep)/2\relax}
    \vspace{0pt}
    \captionsetup{type=table}
    \centering
\caption{Ablation of the statistical metric and multilayer features in MMS. (\yes) denotes inclusion. All configurations use the same CGRL backbone and SASP procedure. The signed values in \textit{parentheses} indicate changes relative to Euclidean classification using only the last-layer features.}
\label{tab:preview-mms}
\footnotesize
\renewcommand{\arraystretch}{1.15}
\def\splitcell#1#2{\ensuremath{\vcenter{\hbox{\shortstack[r]{#1\\[-0.5pt](#2)}}}}}
\def\tabbody{%
\toprule
\makecell{Multi-\\layer} & \makecell{Stat.\\metric} & \multicolumn{1}{c}{local} & \multicolumn{1}{c}{global} & \multicolumn{1}{c}{unseen} & \multicolumn{1}{c}{rare} & \multicolumn{1}{c}{F@25} \\
\midrule
\multicolumn{7}{l}{\grp{(a) CIFAR-100, Dir(0.3), 20 clients}} \\
\addlinespace[2pt]
\no & \no & 55.00 & 54.25 & 53.24 & 57.38 & 43.42 \\
\addlinespace[3pt]
\no & \yes & \splitcell{55.62}{\gp{$+$0.62}} & \splitcell{55.31}{\gp{$+$1.06}} & \splitcell{55.07}{\gp{$+$1.83}} & \splitcell{57.25}{\gm{$-$0.13}} & \splitcell{45.94}{\gp{$+$2.52}} \\
\addlinespace[3pt]
\rowcolor{gray!12}
\textbf{\yes} & \textbf{\yes} & \splitcell{\textbf{55.61}}{\gp{$+$0.61}} & \splitcell{\textbf{55.71}}{\gp{$+$1.46}} & \splitcell{\textbf{55.75}}{\gp{$+$2.51}} & \splitcell{\textbf{58.00}}{\gp{$+$0.62}} & \splitcell{\textbf{47.56}}{\gp{$+$4.14}} \\
\midrule
\multicolumn{7}{l}{\grp{(b) FedRS, 135 clients}} \\
\addlinespace[2pt]
\no & \no & 85.12 & 84.29 & 84.87 & 85.26 & 73.18 \\
\addlinespace[3pt]
\no & \yes & \splitcell{87.99}{\gp{$+$2.87}} & \splitcell{85.99}{\gp{$+$1.70}} & \splitcell{86.39}{\gp{$+$1.52}} & \splitcell{86.41}{\gp{$+$1.15}} & \splitcell{79.43}{\gp{$+$6.25}} \\
\addlinespace[3pt]
\rowcolor{gray!12}
\textbf{\yes} & \textbf{\yes} & \splitcell{\textbf{89.11}}{\gp{$+$3.99}} & \splitcell{\textbf{87.08}}{\gp{$+$2.79}} & \splitcell{\textbf{87.34}}{\gp{$+$2.47}} & \splitcell{\textbf{87.54}}{\gp{$+$2.28}} & \splitcell{\textbf{81.65}}{\gp{$+$8.47}} \\
\bottomrule}%
\setlength{\tabcolsep}{0pt}
\sbox{\tabbox}{\begin{tabular}{cc rrrrr}\tabbody\end{tabular}}
\setlength{\tabcolsep}{\dimexpr(\linewidth-\wd\tabbox)/14\relax}
\begin{tabular}{cc rrrrr}\tabbody\end{tabular}

  \end{minipage}
\end{table*}
\begin{table*}[t]
\centering
\caption{Ablation of local means, count-based shrinkage, and local class priors in SASP. (\yes) denotes inclusion. All configurations use the same CGRL backbone and MMS readout. Locally absent classes retain shared means; without local priors, class priors are equal. The signed values in \textit{parentheses} indicate changes relative to the shared-mean classifier.}
\label{tab:preview-sasp}
\fontsize{7}{8.6}\selectfont
\renewcommand{\arraystretch}{1.15}
\settowidth{\incw}{($-$00.00)}
\def\tabbody{%
\toprule
\makecell{Count\\shrinkage} & \makecell{Local\\prior} & \makecell{Local\\mean} & \multicolumn{1}{c}{\fontsize{8.5}{10.2}\selectfont local} & \multicolumn{1}{c}{\fontsize{8.5}{10.2}\selectfont global} & \multicolumn{1}{c}{\fontsize{8.5}{10.2}\selectfont unseen} & \multicolumn{1}{c}{\fontsize{8.5}{10.2}\selectfont rare} & \multicolumn{1}{c}{\fontsize{8.5}{10.2}\selectfont L@25} & \multicolumn{1}{c}{\fontsize{8.5}{10.2}\selectfont U@25} & \multicolumn{1}{c}{\fontsize{8.5}{10.2}\selectfont R@25} & \multicolumn{1}{c}{\fontsize{8.5}{10.2}\selectfont F@25} \\
\midrule
\multicolumn{11}{l}{\grp{(a) CIFAR-100, Dir(0.3), 20 clients}} \\
\addlinespace[2pt]
\no & \no & \no & \abn{55.53} & \abn{55.98} & \abn{55.67} & \abn{57.83} & \abn{52.75} & \abn{52.71} & \abn{47.67} & \abn{46.93} \\
\addlinespace[1.2pt]
\no & \no & \yes & \ab{57.30}{\gp{$+$1.77}} & \ab{45.17}{\gm{$-$10.81}} & \ab{61.88}{\gp{$+$6.21}} & \ab{1.17}{\gm{$-$56.66}} & \ab{54.47}{\gp{$+$1.72}} & \ab{59.38}{\gp{$+$6.67}} & \ab{0.00}{\gm{$-$47.67}} & \ab{0.00}{\gm{$-$46.93}} \\
\addlinespace[1.2pt]
\no & \yes & \yes & \ab{58.91}{\gp{$+$3.38}} & \ab{45.29}{\gm{$-$10.69}} & \ab{60.66}{\gp{$+$4.99}} & \ab{1.21}{\gm{$-$56.62}} & \ab{56.19}{\gp{$+$3.44}} & \ab{58.24}{\gp{$+$5.53}} & \ab{0.00}{\gm{$-$47.67}} & \ab{0.00}{\gm{$-$46.93}} \\
\addlinespace[1.2pt]
\yes & \yes & \yes & \ab{57.21}{\gp{$+$1.68}} & \ab{55.63}{\gm{$-$0.35}} & \ab{54.42}{\gm{$-$1.25}} & \ab{57.04}{\gm{$-$0.79}} & \ab{54.30}{\gp{$+$1.55}} & \ab{51.31}{\gm{$-$1.40}} & \ab{45.67}{\gm{$-$2.00}} & \ab{44.33}{\gm{$-$2.60}} \\
\addlinespace[1.2pt]
\rowcolor{gray!12}
\textbf{\yes} & \textbf{\no} & \textbf{\yes} & \ab{\textbf{55.61}}{\gp{$+$0.08}} & \ab{\textbf{55.71}}{\gm{$-$0.27}} & \ab{\textbf{55.75}}{\gp{$+$0.08}} & \ab{\textbf{58.00}}{\gp{$+$0.17}} & \ab{\textbf{52.68}}{\gm{$-$0.07}} & \ab{\textbf{52.90}}{\gp{$+$0.19}} & \ab{\textbf{48.33}}{\gp{$+$0.66}} & \ab{\textbf{47.56}}{\gp{$+$0.63}} \\
\midrule
\multicolumn{11}{l}{\grp{(b) FedRS, 135 clients}} \\
\addlinespace[2pt]
\no & \no & \no & \abn{87.70} & \abn{86.89} & \abn{86.87} & \abn{87.11} & \abn{81.73} & \abn{84.35} & \abn{82.59} & \abn{80.04} \\
\addlinespace[1.2pt]
\no & \no & \yes & \ab{89.98}{\gp{$+$2.28}} & \ab{80.71}{\gm{$-$6.18}} & \ab{89.69}{\gp{$+$2.82}} & \ab{44.44}{\gm{$-$42.67}} & \ab{84.33}{\gp{$+$2.60}} & \ab{88.36}{\gp{$+$4.01}} & \ab{9.53}{\gm{$-$73.06}} & \ab{9.53}{\gm{$-$70.51}} \\
\addlinespace[1.2pt]
\no & \yes & \yes & \ab{90.43}{\gp{$+$2.73}} & \ab{80.57}{\gm{$-$6.32}} & \ab{89.08}{\gp{$+$2.21}} & \ab{43.98}{\gm{$-$43.13}} & \ab{85.29}{\gp{$+$3.56}} & \ab{87.32}{\gp{$+$2.97}} & \ab{8.93}{\gm{$-$73.66}} & \ab{8.93}{\gm{$-$71.11}} \\
\addlinespace[1.2pt]
\yes & \yes & \yes & \ab{89.60}{\gp{$+$1.90}} & \ab{86.89}{\gp{$+$0.00}} & \ab{86.50}{\gm{$-$0.37}} & \ab{87.36}{\gp{$+$0.25}} & \ab{84.76}{\gp{$+$3.03}} & \ab{83.44}{\gm{$-$0.91}} & \ab{82.74}{\gp{$+$0.15}} & \ab{81.41}{\gp{$+$1.37}} \\
\addlinespace[1.2pt]
\rowcolor{gray!12}
\textbf{\yes} & \textbf{\no} & \textbf{\yes} & \ab{\textbf{89.11}}{\gp{$+$1.41}} & \ab{\textbf{87.08}}{\gp{$+$0.19}} & \ab{\textbf{87.34}}{\gp{$+$0.47}} & \ab{\textbf{87.54}}{\gp{$+$0.43}} & \ab{\textbf{84.28}}{\gp{$+$2.55}} & \ab{\textbf{85.53}}{\gp{$+$1.18}} & \ab{\textbf{82.91}}{\gp{$+$0.32}} & \ab{\textbf{81.65}}{\gp{$+$1.61}} \\
\bottomrule}%
\setlength{\tabcolsep}{0pt}
\sbox{\tabbox}{\begin{tabular}{ccc rrrrrrrr}\tabbody\end{tabular}}
\setlength{\tabcolsep}{\dimexpr(\textwidth-\wd\tabbox)/22\relax}
\begin{tabular}{ccc rrrrrrrr}\tabbody\end{tabular}
\end{table*}

SASP further improves local adaptation and recognition floors through support-dependent mean updates, completing the transition from shared representations to personalized classification.
We next examine each module's design with the other configurations unchanged.

\subsubsection{Ablation of CGRL: Directional Alignment and Magnitude Regulation for Statistical Classification}
Table~\ref{tab:preview-cgrl} separates the effects of directional and radial guidance, with cross-entropy training and the MMS+SASP readout retained in every configuration.
Within each dataset block, the rows correspond to (i) neither guidance component; (ii) radial guidance only; (iii) directional guidance only; and (iv) both components.
Directional guidance combines supervised contrastive learning with shared class directions, while radial guidance regulates feature magnitudes.
Used separately, directional guidance produces larger \textit{global} accuracy gains than radial guidance on both datasets.
On CIFAR-100 Dir(0.3), radial guidance alone improves \textit{global} accuracy by 0.90\% while leaving \textit{F@25} nearly unchanged.
Adding radial guidance improves all five metrics over directional guidance alone, including \textit{rare} accuracy from 57.13\% to 58.00\% and \textit{F@25} from 46.44\% to 47.56\%.
These gains support complementary magnitude regulation and directional discrimination.

On FedRS, adding radial guidance to directional guidance further improves \textit{global} accuracy by 0.38\% and \textit{F@25} by 2.23\%, while reducing \textit{local} and \textit{rare} accuracy by 0.12\% and 0.17\%.
The magnitude penalty regularizes within-class log-norm dispersion and the mean log norm's deviation from the shared reference (Lemma~\ref{lem:radial}).
It therefore complements directional guidance by regulating the magnitude structure retained in the class means and covariance used for statistical classification.

Using the same full MMS+SASP readout, CGRL improves \textit{global} accuracy and \textit{F@25} over cross-entropy training by 6.15\% and 7.37\% on CIFAR-100 Dir(0.3), and by 4.68\% and 5.12\% on FedRS.
These gains under the same MMS+SASP configuration demonstrate the contribution of geometric consensus to the deployed statistical classifier.

\subsubsection{Ablation of MMS: Contributions of Statistical Geometry and Multilayer Feature Integration}
Table~\ref{tab:preview-mms} evaluates the statistical metric and multilayer integration with the same CGRL backbone and SASP procedure.
Within each setting, the rows progress from (i) Euclidean classification with raw last-layer features to (ii) statistical classification with last-layer features and (iii) statistical classification with multilayer features, the complete MMS design.
The statistical metric increases \textit{F@25} by 2.52\% on CIFAR-100 Dir(0.3) and 6.25\% on FedRS.
Multilayer integration adds a further 1.62\% and 2.22\%, respectively, while also improving \textit{global} and \textit{unseen} accuracy in both settings.
These gains reflect the complementary roles of the two MMS components: the covariance-based metric accounts for within-class variability, while multilayer integration supplies information from several representation depths.
On FedRS, the two components together raise \textit{global} accuracy by 2.79\% and \textit{F@25} by 8.47\%, improving all-class recognition alongside client recognition floors.

\subsubsection{Ablation of SASP: Contributions of Support-Adaptive Mean Estimation and Local Class Priors}
The configurations in Table~\ref{tab:preview-sasp} vary mean adaptation and class priors while using the same CGRL backbone, MMS layer scales, and shared covariance metric.
Within each dataset block, the rows correspond to (i) shared means; (ii) direct local means; (iii) direct local means with a local class prior, as used in pFedFDA~\cite{pFedFDA_NeurIPS24}; (iv) count-based shrinkage of local means with a local class prior; and (v) count-based shrinkage of local means with equal class priors, constituting SASP.
Configurations (i) and (ii) also use equal class priors; all configurations retain shared means for classes absent from local training.

\begin{table*}[!t]
\centering
\caption{Backbone--classifier cross-comparison against strong baseline combinations across all heterogeneity settings. $B_0$: FedYoYo~\cite{FedYoYo_ICCV25}; $B_1$: the CGRL-trained FedGRC backbone. $H_0$: CCVR~\cite{CCVR_NeurIPS21} on CIFAR-100 Dir(0.3), and FedCGS~\cite{FedCGS_AAAI25} elsewhere; $H_1$: MMS+SASP. The signed values in \textit{parentheses} indicate gains over $B_0H_0$; bold entries identify full FedGRC. For either metric, $M_{ij}=M(B_i,H_j)$, and the shaded row reports the excess gain $I=M_{11}-M_{10}-M_{01}+M_{00}$.}
\label{tab:preview-cross}
\fontsize{7}{8.6}\selectfont
\renewcommand{\arraystretch}{1.25}
\settowidth{\incw}{($-$0.00)}
\def\crossnum#1#2{\makebox[1.3em][r]{#1}.\makebox[1.1em][l]{#2}}
\def\crosscell#1#2{#1\hspace{0.15em}\makebox[\incw][r]{(#2)}}
\def\crossbase#1{#1\hspace{0.15em}\makebox[\incw][r]{}}
\def\tabbody{%
\toprule
\multirow{2}{*}[-3pt]{\makecell[c]{Training\\backbone}} & \multirow{2}{*}[-3pt]{\makecell[c]{Classifier\\head}} & \multicolumn{2}{c}{\fontsize{8}{9.6}\selectfont CIFAR-10, Dir(0.3)} & \multicolumn{2}{c}{\fontsize{8}{9.6}\selectfont CIFAR-10, Dir(0.1)} & \multicolumn{2}{c}{\fontsize{8}{9.6}\selectfont CIFAR-100, Dir(0.3)} & \multicolumn{2}{c}{\fontsize{8}{9.6}\selectfont CIFAR-100, Dir(0.1)} & \multicolumn{2}{c}{\fontsize{8}{9.6}\selectfont FedRS} \\
\cmidrule(lr){3-4}\cmidrule(lr){5-6}\cmidrule(lr){7-8}\cmidrule(lr){9-10}\cmidrule(lr){11-12}
 & & \multicolumn{1}{c}{\fontsize{8}{9.6}\selectfont global} & \multicolumn{1}{c}{\fontsize{8}{9.6}\selectfont F@25} & \multicolumn{1}{c}{\fontsize{8}{9.6}\selectfont global} & \multicolumn{1}{c}{\fontsize{8}{9.6}\selectfont F@25} & \multicolumn{1}{c}{\fontsize{8}{9.6}\selectfont global} & \multicolumn{1}{c}{\fontsize{8}{9.6}\selectfont F@25} & \multicolumn{1}{c}{\fontsize{8}{9.6}\selectfont global} & \multicolumn{1}{c}{\fontsize{8}{9.6}\selectfont F@25} & \multicolumn{1}{c}{\fontsize{8}{9.6}\selectfont global} & \multicolumn{1}{c}{\fontsize{8}{9.6}\selectfont F@25} \\
\midrule
$B_0$ & $H_0$ & \crossbase{\crossnum{78}{59}} & \crossbase{\crossnum{69}{72}} & \crossbase{\crossnum{69}{03}} & \crossbase{\crossnum{61}{03}} & \crossbase{\crossnum{51}{56}} & \crossbase{\crossnum{41}{10}} & \crossbase{\crossnum{49}{72}} & \crossbase{\crossnum{38}{49}} & \crossbase{\crossnum{83}{75}} & \crossbase{\crossnum{77}{04}} \\
\addlinespace[3pt]
$B_0$ & $H_1$ & \crosscell{\crossnum{78}{55}}{\gm{$-$0.04}} & \crosscell{\crossnum{69}{32}}{\gm{$-$0.40}} & \crosscell{\crossnum{69}{84}}{\gp{$+$0.81}} & \crosscell{\crossnum{60}{38}}{\gm{$-$0.65}} & \crosscell{\crossnum{51}{05}}{\gm{$-$0.51}} & \crosscell{\crossnum{40}{98}}{\gm{$-$0.12}} & \crosscell{\crossnum{50}{00}}{\gp{$+$0.28}} & \crosscell{\crossnum{37}{61}}{\gm{$-$0.88}} & \crosscell{\crossnum{84}{65}}{\gp{$+$0.90}} & \crosscell{\crossnum{79}{57}}{\gp{$+$2.53}} \\
\addlinespace[5pt]
$B_1$ & $H_0$ & \crosscell{\crossnum{81}{17}}{\gp{$+$2.58}} & \crosscell{\crossnum{71}{39}}{\gp{$+$1.67}} & \crosscell{\crossnum{72}{09}}{\gp{$+$3.06}} & \crosscell{\crossnum{63}{47}}{\gp{$+$2.44}} & \crosscell{\crossnum{54}{45}}{\gp{$+$2.89}} & \crosscell{\crossnum{45}{07}}{\gp{$+$3.97}} & \crosscell{\crossnum{52}{08}}{\gp{$+$2.36}} & \crosscell{\crossnum{41}{07}}{\gp{$+$2.58}} & \crosscell{\crossnum{85}{69}}{\gp{$+$1.94}} & \crosscell{\crossnum{76}{88}}{\gm{$-$0.16}} \\
\addlinespace[3pt]
$\boldsymbol{B_1}$ & $\boldsymbol{H_1}$ & \crosscell{\crossnum{\textbf{80}}{\textbf{76}}}{\gp{$+$2.17}} & \crosscell{\crossnum{\textbf{71}}{\textbf{85}}}{\gp{$+$2.13}} & \crosscell{\crossnum{\textbf{73}}{\textbf{05}}}{\gp{$+$4.02}} & \crosscell{\crossnum{\textbf{63}}{\textbf{85}}}{\gp{$+$2.82}} & \crosscell{\crossnum{\textbf{55}}{\textbf{71}}}{\gp{$+$4.15}} & \crosscell{\crossnum{\textbf{47}}{\textbf{56}}}{\gp{$+$6.46}} & \crosscell{\crossnum{\textbf{53}}{\textbf{22}}}{\gp{$+$3.50}} & \crosscell{\crossnum{\textbf{41}}{\textbf{83}}}{\gp{$+$3.34}} & \crosscell{\crossnum{\textbf{87}}{\textbf{08}}}{\gp{$+$3.33}} & \crosscell{\crossnum{\textbf{81}}{\textbf{65}}}{\gp{$+$4.61}} \\
\addlinespace[3pt]
\midrule
\rowcolor{gray!12}
\multicolumn{2}{c}{Excess gain $I$} & \crossbase{\gm{\crossnum{-0}{37}}} & \crossbase{\gp{\crossnum{+0}{86}}} & \crossbase{\gp{\crossnum{+0}{15}}} & \crossbase{\gp{\crossnum{+1}{03}}} & \crossbase{\gp{\crossnum{+1}{77}}} & \crossbase{\gp{\crossnum{+2}{61}}} & \crossbase{\gp{\crossnum{+0}{86}}} & \crossbase{\gp{\crossnum{+1}{64}}} & \crossbase{\gp{\crossnum{+0}{49}}} & \crossbase{\gp{\crossnum{+2}{24}}} \\
\bottomrule}
\setlength{\tabcolsep}{0pt}
\sbox{\tabbox}{\begin{tabular}{cc rrrrrrrrrr}\tabbody\end{tabular}}
\setlength{\tabcolsep}{\dimexpr(\textwidth-\wd\tabbox)/24\relax}
\begin{tabular}{cc rrrrrrrrrr}\tabbody\end{tabular}
\end{table*}

\textbf{Shrinkage for Rare-Class Recognition.}
Using local means directly improves \textit{local} accuracy but severely degrades recognition of sparsely observed classes.
On CIFAR-100 Dir(0.3), replacing shared means with local estimates without shrinkage or a local prior reduces \textit{rare} accuracy from 57.83\% to 1.17\% and \textit{F@25} from 46.93\% to 0.00\%.
On FedRS, \textit{rare} accuracy falls from 87.11\% to 44.44\% and \textit{F@25} from 80.04\% to 9.53\%.
The corresponding \textit{R@25} scores of 0.00\% and 9.53\% identify rare-class recognition as the principal lower-tail weakness in these configurations.
Retaining shared means for absent classes leaves this rare-class weakness unresolved: sparsely observed classes are still represented by means estimated from limited local samples.

With equal priors, SASP restores \textit{rare} accuracy to 58.00\% on CIFAR-100 Dir(0.3) and 87.54\% on FedRS, with corresponding \textit{F@25} scores of 47.56\% and 81.65\%.
Proposition~\ref{prop:shrinkage} explains the role of support in the regularized mean-fitting objective: the shared reference has greater relative influence when local sample support is small.
Shrinkage therefore restrains updates based on sparse observations, addressing the rare-class degradation seen with direct local estimation.

\textbf{Mean Adaptation and Class Priors.}
With shrinkage retained, replacing local class priors with equal priors on CIFAR-100 Dir(0.3) improves \textit{unseen} and \textit{rare} accuracy by 1.33\% and 0.96\%, respectively, and raises \textit{F@25} from 44.33\% to 47.56\%, at a cost of 1.60\% in \textit{local} accuracy.
On FedRS, equal priors improve \textit{global}, \textit{unseen}, and \textit{rare} accuracy and raise \textit{F@25} by 0.24\%, while reducing \textit{local} accuracy by 0.49\%.
The improvements extend to unseen-class recognition among weaker clients, with \textit{U@25} increasing by 1.59\% and 2.09\% on CIFAR-100 and FedRS, respectively.
These results support separating the roles of class counts and priors: counts govern mean adaptation, while equal priors support broader recognition and stronger client floors.

In both settings, direct local means achieve higher \textit{unseen} accuracy, whereas SASP yields substantially higher \textit{rare} accuracy and \textit{F@25}.
Unseen-class predictions can change because updates to observed-class means alter competing scores, even when absent-class means remain shared.
For the same fitted statistics, shrinkage reduces the perturbation budget in Corollary~\ref{cor:preservation}.

\subsection{Performance Gains over Strong Baseline Combinations and Backbone--Classifier Synergy}
\label{sec:exp-cross}
Table~\ref{tab:preview-cross} uses the baseline combinations with the highest \textit{F@25} in Tables~\ref{tab:main-cifar10}--\ref{tab:main-fedrs} as references: the FedYoYo~\cite{FedYoYo_ICCV25} backbone ($B_0$) paired with CCVR~\cite{CCVR_NeurIPS21} on CIFAR-100 Dir(0.3) or FedCGS~\cite{FedCGS_AAAI25} elsewhere ($H_0$).
The corresponding FedGRC components are the CGRL-trained backbone ($B_1$) and MMS+SASP classifier ($H_1$).
Replacing the backbone, the classifier, or both separates their individual effects from the gains achieved through their combination.

The complete combination achieves higher \textit{F@25} than either single-component replacement in all five settings.
On CIFAR-100 Dir(0.3), replacing FedYoYo with the CGRL backbone raises \textit{F@25} by 3.97\% under the same CCVR classifier.
MMS+SASP then adds 2.49\% on the CGRL backbone, compared with a 0.12\% decrease on FedYoYo.
On FedRS, replacing only the backbone changes \textit{F@25} by $-0.16$\%, whereas replacing both components yields a 4.61\% gain.
These comparisons demonstrate that the gains from each component depend on its pairing with the other.

The excess gain $I$ measures how much the joint improvement exceeds the sum of the gains from the two individual replacements.
It is positive for \textit{F@25} in all five settings, ranging from 0.86\% to 2.61\%, and for \textit{global} accuracy in four settings.
Together, these results support jointly designing geometric representation learning and statistical personalization to strengthen client recognition floors.

\section{Conclusion}\label{sec:conclusion}
We revisited personalized federated learning from the perspective of global recognition and personalization floors, showing that high local accuracy alone can conceal weaknesses across classes and clients.
Our evaluation protocol exposes these weaknesses through complementary assessments of local adaptation, recognition across federation-known classes, and the weakest assessed abilities of poorly performing clients.
Guided by this perspective, FedGRC couples geometric representation consensus with multilayer statistical classification and support-adaptive class-mean shrinkage.
Extensive comparisons and ablations demonstrate improved global recognition and personalization floors across heterogeneous settings, with simultaneous gains in local adaptation on FedRS.
These findings support the joint design of representations and personalized classifiers for practical deployment.

Under extreme label heterogeneity, broader recognition gains accompany substantially lower local accuracy than the strongest local baselines, highlighting the remaining trade-off between generalization and local adaptation.
FedGRC retains a shared representation and covariance metric during personalization; its shrinkage weights are determined by local class sample counts and do not explicitly account for the magnitude of client-specific domain shifts.
We will explore personalization jointly guided by class support and distribution shift, alongside representation and metric learning robust to domain shift, to improve local adaptation while preserving broad recognition and personalization floors.

\putbib[references]
\clearpage
\end{bibunit}

\setcounter{page}{1}
\bibliographynamespace{appendix}
\begin{bibunit}[IEEEtran]
\applybibliographycontrol
\twocolumn[\hypertarget{fedgrc.appendix.start}{}\begin{center}
{\sffamily\LARGE\bfseries Supplementary Material\par}
\vspace{7pt}
{\sffamily\large Is High Local Accuracy Enough? Geometric Consensus with\\
Support-Adaptive Statistical Personalization for Federated Learning\par}
\end{center}\vspace{10pt}]
\markboth{}{}
\raggedbottom
\setcounter{equation}{0}
\renewcommand{\theequation}{S\arabic{equation}}
\renewcommand{\theHequation}{supp.\arabic{equation}}
\appendices
\section{\texorpdfstring{\raggedright Theoretical Analysis and Implementation Details}{Theoretical Analysis and Implementation Details}}\label{app:proof}
This supplement uses the notation introduced in Section~\ref{sec:prelim}.
Equations and theoretical results from the main article retain their original numbering; equations in this supplement use the prefix S.

\subsection{Proof of Lemma~\ref{lem:radial}}\label{app:radial}
Fix the class, layer, and round and use the notation of Lemma~\ref{lem:radial}.
Assume $p_\chi=\Pr(\chi=1)>0$ and a finite conditional second moment. With $\bar\rho_\chi=\mathbb E[\rho\mid\chi=1]$,
\begin{equation}
\begin{aligned}
 \mathbb E[\chi(\rho-a)^2]
 &=p_\chi\mathbb E[(\rho-a)^2\mid\chi=1]\\
 &=p_\chi\mathbb E[(\rho-\bar\rho_\chi+\bar\rho_\chi-a)^2\mid\chi=1]\\
 &=p_\chi\Var(\rho\mid\chi=1)+p_\chi(\bar\rho_\chi-a)^2.
\end{aligned}
\end{equation}
The cross term vanishes by conditional centering.
This proves Eq.~\eqref{eq:radial-decomposition}.
If $p_\chi=0$, the contribution is zero.
The same identity holds under the empirical distribution of a fixed batch.

\subsection{Complementary Feature Geometry}\label{app:gradients}
Fix one local step and treat all broadcast references as constants.
For a nonzero feature $\mathbf h$, put $r=\|\mathbf h\|_2$ and $\mathbf u=\mathbf h/r$.
The differential of normalization is
\begin{equation}
 \mathrm d\mathbf u
 =\frac{\mathrm d\mathbf h}{r}
 -\frac{\mathbf h\mathbf h^\top\mathrm d\mathbf h}{r^3}
 =\frac{\mathbf I-\mathbf u\mathbf u^\top}{r}\,\mathrm d\mathbf h.
\end{equation}
For $q_c(\mathbf u)=\exp(\mathbf u^\top\mathbf p_c/\tau)/\sum_r\exp(\mathbf u^\top\mathbf p_r/\tau)$ and $\ell_{\mathrm p}=-\log q_y(\mathbf u)$, differentiating the log-sum-exp gives
\begin{equation}
 \nabla_{\mathbf u}\ell_{\mathrm p}
 =\frac1\tau\Bigl(\sum_cq_c(\mathbf u)\mathbf p_c-\mathbf p_y\Bigr).
\end{equation}
The chain rule yields Eq.~\eqref{eq:prototype-gradient}.
The full directional loss depends on raw features only through their normalized versions.
Applying the chain rule to every occurrence of $\mathbf u_i$, including contrastive comparisons, gives
\begin{equation}
 \mathbf g_i=\frac{\mathbf I-\mathbf u_i\mathbf u_i^\top}{\|\mathbf h_i\|_2}
 \nabla_{\mathbf u_i}\mathcal L_{\mathrm{dir}}.
\end{equation}
Since $\mathbf h_i^\top(\mathbf I-\mathbf u_i\mathbf u_i^\top)=\mathbf0^\top$, the directional gradient is tangent to the sphere at $\mathbf h_i$.
For an isolated feature-space step of size $s$, this also gives
\begin{equation}
 \|\mathbf h_i-s\mathbf g_i\|_2^2
 =\|\mathbf h_i\|_2^2+s^2\|\mathbf g_i\|_2^2.
\end{equation}
Thus, the change in squared magnitude is second order in the step size.

For the radial term, $\rho=\tfrac12\log(\|\mathbf h\|_2^2+\varepsilon^2)$, hence
\begin{equation}
 \nabla_{\mathbf h}\rho
 =\frac{\mathbf h}{\|\mathbf h\|_2^2+\varepsilon^2},\;
 \nabla_{\mathbf h}[\chi(\rho-a)^2]
 =2\chi(\rho-a)\nabla_{\mathbf h}\rho.
\end{equation}
The indicator $\chi$ and reference $a$ are held fixed, matching the implementation's stop-gradient convention.
This yields Eq.~\eqref{eq:radial-gradient}.
The radial gradient parallels $\mathbf h$ and is orthogonal to the same final-layer feature's directional gradient.

The implementation clamps the normalization denominator from below; the unit-vector identities apply above this floor.
When the radial gate is active, the configured safe-norm threshold places the feature well above the directional normalization floor.
Through the backbone Jacobian $\mathbf J_i=\partial\mathbf h_i/\partial\vect\phi$, the corresponding parameter-gradient inner product becomes $\mathbf g_i^\top\mathbf J_i\mathbf J_i^\top\mathbf v_i$; the orthogonality above describes the feature-space gradients.

To relate this geometry to raw-feature scatter, let $\mathbf h_i=R_i\mathbf u_i$, $\mathbf h_j=R_j\mathbf u_j$, with $R_i,R_j>0$ and unit directions.
Expanding the squared distance gives
\begin{equation}
\begin{aligned}
 \|\mathbf h_i-\mathbf h_j\|_2^2
 &=R_i^2+R_j^2-2R_iR_j\mathbf u_i^\top\mathbf u_j\\
 &=(R_i-R_j)^2+2R_iR_j(1-\mathbf u_i^\top\mathbf u_j)\\
 &=(R_i-R_j)^2+R_iR_j\|\mathbf u_i-\mathbf u_j\|_2^2.
\end{aligned}
\end{equation}
This proves Eq.~\eqref{eq:angular-radial-distance}.

To derive Eq.~\eqref{eq:pairwise-scatter}, let $\{\mathbf h_i\}_{i=1}^{N_c}$ denote the class-$c$ features from all clients at layer $\ell$ under a common backbone.
Their shared class mean and covariance are
\begin{equation}
\begin{aligned}
 \bar{\mathbf h}_{\ell,c}&=\frac1{N_c}\sum_{i=1}^{N_c}\mathbf h_i,\\
 \widehat{\vect\Sigma}_{\ell,c}
 &=\frac1{N_c}\sum_{i=1}^{N_c}
 (\mathbf h_i-\bar{\mathbf h}_{\ell,c})
 (\mathbf h_i-\bar{\mathbf h}_{\ell,c})^\top.
\end{aligned}
\end{equation}
Using $\sum_i\mathbf h_i=N_c\bar{\mathbf h}_{\ell,c}$, expansion about the mean gives
\begin{equation}
\begin{aligned}
 \tr(\widehat{\vect\Sigma}_{\ell,c})
 &=\frac1{N_c}\sum_i\|\mathbf h_i-\bar{\mathbf h}_{\ell,c}\|_2^2\\
 &=\frac1{N_c}\sum_i\|\mathbf h_i\|_2^2
   -\|\bar{\mathbf h}_{\ell,c}\|_2^2.
\end{aligned}
\end{equation}
The pairwise sum has the same centered form:
\begin{equation}
\begin{aligned}
 \sum_{i,j}\|\mathbf h_i-\mathbf h_j\|_2^2
 &=2N_c\sum_i\|\mathbf h_i\|_2^2
   -2\Big\|\sum_i\mathbf h_i\Big\|_2^2\\
 &=2N_c^2\tr(\widehat{\vect\Sigma}_{\ell,c}).
\end{aligned}
\end{equation}
Dividing by $2N_c^2$ and substituting the angular--radial distance decomposition yields
\begin{equation}
\begin{aligned}
 \tr(\widehat{\vect\Sigma}_{\ell,c})
 ={}&\frac1{2N_c^2}\sum_{i,j}(R_i-R_j)^2\\
 &+\frac1{2N_c^2}\sum_{i,j}R_iR_j\|\mathbf u_i-\mathbf u_j\|_2^2.
\end{aligned}
\end{equation}
This proves Eq.~\eqref{eq:pairwise-scatter}.
The sum includes pairs from different clients and therefore captures both local dispersion and cross-client differences within the same class.
The radial contribution can remain nonzero even when all directions agree; the angular contribution also depends on the feature magnitudes through $R_iR_j$.

For MMS, all features above are recomputed under the frozen backbone $\vect\phi_T$.
For a fixed layer scale $s_\ell>0$, the scaled distance is $s_\ell^{-2}\|\mathbf h_i-\mathbf h_j\|_2^2$, and each nonzero raw feature's log norm shifts by $-\log s_\ell$.
Norm ratios, within-class raw log-norm variance, and differences between class means of raw log norms are therefore unchanged.
The layer-$\ell$ residual block for a sample of class $c$ is
\begin{equation}
 \mathbf e_{\ell i}^{(k)}
 =\frac{\mathbf h_\ell(x_i^{(k)};\vect\phi_T)-\bar{\mathbf h}_{\ell,c}}{s_\ell}.
\end{equation}
Taking the trace in Eq.~\eqref{eq:mms-covariance} and summing the squared norms of these blocks yields
\begin{equation}
\begin{aligned}
 \tr(\widehat{\vect\Sigma}_{\mathrm w})
 &=\frac1N\sum_{k,i}\sum_{\ell\in\Lambda}\|\mathbf e_{\ell i}^{(k)}\|_2^2\\
 &=\sum_{c=1}^{C}\frac{N_c}{N}\sum_{\ell\in\Lambda}
   \frac{\tr(\widehat{\vect\Sigma}_{\ell,c})}{s_\ell^2}.
\end{aligned}
\end{equation}
This proves Eq.~\eqref{eq:mms-scatter-trace}, linking class- and layer-specific scatter to the pooled within-class covariance used by MMS.
The trace characterizes total within-class variation; the full covariance additionally retains directional structure and cross-layer correlations.
These identities provide a geometric interpretation of CGRL: the magnitude loss regulates the log norms of views with $\chi_{\ell i}=1$, while directional supervision also promotes between-class separation.
CGRL optimizes these directional and magnitude objectives, which differ from covariance-trace minimization.

\subsection{Proof of Proposition~\ref{prop:scatter}}\label{app:scatter}
For $n_c^{(k)}>0$, define the local class covariance as
\begin{equation}
 \widehat{\vect\Sigma}_c^{(k)}
 =\frac1{n_c^{(k)}}\sum_{i\in\mathcal I_c^{(k)}}
 \big(\mathbf z(x_i^{(k)})-\vect\mu_c^{(k)}\big)
 \big(\mathbf z(x_i^{(k)})-\vect\mu_c^{(k)}\big)^\top.
\end{equation}
Let $\mathbf r_i^{(k)}=\mathbf z(x_i^{(k)})-\vect\mu_c^{(k)}$ within this class and $\vect\Delta_c^{(k)}=\vect\mu_c^{(k)}-\vect\mu_c$.
Since $\sum_{i\in\mathcal I_c^{(k)}}\mathbf r_i^{(k)}=\mathbf0$, the cross terms vanish in
\begin{equation}
\begin{aligned}
 &\sum_{i\in\mathcal I_c^{(k)}}
 (\mathbf z(x_i^{(k)})-\vect\mu_c)(\mathbf z(x_i^{(k)})-\vect\mu_c)^\top\\
 &\quad=\sum_{i\in\mathcal I_c^{(k)}}
 (\mathbf r_i^{(k)}+\vect\Delta_c^{(k)})
 (\mathbf r_i^{(k)}+\vect\Delta_c^{(k)})^\top\\
 &\quad=n_c^{(k)}\widehat{\vect\Sigma}_c^{(k)}
       +n_c^{(k)}\vect\Delta_c^{(k)}(\vect\Delta_c^{(k)})^\top.
\end{aligned}
\end{equation}
Summing over all nonempty groups and dividing by $N$ proves the stated decomposition.

\subsection{Regularized Metric Bounds}\label{app:conditioning}
Write $\widehat{\vect\Sigma}_{\mathrm w}=\mathbf Q\diag(\sigma_1,\ldots,\sigma_d)\mathbf Q^\top$ with $\sigma_j\ge0$.
Positive trace and $\gamma>0$ imply $\lambda_A=\gamma\sum_j\sigma_j/d>0$.
Consequently,
\begin{equation}
 \mathbf A
 =\mathbf Q\diag\!\left(\frac1{\sigma_1+\lambda_A},\ldots,
 \frac1{\sigma_d+\lambda_A}\right)\mathbf Q^\top
\end{equation}
is positive definite.
Its largest eigenvalue is $1/(\sigma_{\min}+\lambda_A)$ and its smallest is $1/(\sigma_{\max}+\lambda_A)$.
These eigenvalues give the operator-norm and condition-number identities in Eq.~\eqref{eq:metric-conditioning}.
Using $\sigma_{\min}\ge0$ and $\sigma_{\max}\le\sum_j\sigma_j$ gives
\begin{equation}
 \kappa_2(\mathbf A)
 \le 1+\frac{\sigma_{\max}}{\lambda_A}
 \le 1+\frac{\sum_j\sigma_j}{\gamma\sum_j\sigma_j/d}
 =1+\frac d\gamma.
\end{equation}
This bound covers singular empirical covariances and depends only on dimension and relative ridge strength.

\subsection{Construction from Additive Moments}\label{app:moments}
Let $\mathbf h_i^{(k)}$ be the concatenation of unscaled layer features extracted with the same frozen backbone.
For each client, retain the counts and moments
\begin{equation}\label{eq:client-feature-sums}
 \mathbf t_c^{(k)}=\sum_{i\in\mathcal I_c^{(k)}}\mathbf h_i^{(k)},
 \;
 \mathbf G^{(k)}=\sum_i\mathbf h_i^{(k)}(\mathbf h_i^{(k)})^\top.
\end{equation}
The additive quantities $\mathbf t_c=\sum_k\mathbf t_c^{(k)}$, $\mathbf G=\sum_k\mathbf G^{(k)}$, and $N_c=\sum_kn_c^{(k)}$ give
\begin{equation}\label{eq:pooled-feature-covariance}
 \bar{\mathbf h}=\frac1N\sum_c\mathbf t_c,
 \;
 \widehat{\vect\Sigma}_{\mathrm{tot}}
 =\frac{\mathbf G}{N}-\bar{\mathbf h}\bar{\mathbf h}^{\top}.
\end{equation}
Its diagonal layer blocks give the shared scales $s_\ell$ via Eq.~\eqref{eq:layer-variance}.
Set $\mathbf D_s=\operatorname{blockdiag}_{\ell\in\Lambda}(s_\ell^{-1}\mathbf I_{d_\ell})$.
Then
\begin{equation}\label{eq:statistics-from-sums}
\begin{aligned}
 \vect\mu_c&=\mathbf D_s\mathbf t_c/N_c,\\
 \vect\mu_c^{(k)}&=\mathbf D_s\mathbf t_c^{(k)}/n_c^{(k)}
                  \quad(n_c^{(k)}>0),\\
 \widehat{\vect\Sigma}_{\mathrm w}
 &=\frac1N\mathbf D_s\mathbf G\mathbf D_s
   -\sum_c\frac{N_c}{N}\vect\mu_c\vect\mu_c^\top.
\end{aligned}
\end{equation}
The last equality follows by expanding the residuals about their shared class means and using
\[
 \sum_{k,i:y_i^{(k)}=c}\mathbf z(x_i^{(k)})=N_c\vect\mu_c.
\]
This construction retains cross-layer covariance blocks and equals the pooled computation in exact arithmetic.

\subsection{Proof of Proposition~\ref{prop:shrinkage}}\label{app:shrinkage}
Fix $\mathbf A\succ0$ and $\vect\mu_c$ in the objective $J_c^{(k)}$.
For brevity, write $n=n_c^{(k)}$ and $\mathbf t=\sum_{i\in\mathcal I_c^{(k)}}\mathbf z(x_i^{(k)})$.
Differentiation gives
\begin{equation}
\begin{aligned}
 \nabla J_c^{(k)}(\mathbf m)
 &=2\mathbf A\big[(n+m_0)\mathbf m-\mathbf t-m_0\vect\mu_c\big],\\
 \nabla^2J_c^{(k)}(\mathbf m)&=2(n+m_0)\mathbf A\succ0.
\end{aligned}
\end{equation}
Strict convexity gives a unique minimizer.
Setting the gradient to zero yields
\begin{equation}
 \mathbf m^\star=\frac{\mathbf t+m_0\vect\mu_c}{n+m_0}.
\end{equation}
For $n>0$, use $\mathbf t=n\vect\mu_c^{(k)}$ to obtain
\begin{equation}
 \mathbf m^\star
 =\frac{m_0}{n+m_0}\vect\mu_c
 +\frac{n}{n+m_0}\vect\mu_c^{(k)}
 =\widetilde{\vect\mu}_c^{(k)}.
\end{equation}
For $n=0$, the data sum is empty, $\mathbf t=\mathbf0$, and $\mathbf m^\star=\vect\mu_c$.
For $n>0$, subtracting the local center gives
\begin{equation}
 \widetilde{\vect\mu}_c^{(k)}-\vect\mu_c^{(k)}
 =(1-\alpha_c^{(k)})(\vect\mu_c-\vect\mu_c^{(k)}).
\end{equation}
Positive homogeneity of the $\mathbf A$-norm yields Eq.~\eqref{eq:local-fit}.
For fixed centers, $1-\alpha(n)=m_0/(n+m_0)$ decreases with support, reducing the local-center residual.

\subsection{Statistical Head Construction}\label{app:head}
All argmax operations use the same fixed tie-breaking rule.
Expanding the personalized squared distance gives
\begin{equation}\label{eq:distance-expansion}
\begin{aligned}
 -\tfrac12\|\mathbf z-\widetilde{\vect\mu}_c^{(k)}\|_{\mathbf A}^2
 ={}&-\tfrac12\mathbf z^\top\mathbf A\mathbf z
 +(\widetilde{\vect\mu}_c^{(k)})^\top\mathbf A\mathbf z\\
 &-\tfrac12(\widetilde{\vect\mu}_c^{(k)})^\top\mathbf A\widetilde{\vect\mu}_c^{(k)}.
\end{aligned}
\end{equation}
Dropping the class-independent first term yields the statistical head in Eq.~\eqref{eq:sasp-head}.
Under $\mathbf z\mid y=c\sim\mathcal N(\widetilde{\vect\mu}_c^{(k)},\mathbf A^{-1})$ and equal priors $\pi_c=1/C$, the Bayes rule is
\begin{equation}\label{eq:balanced-decision}
 \argmax_c\frac1C\mathcal N(\mathbf z;\widetilde{\vect\mu}_c^{(k)},\mathbf A^{-1})
 =\argmax_c s_c^{(k)}(\mathbf z).
\end{equation}

\subsection{Proof of Corollary~\ref{cor:preservation} and Risk Bounds}\label{app:score}
Let $s_c^0(\mathbf z)=\vect\mu_c^\top\mathbf A\mathbf z-\tfrac12\vect\mu_c^\top\mathbf A\vect\mu_c$ denote the shared score.
The personalized displacement is $\vect\delta_c^{(k)}=\widetilde{\vect\mu}_c^{(k)}-\vect\mu_c$.
Substitution into Eq.~\eqref{eq:head-score} and subtraction of $s_c^0(\mathbf z)$ yield
\begin{equation}\label{eq:score-perturbation}
\begin{aligned}
 s_c^{(k)}(\mathbf z)-s_c^0(\mathbf z)
 ={}&(\vect\delta_c^{(k)})^\top\mathbf A(\mathbf z-\vect\mu_c)\\
 &-\tfrac12\|\vect\delta_c^{(k)}\|_{\mathbf A}^2.
\end{aligned}
\end{equation}
Cauchy--Schwarz in the $\mathbf A$-inner product implies
\begin{equation}
\begin{aligned}
 |s_c^{(k)}(\mathbf z)-s_c^0(\mathbf z)|&\le\epsilon_c^{(k)}(\mathbf z),\\
 \epsilon_c^{(k)}(\mathbf z)
 &=\|\vect\delta_c^{(k)}\|_{\mathbf A}\|\mathbf z-\vect\mu_c\|_{\mathbf A}
   +\tfrac12\|\vect\delta_c^{(k)}\|_{\mathbf A}^2.
\end{aligned}
\end{equation}
Substituting $\vect\delta_c^{(k)}=\alpha_c^{(k)}\vect\Delta_c^{(k)}$ into Eq.~\eqref{eq:score-perturbation} recovers Eq.~\eqref{eq:score-change}.
For a locally absent class, both the displacement and the score perturbation are zero.
For $c^\star=f^0(x)$, $\mathbf z=\mathbf z(x)$, and any $c\ne c^\star$, the adapted margin obeys
\begin{equation}
\begin{aligned}
 &s_{c^\star}^{(k)}(\mathbf z)-s_c^{(k)}(\mathbf z)\\
 &\quad\ge s_{c^\star}^0(\mathbf z)-s_c^0(\mathbf z)
 -\epsilon_{c^\star}^{(k)}(\mathbf z)-\epsilon_c^{(k)}(\mathbf z)\\
 &\quad\ge M^0(x)-B^{(k)}(x).
\end{aligned}
\end{equation}
If $M^0(x)>B^{(k)}(x)$, every adapted margin is positive, so $c^\star$ remains the unique maximizer, proving Eq.~\eqref{eq:margin-condition}.
Ties among the largest shared scores have $M^0(x)=0$ and fall within the complementary event.

\textbf{Risk Bounds.}
If $f^{(k)}(x)\ne y$, either $f^0(x)\ne y$ or adaptation changes a correct shared prediction.
Combining this observation with the contrapositive of Eq.~\eqref{eq:margin-condition} yields
\begin{equation}
\begin{aligned}
 &\mathbf1\{f^{(k)}(x)\ne y\}\\
 &\quad\le\mathbf1\{f^0(x)\ne y\}
   +\mathbf1\{f^{(k)}(x)\ne f^0(x)\}\\
 &\quad\le\mathbf1\{f^0(x)\ne y\}
   +\mathbf1\{M^0(x)\le B^{(k)}(x)\}
\end{aligned}
\end{equation}
for every labeled example.
Taking expectations over any labeled target distribution $R$ with input marginal $R_X$ gives
\begin{equation}\label{eq:risk-control}
 \mathcal E_R(f^{(k)})
 \le \mathcal E_R(f^0)
 +\Pr_{x\sim R_X}[M^0(x)\le B^{(k)}(x)].
\end{equation}
Setting $R=Q$ yields the bound for the class-balanced all-class target in Section~\ref{sec:problem}.
For class $c$,
\begin{equation}\label{eq:class-risk-control}
 e_c^{(k)}
 \le e_c^0+\Pr_{x\sim Q_c}[M^0(x)\le B^{(k)}(x)],
\end{equation}
where $e_c^0$ is the shared predictor's class-wise error.
Mixtures over locally rare or unseen classes yield the corresponding subset risk bounds.

\subsection{Reference Updates and Implementation Conventions}\label{app:updates}
Table~\ref{tab:fedgrc-parameters} summarizes FedGRC's parameters and modeling choices.
Reference initialization, statistics accumulation, and updates are detailed below.

\begin{table}[!t]
\centering
\caption{Key parameters and implementation details of FedGRC.}
\label{tab:fedgrc-parameters}
\footnotesize
\renewcommand{\arraystretch}{1.08}
\setlength{\tabcolsep}{2pt}
\begin{tabular}{@{}>{\raggedright\arraybackslash}p{0.28\columnwidth}>{\raggedright\arraybackslash}p{\dimexpr0.72\columnwidth-2\tabcolsep\relax}@{}}
\toprule
Configuration & Description \\
\midrule
\multicolumn{2}{@{}l}{\grp{A. Representation}}\\*
ResNet-18 & Trained from scratch; no pretrained weights. \\
$\Lambda=\{2,3,4\}$ & Pooled dimensions: 128, 256, 512;\newline concatenated dimension $d=896$. \\
\addlinespace[4pt]
\multicolumn{2}{@{}l}{\grp{B. Consensus-Guided Representation Learning (CGRL)}}\\*
$\lambda_{\mathrm{dir}}=\lambda_{\mathrm{rad}}=1$ & Directional and magnitude guidance weights. \\
$\beta=1$ & Shared-prototype weight in the directional loss. \\
$\tau=0.1$ & Shared temperature for supervised\newline contrastive and prototype guidance. \\
$T_w=50$ & Warm-up rounds for directional\newline and magnitude guidance losses. \\
$\xi=\zeta=0.5$ & New-observation weights for directional\newline (normalized) and radial EMAs. \\
$\varepsilon=10^{-6}$ & Stability constant for the safe feature norm. \\
$r_{\min}=10^{-3}$ & Safe-norm threshold for magnitude guidance. \\
\addlinespace[4pt]
\multicolumn{2}{@{}l}{\grp{C. Multilayer Metric Synthesis (MMS)}}\\*
$\gamma=0.1$ & Shared ridge coefficient: $\lambda_A=\gamma\operatorname{tr}(\widehat{\vect\Sigma}_{\mathrm w})/d$. \\
\addlinespace[4pt]
\multicolumn{2}{@{}l}{\grp{D. Support-Adaptive Shrinkage Personalization (SASP)}}\\*
$m_0=64$ & Shared-reference strength,\newline fixed across clients and classes. \\
$\pi_c=1/C$ & Equal priors; no local class-frequency bias. \\
\bottomrule
\end{tabular}
\end{table}

The schedule $w_t=\min\{(t+1)/T_w,1\}$ gradually introduces the auxiliary terms over $T_w$ warm-up rounds.
Each participating client performs $E^{(k)}$ mini-batch updates of $\vect\theta=(\vect\phi,\vect\psi)$, and the server aggregates
\begin{equation}\label{eq:model-aggregation}
 \vect\theta_{t+1}=\sum_{k\in\mathcal S_t}\omega_t^{(k)}\vect\theta^{(k)},
 \; \omega_t^{(k)}=\frac{N^{(k)}}{\sum_{j\in\mathcal S_t}N^{(j)}}.
\end{equation}

After local training, a pass with the updated backbone $\vect\phi^{(k)}$ in evaluation mode and without random augmentation yields a direction for each locally observed class ($n_c^{(k)}>0$):
\begin{equation}\label{eq:direction-update}
 \bar{\mathbf u}_c^{(k)}
 =\unit{\frac1{n_c^{(k)}}\sum_{(x,c)\in\mathcal D_c^{(k)}}
 \unit{\mathbf h_L(x;\vect\phi^{(k)})}}.
\end{equation}
For a class observed in round $t$, count-weighted aggregation and normalized averaging give
\begin{subequations}\label{eq:direction-ema}
\begin{align}
 \widehat{\mathbf p}_c(t)
 &=\unit{\sum_{k\in\mathcal S_t:n_c^{(k)}>0}n_c^{(k)}\bar{\mathbf u}_c^{(k)}},\\
 \mathbf p_c(t+1)
 &=\unit{(1-\xi)\mathbf p_c(t)+\xi\widehat{\mathbf p}_c(t)}.
\end{align}
\end{subequations}
Here $\xi$ is the weight assigned to the current round's direction estimate; this update retains a common orientation while moderating changes between rounds.

Each class direction stays zero until the class is first observed; prototype loss starts only after the first aggregation.
Once enabled, prototype cross-entropy uses all class references, assigning zero logits to zero references and including samples with zero target references.
Classes absent from a round retain their previous direction.

Radial statistics are accumulated during local training.
Let $\mathcal Z_t^{(k)}$ be the multiset of all training occurrences processed by client $k$ in round $t$, counting repeated occurrences separately.
For each occurrence $i$, $\rho_\ell(x_i)$ is evaluated before its associated parameter update.
Define
\begin{equation}\label{eq:radial-statistics}
\begin{aligned}
 H_{\ell,c}^{(k)}(t)&=\sum_{i\in\mathcal Z_t^{(k)}:y_i=c}\rho_\ell(x_i),\\
 q_c^{(k)}(t)&=\sum_{i\in\mathcal Z_t^{(k)}}\mathbf1\{y_i=c\}.
\end{aligned}
\end{equation}
These sums and counts are accumulated locally during training and used to update the shared magnitude references.
For $q_c(t)=\sum_{k\in\mathcal S_t}q_c^{(k)}(t)>0$, the pooled log-norm mean is
\begin{equation}
 \widehat a_{\ell,c}(t)
 =\frac{\sum_{k\in\mathcal S_t}H_{\ell,c}^{(k)}(t)}{q_c(t)}.
\end{equation}
All training occurrences contribute to the reference statistics, whether or not the radial penalty is applied.
For a class observed both previously and in the current round, the reference update assigns weight $\zeta$ to the current estimate.
With observation flag $v_c(t)$, the complete update is
\begin{equation}\label{eq:radial-update}
\begin{aligned}
 &a_{\ell,c}(t+1)\\
 &\quad=
 \begin{cases}
 a_{\ell,c}(t),&q_c(t)=0,\\
 \widehat a_{\ell,c}(t),&q_c(t)>0,\ v_c(t)=0,\\
 (1-\zeta)a_{\ell,c}(t)+\zeta\widehat a_{\ell,c}(t),
     &\text{otherwise},
 \end{cases}\\
 &v_c(t+1)=v_c(t)\lor\mathbf1\{q_c(t)>0\}.
\end{aligned}
\end{equation}
Both references and flags are broadcast for the next round.

The occurrence counts $q_c^{(k)}(t)$ are distinct from the training-set counts $n_c^{(k)}$ used by SASP.
Terminal statistics are collected from all registered clients using the final post-aggregation backbone, with augmentation disabled and normalization statistics fixed.

\begingroup
\makeatletter
\setlength{\@dblfpsep}{10pt}
\makeatother
\section{Supplementary Experiments}\label{app:more}
This appendix details the dataset partitions and client-level data distributions, then presents further analysis and results on personalization floors and resource overhead.

\subsection{Datasets and Client Distributions}\label{app:datasets}
\begin{itemize}
\item \textbf{CIFAR-10 and CIFAR-100.}
CIFAR-10 and CIFAR-100~\cite{CIFAR_TR09} each contain 60,000 images spanning 10 and 100 classes, respectively.
We pool the original training and test sets and reserve 6,000 class-balanced images as a shared global test set, yielding 600 images per class for CIFAR-10 and 60 for CIFAR-100.
The remaining 54,000 images are distributed among 20 clients using Dirichlet label partitions~\cite{Dirichlet_arXiv19} with concentration parameters 0.3 and 0.1; each client's data are then randomly split into approximately 75\% training and 25\% local test samples.
Figs.~\ref{fig:app-cifar10-distributions} and~\ref{fig:app-cifar100-distributions} show the resulting client--class distributions.
Both datasets exhibit unequal class counts and incomplete class coverage across clients, with Dir(0.1) producing more concentrated distributions and more missing classes than Dir(0.3).

\item \textbf{FedRS.}
FedRS~\cite{FedRSBench_FedKDD25} integrates NaSC-TG2~\cite{NaSCTG22021}, WHU-RS19~\cite{WHURS192010,WHURS192011}, EuroSAT~\cite{EuroSAT2019}, AID~\cite{AID2017}, NWPU-RESISC45~\cite{NWPURESISC452017}, UCM~\cite{UCM2010}, OPTIMAL-31~\cite{OPTIMAL312019}, and RSD46-WHU~\cite{RSD46WHU2017a,RSD46WHU2017b} into a unified label space of 15 remote-sensing scene classes.
We use its NIID-1 partition, comprising 135 clients, each drawing images from a single source dataset.
The 83,548 images held by clients are split by class within each client into approximately 75\% training and 25\% local test samples, retaining training examples for every observed class.
The shared global test set contains 940 images from the benchmark's $T_B$ set and covers all 15 classes.
Fig.~\ref{fig:app-fedrs-niid1} shows differences in class coverage and sample counts across clients, with colors identifying the eight source datasets.
These sources span different sensors, spatial resolutions, and geographic regions, introducing domain heterogeneity alongside label and sample-size imbalance.
\end{itemize}

\begin{figure*}[!tp]
\centering
\includegraphics[width=0.96\textwidth]{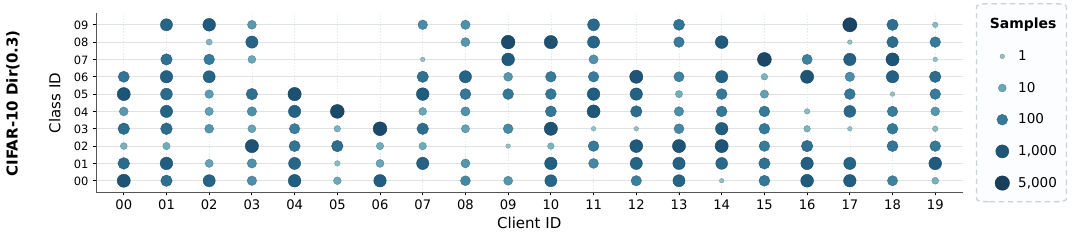}
\par\vspace{6pt}
\includegraphics[width=0.96\textwidth]{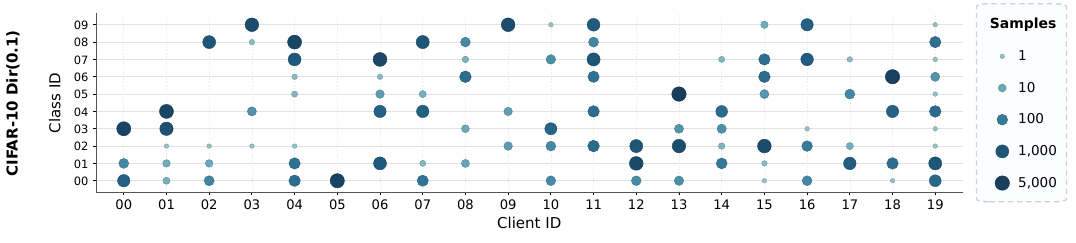}
\caption{Heterogeneity in client--class distributions of training and local test samples on CIFAR-10~\cite{CIFAR_TR09} under Dir(0.3) and Dir(0.1). Blank entries indicate zero samples.}
\label{fig:app-cifar10-distributions}
\end{figure*}

\begin{figure*}[!tp]
\centering
\includegraphics[width=\textwidth]{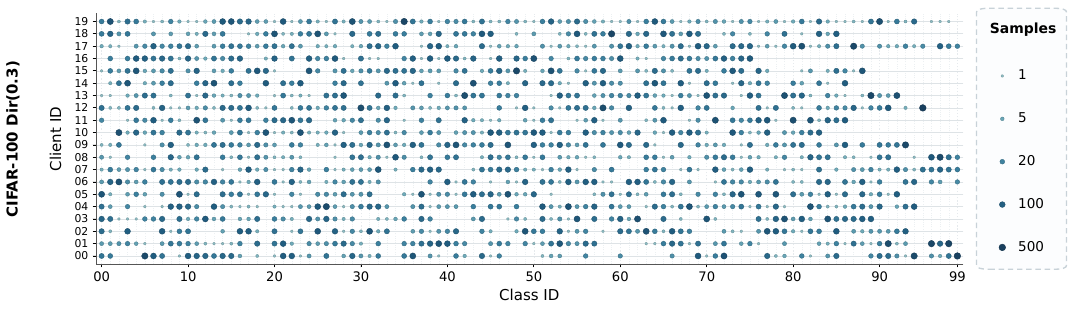}
\par\vspace{6pt}
\includegraphics[width=\textwidth]{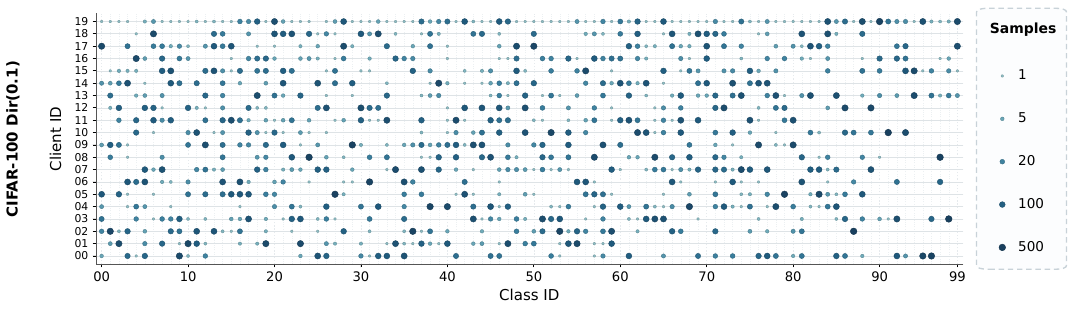}
\caption{Heterogeneity in client--class distributions of training and local test samples on CIFAR-100~\cite{CIFAR_TR09} under Dir(0.3) and Dir(0.1). Blank entries indicate zero samples.}
\label{fig:app-cifar100-distributions}
\end{figure*}

\begin{figure*}[!tp]
\centering
\includegraphics[width=\textwidth]{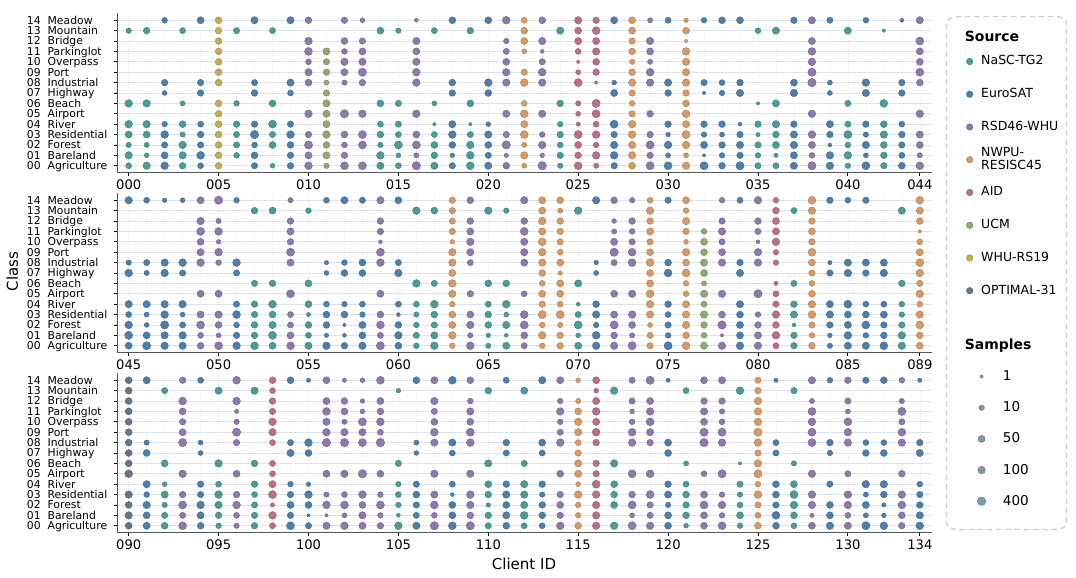}
\caption{Heterogeneity in client--class distributions of training and local test samples on FedRS~\cite{FedRSBench_FedKDD25} under NIID-1. Blank entries indicate zero samples.}
\label{fig:app-fedrs-niid1}
\end{figure*}

\subsection{Personalization Floors: Analysis and Results}\label{app:floor-results}
\textbf{Client-Wise Minima before Aggregation.}
The order of aggregation determines which client weaknesses a recognition-floor metric captures.
Using the client-level minima $F^{(k)}$ from Section~\ref{sec:eval-protocol}, consider the two summaries
\begin{gather*}
 \textit{F@25}=\operatorname{Tail}_{25}\bigl((F^{(k)})_{k=1}^{K}\bigr),\\
 \min\{\textit{L@25},\textit{U@25},\textit{R@25}\}.
\end{gather*}
The first ranks clients by their lowest accuracy across the assessed abilities and averages the lowest scores; the second takes the minimum of three ability-specific tail averages.
They need not coincide because the weakest ability can differ across clients.
For example, some clients may struggle with unseen classes while others struggle with rare classes.
The client group used for \textit{F@25} can include both types according to their weakest scores, whereas the minimum of the three tail averages summarizes a single ability.
Thus, \textit{F@25} captures a personalization floor across abilities and clients, while \textit{L@25}, \textit{U@25}, and \textit{R@25} characterize weaknesses in individual abilities.
These views are complementary.
Clients without unseen classes contribute to \textit{F@25} through their local and rare-class accuracies; an ability with no participating clients is reported as n/a.

\textbf{Additional Results on CIFAR.}
Tables~\ref{tab:pfp-cifar10} and~\ref{tab:pfp-cifar100} extend the lower-tail evaluation to CIFAR-10 and CIFAR-100, using the same comparison categories and protocol as Section~\ref{sec:eval-protocol}.
FedGRC achieves the highest \textit{U@25}, \textit{R@25}, and \textit{F@25} in all 4 CIFAR heterogeneity settings.
Its gains over the best \textit{F@25} baseline are 2.13\% and 2.82\% on CIFAR-10, and 6.46\% and 3.34\% on CIFAR-100, for Dir(0.3) and Dir(0.1), respectively.
Under Dir(0.3), it also leads in \textit{L@25}; under Dir(0.1), its \textit{L@25} trails the best baseline by 13.57\% on CIFAR-10 and 3.59\% on CIFAR-100.
These results show stronger recognition floors alongside a trade-off in local adaptation under extreme label heterogeneity.

\begin{table*}[!tp]
\centering
\caption{Client recognition floors on CIFAR-10~\cite{CIFAR_TR09}. L@25, U@25, R@25, and F@25 average the lowest 25\% (5 of 20 clients), with clients ranked separately for each metric.}
\label{tab:pfp-cifar10}
\footnotesize
\renewcommand{\arraystretch}{1.05}
\def\tabbody{%
\toprule
\multirow{2}{*}{\footnotesize Method} & \multirow{2}{*}{\footnotesize Venue} & \multicolumn{4}{c}{CIFAR-10, Dir(0.3), 20 clients} &  & \multicolumn{4}{c}{CIFAR-10, Dir(0.1), 20 clients} \\
\cmidrule(lr){3-6}\cmidrule(lr){8-11}
& & L@25 & U@25 & R@25 & F@25 &  & L@25 & U@25 & R@25 & F@25 \\
\midrule
\multicolumn{11}{@{}l}{\grp{Complete FL methods (method-specific training and classifiers)}} \\
FedAvg~\cite{FedAvg_AISTATS17} & AISTATS'17 & \ms{52.25}{0.25} & \ms{47.32}{1.08} & \ms{40.81}{2.39} & \ms{40.40}{1.94} &  & \ms{21.87}{0.96} & \ms{35.88}{0.61} & \ms{34.51}{0.08} & \ms{21.84}{0.92} \\
FedRep~\cite{FedRep_ICML21} & ICML'21 & \ms{69.54}{0.51} & \ms{0.00}{0.00} & \ms{0.24}{0.10} & \ms{0.00}{0.00} &  & \ms{78.65}{0.56} & \ms{0.00}{0.00} & \ms{0.05}{0.04} & \ms{0.00}{0.00} \\
FedROD~\cite{FedROD_ICLR22} & ICLR'22 & \ms{72.43}{0.27} & \ms{53.76}{0.64} & \ms{43.20}{1.02} & \ms{41.81}{0.58} &  & \ms{79.87}{0.32} & \ms{38.19}{0.63} & \ms{23.87}{1.14} & \ms{22.73}{1.21} \\
FedLC~\cite{FedLC_ICML22} & ICML'22 & \ms{47.91}{0.62} & \ms{54.33}{0.80} & \ms{41.37}{0.33} & \ms{39.19}{0.32} &  & \ms{15.95}{0.77} & \ms{33.69}{0.54} & \ms{15.79}{1.04} & \ms{10.48}{1.09} \\
FedTHE~\cite{FedTHE_ICLR23} & ICLR'23 & \ms{64.21}{1.08} & \ms{50.12}{1.27} & \ms{38.96}{1.60} & \ms{38.81}{1.55} &  & \ms{72.44}{1.82} & \ms{35.60}{0.65} & \ms{28.31}{0.48} & \ms{27.68}{0.52} \\
FedETF~\cite{FedETF_ICCV23} & ICCV'23 & \ms{61.76}{0.43} & \ms{64.76}{0.44} & \ms{54.48}{0.37} & \ms{52.88}{0.24} &  & \ms{39.26}{2.76} & \ms{47.01}{1.12} & \ms{42.40}{1.67} & \ms{33.84}{4.61} \\
FedFA~\cite{FedFA_TMC24} & TMC'24 & \ms{24.29}{5.90} & \ms{20.43}{6.52} & \ms{20.94}{10.77} & \ms{14.47}{6.92} &  & \ms{1.18}{1.14} & \ms{5.61}{1.46} & \ms{2.53}{2.03} & \ms{0.35}{0.31} \\
pFedFDA~\cite{pFedFDA_NeurIPS24} & NeurIPS'24 & \ms{71.89}{0.72} & \ms{2.07}{0.64} & \ms{4.60}{0.24} & \ms{1.70}{0.55} &  & \ms{79.39}{0.42} & \ms{0.01}{0.01} & \ms{0.42}{0.14} & \ms{0.00}{0.01} \\
FedVLS~\cite{FedVLS_AAAI25} & AAAI'25 & \ms{47.17}{0.64} & \ms{55.66}{1.26} & \ms{40.07}{0.64} & \ms{37.43}{1.04} &  & \ms{21.26}{0.57} & \ms{35.76}{0.32} & \ms{18.96}{0.26} & \ms{14.10}{0.47} \\
FedDr+~\cite{FedDrPlus_TMLR25} & TMLR'25 & \ms{61.86}{1.35} & \ms{57.95}{2.56} & \ms{51.75}{2.21} & \ms{51.22}{2.05} &  & \ms{3.66}{1.83} & \ms{11.12}{3.83} & \ms{6.27}{6.37} & \ms{1.66}{1.56} \\
BTFL~\cite{BTFL_KDD25} & KDD'25 & \ms{72.00}{0.38} & \ms{18.50}{1.67} & \ms{31.78}{1.22} & \ms{18.50}{1.67} &  & \mss{80.82}{1.09} & \ms{0.84}{0.44} & \ms{12.94}{1.20} & \ms{0.84}{0.44} \\
HPFL~\cite{HPFL_ICLR25} & ICLR'25 & \ms{72.64}{0.69} & \ms{30.29}{2.18} & \ms{19.97}{2.27} & \ms{17.68}{1.64} &  & \msb{80.95}{0.85} & \ms{17.59}{2.33} & \ms{11.63}{2.32} & \ms{10.86}{2.31} \\
FedYoYo~\cite{FedYoYo_ICCV25} & ICCV'25 & \ms{72.35}{0.39} & \ms{73.15}{1.52} & \ms{68.52}{1.13} & \ms{68.18}{1.05} &  & \ms{30.62}{3.22} & \ms{44.49}{2.17} & \ms{47.46}{1.01} & \ms{28.23}{3.75} \\
FedLog~\cite{FedLog_TMLR26} & TMLR'26 & \ms{66.30}{0.22} & \ms{0.00}{0.00} & \ms{0.19}{0.10} & \ms{0.00}{0.00} &  & \ms{78.46}{0.32} & \ms{0.00}{0.00} & \ms{0.09}{0.04} & \ms{0.00}{0.00} \\
pFedGM~\cite{pFedGM_arXiv26} & arXiv'26 & \ms{73.31}{0.59} & \ms{0.00}{0.00} & \ms{1.92}{0.07} & \ms{0.00}{0.00} &  & \ms{77.80}{0.48} & \ms{0.00}{0.00} & \ms{0.08}{0.06} & \ms{0.00}{0.00} \\
\addlinespace[2pt]
\multicolumn{11}{@{}l}{\grp{Statistical classifiers assembled on a vanilla backbone (FedAvg~+)}} \\
+ CCVR~\cite{CCVR_NeurIPS21} & NeurIPS'21 & \ms{55.58}{0.21} & \ms{55.37}{1.57} & \ms{49.94}{1.73} & \ms{47.99}{1.90} &  & \ms{40.89}{0.96} & \ms{46.76}{0.36} & \ms{44.79}{0.84} & \ms{39.17}{0.88} \\
+ FedNCM~\cite{FedNCM_NeurIPS23} & NeurIPS'23 & \ms{55.25}{0.78} & \ms{56.98}{1.00} & \ms{49.31}{1.81} & \ms{48.51}{1.61} &  & \ms{42.28}{0.90} & \ms{46.98}{0.34} & \ms{45.08}{0.67} & \ms{40.60}{1.08} \\
+ Fed3R~\cite{Fed3R_ICML24} & ICML'24 & \ms{54.96}{1.03} & \ms{54.44}{1.54} & \ms{48.32}{2.53} & \ms{44.84}{2.27} &  & \ms{43.41}{0.64} & \ms{48.12}{0.87} & \ms{47.20}{0.88} & \ms{41.35}{0.57} \\
+ FedCGS~\cite{FedCGS_AAAI25} & AAAI'25 & \ms{56.52}{0.47} & \ms{57.12}{0.90} & \ms{52.03}{1.49} & \ms{50.80}{1.64} &  & \ms{46.37}{0.39} & \ms{51.23}{0.45} & \ms{47.98}{1.00} & \ms{44.89}{0.34} \\
+ FedCOF~\cite{FedCOF_NeurIPS25} & NeurIPS'25 & \ms{48.01}{0.90} & \ms{45.63}{0.68} & \ms{41.22}{0.88} & \ms{39.53}{0.22} &  & \ms{36.60}{1.07} & \ms{41.61}{0.24} & \ms{40.05}{0.87} & \ms{34.66}{1.32} \\
+ GH-LDA~\cite{GHLDA_ICLR26} & ICLR'26 & \ms{56.74}{0.66} & \ms{56.39}{1.22} & \ms{52.13}{1.79} & \ms{50.63}{1.83} &  & \ms{46.48}{0.16} & \ms{51.34}{0.73} & \ms{48.74}{1.08} & \ms{45.24}{0.41} \\
\addlinespace[2pt]
\multicolumn{11}{@{}l}{\grp{Statistical classifiers assembled on the SOTA backbone (FedYoYo~+)}} \\
+ CCVR~\cite{CCVR_NeurIPS21} & NeurIPS'21 & \ms{71.87}{0.70} & \ms{73.68}{0.59} & \ms{67.06}{0.44} & \ms{65.89}{0.93} &  & \ms{56.40}{0.96} & \ms{61.56}{0.83} & \ms{59.17}{0.77} & \ms{53.25}{1.32} \\
+ FedNCM~\cite{FedNCM_NeurIPS23} & NeurIPS'23 & \ms{70.83}{0.38} & \ms{72.17}{0.71} & \ms{66.89}{0.82} & \ms{64.72}{1.11} &  & \ms{53.01}{0.91} & \ms{56.18}{0.18} & \ms{55.62}{0.57} & \ms{50.43}{0.72} \\
+ Fed3R~\cite{Fed3R_ICML24} & ICML'24 & \ms{72.70}{0.63} & \ms{74.27}{0.50} & \ms{66.98}{0.62} & \ms{65.10}{0.38} &  & \ms{56.56}{0.65} & \ms{62.74}{0.94} & \ms{60.93}{0.92} & \ms{52.49}{2.01} \\
+ FedCGS~\cite{FedCGS_AAAI25} & AAAI'25 & \mss{73.57}{0.66} & \ms{75.36}{0.17} & \mss{70.75}{0.33} & \mss{69.72}{0.69} &  & \ms{62.76}{0.28} & \mss{65.97}{0.48} & \mss{63.63}{0.54} & \mss{61.03}{0.14} \\
+ FedCOF~\cite{FedCOF_NeurIPS25} & NeurIPS'25 & \ms{69.86}{0.55} & \ms{69.96}{1.06} & \ms{65.20}{0.53} & \ms{62.47}{0.92} &  & \ms{53.23}{0.32} & \ms{57.90}{0.83} & \ms{57.15}{0.75} & \ms{50.88}{1.22} \\
+ GH-LDA~\cite{GHLDA_ICLR26} & ICLR'26 & \ms{73.50}{0.60} & \mss{75.45}{0.28} & \ms{70.71}{0.10} & \ms{69.63}{0.53} &  & \ms{61.20}{0.92} & \ms{64.92}{0.45} & \ms{62.83}{0.39} & \ms{59.40}{0.58} \\
\midrule
\rowcolor{gray!12}
\textbf{FedGRC (Ours)} & -- & \msb{76.97}{0.56} & \msb{79.22}{0.80} & \msb{72.78}{0.60} & \msb{71.85}{0.66} &  & \ms{67.38}{0.37} & \msb{71.20}{0.32} & \msb{65.95}{0.35} & \msb{63.85}{0.13} \\
$\Delta$ vs.\ Best &  & \dltp{$+$3.40} & \dltp{$+$3.77} & \dltp{$+$2.03} & \dltp{$+$2.13} &  & \dltm{$-$13.57} & \dltp{$+$5.23} & \dltp{$+$2.32} & \dltp{$+$2.82} \\
\bottomrule}%
\setlength{\tabcolsep}{0pt}%
\sbox{\tabbox}{\begin{tabular}{ll rrrr @{\hskip2\tabcolsep}c@{\hskip0.4\tabcolsep} rrrr}\tabbody\end{tabular}}%
\ifdim\wd\tabbox<\textwidth
  \setlength{\tabcolsep}{\dimexpr(\textwidth-\wd\tabbox)/22\relax}%
\else
  \setlength{\tabcolsep}{1pt}%
\fi
\begin{tabular}{ll rrrr @{\hskip2\tabcolsep}c@{\hskip0.4\tabcolsep} rrrr}\tabbody\end{tabular}
\end{table*}

\clearpage
\begin{table*}[!tp]
\centering
\caption{Client recognition floors on CIFAR-100~\cite{CIFAR_TR09}. L@25, U@25, R@25, and F@25 average the lowest 25\% (5 of 20 clients), with clients ranked separately for each metric.}
\label{tab:pfp-cifar100}
\footnotesize
\renewcommand{\arraystretch}{1.05}
\def\tabbody{%
\toprule
\multirow{2}{*}{\footnotesize Method} & \multirow{2}{*}{\footnotesize Venue} & \multicolumn{4}{c}{CIFAR-100, Dir(0.3), 20 clients} &  & \multicolumn{4}{c}{CIFAR-100, Dir(0.1), 20 clients} \\
\cmidrule(lr){3-6}\cmidrule(lr){8-11}
& & L@25 & U@25 & R@25 & F@25 &  & L@25 & U@25 & R@25 & F@25 \\
\midrule
\multicolumn{11}{@{}l}{\grp{Complete FL methods (method-specific training and classifiers)}} \\
FedAvg~\cite{FedAvg_AISTATS17} & AISTATS'17 & \ms{32.40}{0.99} & \ms{30.05}{1.23} & \ms{23.28}{4.25} & \ms{23.00}{4.13} &  & \ms{27.08}{0.15} & \ms{30.65}{0.11} & \ms{17.11}{1.83} & \ms{17.11}{1.83} \\
FedRep~\cite{FedRep_ICML21} & ICML'21 & \ms{33.75}{0.68} & \ms{0.00}{0.00} & \ms{0.00}{0.00} & \ms{0.00}{0.00} &  & \ms{44.22}{0.56} & \ms{0.00}{0.00} & \ms{0.00}{0.00} & \ms{0.00}{0.00} \\
FedROD~\cite{FedROD_ICLR22} & ICLR'22 & \ms{44.88}{0.67} & \ms{31.95}{0.76} & \ms{23.67}{1.77} & \ms{23.67}{1.77} &  & \msb{52.59}{1.26} & \ms{32.01}{0.95} & \ms{18.22}{1.80} & \ms{18.22}{1.80} \\
FedLC~\cite{FedLC_ICML22} & ICML'22 & \ms{29.55}{0.42} & \ms{29.94}{0.04} & \ms{20.50}{1.53} & \ms{20.50}{1.53} &  & \ms{23.55}{0.66} & \ms{30.29}{0.09} & \ms{16.89}{1.11} & \ms{16.72}{0.81} \\
FedTHE~\cite{FedTHE_ICLR23} & ICLR'23 & \ms{33.10}{0.28} & \ms{25.61}{0.47} & \ms{19.06}{0.59} & \ms{18.94}{0.74} &  & \ms{39.11}{0.10} & \ms{26.07}{0.11} & \ms{14.78}{1.55} & \ms{14.78}{1.55} \\
FedETF~\cite{FedETF_ICCV23} & ICCV'23 & \ms{35.26}{0.64} & \ms{32.99}{0.68} & \ms{24.78}{2.87} & \ms{24.02}{3.05} &  & \ms{31.82}{1.61} & \ms{35.53}{0.56} & \ms{22.17}{1.43} & \ms{22.17}{1.43} \\
FedFA~\cite{FedFA_TMC24} & TMC'24 & \ms{4.67}{2.89} & \ms{4.43}{1.74} & \ms{3.78}{4.55} & \ms{2.41}{2.54} &  & \ms{0.58}{0.38} & \ms{1.71}{0.79} & \ms{0.00}{0.00} & \ms{0.00}{0.00} \\
pFedFDA~\cite{pFedFDA_NeurIPS24} & NeurIPS'24 & \ms{42.39}{0.62} & \ms{12.00}{0.03} & \ms{10.83}{2.58} & \ms{9.62}{1.88} &  & \ms{49.57}{1.02} & \ms{10.93}{0.82} & \ms{7.17}{0.44} & \ms{7.17}{0.44} \\
FedVLS~\cite{FedVLS_AAAI25} & AAAI'25 & \ms{26.98}{0.92} & \ms{29.59}{0.45} & \ms{19.06}{1.09} & \ms{19.06}{1.09} &  & \ms{23.20}{0.65} & \ms{29.21}{0.88} & \ms{14.84}{0.29} & \ms{14.76}{0.37} \\
FedDr+~\cite{FedDrPlus_TMLR25} & TMLR'25 & \ms{34.92}{0.17} & \ms{31.97}{0.25} & \ms{27.50}{0.60} & \ms{25.21}{0.80} &  & \ms{28.11}{0.82} & \ms{30.79}{0.80} & \ms{20.67}{0.73} & \ms{20.67}{0.73} \\
BTFL~\cite{BTFL_KDD25} & KDD'25 & \ms{36.28}{0.24} & \ms{23.76}{0.58} & \ms{22.11}{0.35} & \ms{20.85}{0.57} &  & \ms{44.24}{0.98} & \ms{22.87}{0.47} & \ms{17.95}{0.75} & \ms{17.91}{0.70} \\
HPFL~\cite{HPFL_ICLR25} & ICLR'25 & \ms{40.14}{0.28} & \ms{28.08}{1.18} & \ms{20.56}{3.13} & \ms{20.47}{3.20} &  & \ms{47.90}{0.94} & \ms{25.15}{0.48} & \ms{14.78}{1.13} & \ms{14.78}{1.13} \\
FedYoYo~\cite{FedYoYo_ICCV25} & ICCV'25 & \ms{32.53}{2.02} & \ms{33.44}{2.24} & \ms{13.17}{3.08} & \ms{13.17}{3.08} &  & \ms{35.46}{1.78} & \ms{37.72}{2.10} & \ms{28.66}{5.62} & \ms{28.39}{5.48} \\
FedLog~\cite{FedLog_TMLR26} & TMLR'26 & \ms{30.53}{0.28} & \ms{0.00}{0.00} & \ms{0.00}{0.00} & \ms{0.00}{0.00} &  & \ms{42.06}{0.52} & \ms{0.00}{0.00} & \ms{0.00}{0.00} & \ms{0.00}{0.00} \\
pFedGM~\cite{pFedGM_arXiv26} & arXiv'26 & \ms{41.54}{0.73} & \ms{4.24}{0.43} & \ms{10.61}{0.77} & \ms{4.21}{0.38} &  & \mss{50.15}{0.35} & \ms{0.70}{0.10} & \ms{4.78}{1.27} & \ms{0.70}{0.10} \\
\addlinespace[2pt]
\multicolumn{11}{@{}l}{\grp{Statistical classifiers assembled on a vanilla backbone (FedAvg~+)}} \\
+ CCVR~\cite{CCVR_NeurIPS21} & NeurIPS'21 & \ms{31.99}{0.99} & \ms{31.78}{0.92} & \ms{22.33}{2.97} & \ms{22.33}{2.97} &  & \ms{27.92}{0.62} & \ms{32.42}{0.44} & \ms{18.83}{0.73} & \ms{18.83}{0.73} \\
+ FedNCM~\cite{FedNCM_NeurIPS23} & NeurIPS'23 & \ms{33.05}{0.56} & \ms{32.55}{0.86} & \ms{24.33}{3.09} & \ms{24.27}{3.10} &  & \ms{28.41}{0.15} & \ms{33.21}{0.09} & \ms{18.78}{1.00} & \ms{18.78}{1.00} \\
+ Fed3R~\cite{Fed3R_ICML24} & ICML'24 & \ms{31.25}{1.16} & \ms{29.69}{0.59} & \ms{21.83}{3.46} & \ms{21.72}{3.38} &  & \ms{26.98}{0.16} & \ms{31.51}{0.16} & \ms{16.78}{1.39} & \ms{16.78}{1.39} \\
+ FedCGS~\cite{FedCGS_AAAI25} & AAAI'25 & \ms{30.56}{0.52} & \ms{30.94}{0.65} & \ms{21.67}{3.38} & \ms{21.67}{3.38} &  & \ms{27.35}{0.57} & \ms{31.56}{0.59} & \ms{16.50}{1.59} & \ms{16.50}{1.59} \\
+ FedCOF~\cite{FedCOF_NeurIPS25} & NeurIPS'25 & \ms{32.17}{0.25} & \ms{31.22}{1.02} & \ms{23.89}{3.73} & \ms{23.89}{3.73} &  & \ms{27.66}{0.39} & \ms{31.88}{0.45} & \ms{17.89}{0.75} & \ms{17.89}{0.75} \\
+ GH-LDA~\cite{GHLDA_ICLR26} & ICLR'26 & \ms{31.45}{0.56} & \ms{32.04}{0.67} & \ms{22.78}{3.26} & \ms{22.78}{3.26} &  & \ms{28.12}{0.20} & \ms{32.37}{0.20} & \ms{18.28}{0.25} & \ms{18.28}{0.25} \\
\addlinespace[2pt]
\multicolumn{11}{@{}l}{\grp{Statistical classifiers assembled on the SOTA backbone (FedYoYo~+)}} \\
+ CCVR~\cite{CCVR_NeurIPS21} & NeurIPS'21 & \ms{49.09}{0.12} & \mss{47.52}{0.35} & \mss{42.28}{2.04} & \mss{41.10}{1.36} &  & \ms{44.61}{0.37} & \ms{48.11}{0.22} & \ms{36.22}{0.67} & \ms{36.22}{0.67} \\
+ FedNCM~\cite{FedNCM_NeurIPS23} & NeurIPS'23 & \ms{46.19}{0.63} & \ms{43.36}{0.21} & \ms{38.00}{1.32} & \ms{36.43}{1.18} &  & \ms{39.68}{0.41} & \ms{42.94}{0.53} & \ms{28.11}{0.42} & \ms{28.11}{0.42} \\
+ Fed3R~\cite{Fed3R_ICML24} & ICML'24 & \ms{48.57}{0.19} & \ms{45.91}{0.10} & \ms{38.28}{2.17} & \ms{36.39}{1.08} &  & \ms{44.04}{0.12} & \ms{47.80}{0.36} & \ms{31.72}{1.08} & \ms{31.72}{1.08} \\
+ FedCGS~\cite{FedCGS_AAAI25} & AAAI'25 & \mss{49.37}{0.21} & \ms{47.50}{0.35} & \ms{41.39}{1.55} & \ms{40.92}{1.32} &  & \ms{44.86}{0.72} & \mss{48.38}{0.72} & \mss{38.55}{1.49} & \mss{38.49}{1.44} \\
+ FedCOF~\cite{FedCOF_NeurIPS25} & NeurIPS'25 & \ms{47.68}{0.54} & \ms{45.24}{0.17} & \ms{41.22}{0.95} & \ms{39.68}{0.41} &  & \ms{42.98}{1.10} & \ms{44.72}{0.57} & \ms{33.61}{0.98} & \ms{33.61}{0.98} \\
+ GH-LDA~\cite{GHLDA_ICLR26} & ICLR'26 & \ms{48.60}{0.64} & \ms{46.77}{0.17} & \ms{40.67}{1.34} & \ms{39.97}{0.93} &  & \ms{44.54}{0.83} & \ms{47.63}{0.62} & \ms{36.89}{1.00} & \ms{36.82}{1.11} \\
\midrule
\rowcolor{gray!12}
\textbf{FedGRC (Ours)} & -- & \msb{52.68}{0.56} & \msb{52.90}{0.83} & \msb{48.33}{1.84} & \msb{47.56}{1.83} &  & \ms{49.00}{0.85} & \msb{52.58}{0.43} & \msb{41.83}{1.53} & \msb{41.83}{1.53} \\
$\Delta$ vs.\ Best &  & \dltp{$+$3.31} & \dltp{$+$5.38} & \dltp{$+$6.05} & \dltp{$+$6.46} &  & \dltm{$-$3.59} & \dltp{$+$4.20} & \dltp{$+$3.28} & \dltp{$+$3.34} \\
\bottomrule}%
\setlength{\tabcolsep}{0pt}%
\sbox{\tabbox}{\begin{tabular}{ll rrrr @{\hskip2\tabcolsep}c@{\hskip0.4\tabcolsep} rrrr}\tabbody\end{tabular}}%
\ifdim\wd\tabbox<\textwidth
  \setlength{\tabcolsep}{\dimexpr(\textwidth-\wd\tabbox)/22\relax}%
\else
  \setlength{\tabcolsep}{1pt}%
\fi
\begin{tabular}{ll rrrr @{\hskip2\tabcolsep}c@{\hskip0.4\tabcolsep} rrrr}\tabbody\end{tabular}
\end{table*}

\subsection{Communication and Computational Overhead}\label{app:overhead}
\subsubsection{Communication Cost}
As in FedAvg~\cite{FedAvg_AISTATS17}, the server broadcasts the current global model parameters to participating clients, which upload their updated parameters after local training.
FedGRC adds $O(Cd_L+C|\Lambda|)$ scalar entries in each direction per participating client per round: clients upload class directions, log-norm sums, and counts, while the server broadcasts updated directional and magnitude references alongside the model parameters.

After training, all $K$ clients participate once in the additive-moment construction of Appendix~\ref{app:moments}.
Each uploads class counts, class-wise first moments, and a single $d\times d$ second-moment matrix summed over all local samples, totaling $O(Cd+d^2)$ entries.
From the uploaded moments, the server derives the shared layer scales, class means, and covariance metric.
It broadcasts these quantities in $O(d^2+Cd+|\Lambda|)$ entries per client, after which each client constructs its personalized head locally.
For fixed feature dimensions and class count, this terminal payload is independent of local sample size.
Thus, MMS and SASP together require only one additional communication round to construct the personalized statistical heads.
In our experiments, FedGRC's total communication volume is 0.08\%, 0.41\%, and 0.25\% higher than that of FedAvg on CIFAR-10, CIFAR-100, and FedRS, respectively.

\newpage
\subsubsection{Computational Cost}
CGRL processes two augmented views per example, adding one view's forward and backward computation over single-view FedAvg for the same examples and updates.
The feature-level regularizers cost $O(B^2d_L+BCd_L+Bd)$ per update for contrastive, class-direction, and magnitude terms, respectively.
After local training in each participating round, one additional forward-only pass over the local training set extracts class directions; magnitude statistics are accumulated during training.
Server reference updates cost $O(|\mathcal S_t|C(d_L+|\Lambda|))$ per round.

To construct the personalized statistical heads with MMS and SASP, each client makes one pass over its training set using the frozen backbone; moment accumulation adds $O(N^{(k)}d^2)$ operations.
Aggregation, covariance construction, and construction of all personalized heads together cost $O(d^3+KCd^2)$ across the server and clients with dense arithmetic.
The server inverts the shared covariance once; each client constructs its head in $O(Cd^2)$ time.

At inference, multilayer features come from a single backbone pass, followed by layer scaling and an $O(Cd)$ linear head, compared with FedAvg's $O(Cd_L)$ head.
The covariance metric is incorporated into the precomputed head parameters.
Thus, FedGRC incurs negligible computational overhead at inference relative to FedAvg.

\newcommand{\AppendixNotationHeading}{\section{Notation}\label{app:notation}}
\begin{table*}[!tp]
\AppendixNotationHeading
\centering
\caption{Notation for the FedGRC framework, theoretical analysis, and experimental evaluation.}\label{tab:notation}\label{tab:notation-evaluation}
\small
\renewcommand{\arraystretch}{1.03}
\setlength{\tabcolsep}{6pt}
\begin{tabular}{@{}>{\raggedright\arraybackslash}p{0.22\textwidth}>{\raggedright\arraybackslash}p{\dimexpr0.78\textwidth-12pt\relax}@{}}
\toprule
Symbol(s) & Definition \\
\midrule
\multicolumn{2}{@{}l}{\footnotesize\grp{A. Federated Learning Problem Formulation}}\\*
$K,C,\mathcal Y$ & Client count, class count, and common federation-known label space. \\
$\mathcal D^{(k)},\mathcal D_c^{(k)},\mathcal I_c^{(k)}$ & Client $k$'s training set, class-$c$ subset, and indices of that subset. \\
$N^{(k)},N;\ n_c^{(k)},N_c$ & Training-sample counts for each client and the federation, both overall and for class $c$. \\
$\vect\theta=(\vect\phi,\vect\psi)$ & Model parameters: backbone $\vect\phi$ and classifier head $\vect\psi$. \\
$g,h_{\vect\phi},q_{\vect\psi}$ & Full model, feature extractor, and classifier head; $g=q_{\vect\psi}\circ h_{\vect\phi}$. \\
$P^{(k)},Q_c,Q$ & Local target distribution, class-$c$ evaluation distribution, and class-balanced all-class distribution. \\
$\mathcal E_R(f),e_c^{(k)}$\newline
$\mathcal E_{\mathrm{local}}^{(k)},\mathcal E_{\mathrm{global}}^{(k)}$ & Error probability of $f$ under labeled distribution $R$; client $k$'s class-$c$ error under $Q_c$; local and global risks under $P^{(k)}$ and $Q$, respectively. \\
\addlinespace[4pt]
\multicolumn{2}{@{}l}{\footnotesize\grp{B. Consensus-Guided Representation Learning}}\\*
$\Lambda,L,d_\ell,d$ & Selected layers, final layer, pooled feature dimension at $\ell$, and total dimension $d=\sum_{\ell\in\Lambda}d_\ell$. \\
$\mathbf h_\ell(x;\vect\phi),\mathbf u_i$ & Pooled layer features and unit directions; directional supervision uses the final layer. \\
$R_i,r_\ell(x),\rho_\ell(x)$ & Raw norm $R_i=\|\mathbf h_i\|_2$, safe norm $r_\ell(x)=\sqrt{\|\mathbf h_\ell(x;\vect\phi)\|_2^2+\varepsilon^2}$, and log norm $\rho_\ell(x)=\log r_\ell(x)$ for magnitude guidance. \\
$\mathbf p_c(t),a_{\ell,c}(t)$ & Shared class direction and class- and layer-specific log-norm reference at round $t$. \\
$v_c(t),\chi_{\ell i}$ & Magnitude reference initialization flag for class $c$; penalty indicator for layer $\ell$ and input $i$. \\
$B,\mathcal B_2,\mathcal P_i$ & Batch size, index set of $2B$ paired views, and same-class positive views excluding $i$. \\
$\mathcal T_{\mathrm w},\mathcal T_{\mathrm s}$ & Label-preserving weak and strong augmentation operators. \\
$\mathcal L_{\mathrm{CE}},\mathcal L_{\mathrm{CL}},\mathcal L_{\mathrm{proto}}$ & Training-head cross-entropy, SupCon loss, and shared-prototype cross-entropy. \\
$\mathcal L_{\mathrm{dir}},\mathcal L_{\mathrm{rad}},\mathcal L_{\mathrm{CGRL}}$ & Directional loss (contrastive and prototype), log-norm loss, and full local objective. \\
\addlinespace[4pt]
\multicolumn{2}{@{}l}{\footnotesize\grp{C. Statistical Modeling for Personalized Classification}}\\*
$s_\ell,\mathbf z(x)$ & Shared layer scale and concatenated scaled features $\mathbf z(x)=\operatorname{col}_{\ell\in\Lambda}(\mathbf h_\ell/s_\ell)$. \\
$\vect\mu_c,\vect\mu_c^{(k)},\widetilde{\vect\mu}_c^{(k)}$ & Shared, local, and support-dependent shrinkage centers; locally absent classes retain $\vect\mu_c$. \\
$\vect\Delta_c^{(k)},\vect\delta_c^{(k)}$ & Local-minus-shared mean displacement (zero for absent classes) and applied shift $\vect\delta_c^{(k)}=\alpha_c^{(k)}\vect\Delta_c^{(k)}$. \\
$m_0,\alpha_c^{(k)}$ & Shared-reference strength and local mean weight $\alpha_c^{(k)}=n_c^{(k)}/(n_c^{(k)}+m_0)$. \\
$\vect\Sigma,\widehat{\vect\Sigma}_{\mathrm w}$ & Common Gaussian covariance and empirical within-class covariance pooled about shared means. \\
$\widehat{\vect\Sigma}_{\ell,c},\widehat{\vect\Sigma}_c^{(k)}$ & Class covariances: pooled raw layer-$\ell$ features (divisor $N_c$); local scaled features (divisor $n_c^{(k)}$). \\
$\lambda_A,\mathbf A$ & Trace-scaled ridge $\lambda_A=\gamma\tr(\widehat{\vect\Sigma}_{\mathrm w})/d$ and shared metric $\mathbf A=(\widehat{\vect\Sigma}_{\mathrm w}+\lambda_A\mathbf I_d)^{-1}$. \\
$\pi_c$ & Class prior; FedGRC uses equal priors $\pi_c=1/C$. \\
$J_c^{(k)}(\mathbf m)$ & Mean-fitting objective balancing local fit and shared-center regularization under $\mathbf A$. \\
$\mathbf W^{(k)},\mathbf b^{(k)}$ & Client classifier from personalized means and shared $\mathbf A$; row $c$: $(\mathbf w_c^{(k)})^\top$; bias $c$: $b_c^{(k)}$. \\
\addlinespace[4pt]
\multicolumn{2}{@{}l}{\footnotesize\grp{D. Prediction Stability under Personalization}}\\*
$s_c^0,s_c^{(k)};\ f^0,f^{(k)}$ & Shared and personalized class scores and corresponding label predictors. \\
$\epsilon_c^{(k)}(\mathbf z)$ & Bound on absolute score changes from mean adaptation; distinct from safe-norm constant $\varepsilon$. \\
$M^0(x),B^{(k)}(x)$ & Shared winning margin and perturbation budget (winner's score bound plus largest competing bound); $c^\star=f^0(x)$ is the shared prediction. \\
\addlinespace[4pt]
\multicolumn{2}{@{}l}{\footnotesize\grp{E. Theoretical Analysis}}\\*
$\Pr,\mathbb E,\Var;\ \tr(\cdot)$ & Probability, expectation, variance; matrix trace (sum of diagonal entries). \\
$\mathbf1\{\cdot\},\ \|\cdot\|_2$ & Indicator (1 if true, 0 otherwise); Euclidean norm for vectors or spectral norm for matrices. \\
$\mathbf M\succ0,\ \|\mathbf v\|_{\mathbf M}$ & Positive-definite matrix and induced norm: $\|\mathbf v\|_{\mathbf M}^2=\mathbf v^\top\mathbf M\mathbf v$. \\
\addlinespace[4pt]
\multicolumn{2}{@{}l}{\footnotesize\grp{F. Experimental Setup and Evaluation}}\\*
$\mathcal T_{\mathrm{local}}^{(k)},\mathcal T_{\mathrm{global}}$ & Client $k$'s local test set and shared test set covering all federation-known classes. \\
$\mathcal U^{(k)},\mathcal R^{(k)}$ & Locally absent classes and up to two observed classes with the fewest local training samples; ties are broken by class ID. \\
$a_c^{(k)},t_c$ & Client $k$'s correct class-$c$ prediction count and class-$c$ sample count in the shared test set. \\
$m_L^{(k)},m_U^{(k)},m_R^{(k)}$ & Client $k$'s test-sample counts for local, unseen-class, and rare-class evaluation. \\
$L^{(k)},U^{(k)},R^{(k)}$ & Client accuracies on the local test set and shared-test subsets for unseen and rare classes. \\
$F^{(k)}$ & Client minimum $\min\{L^{(k)},U^{(k)},R^{(k)}\}$, omitting $U^{(k)}$ when $\mathcal U^{(k)}$ is empty. \\
$\operatorname{Tail}_{25},q,q_v,m_v$ & Mean of the lowest $q_v=\min(q,m_v)$ client scores; $q=\lceil K/4\rceil$; $m_v$: eligible-client count. \\
\textit{local}, \textit{global} & Client weights: $m_L^{(k)}$ for local accuracy; equal class weights within clients, then equal client weights for global accuracy. \\
\textit{unseen}, \textit{rare} & Subset accuracies with client weights $m_U^{(k)}$ and $m_R^{(k)}$; unseen excludes clients with $\mathcal U^{(k)}=\varnothing$. \\
\textit{L@25}, \textit{U@25}, \textit{R@25}; \textit{F@25} & $\operatorname{Tail}_{25}$ applied separately to $L^{(k)}$, $U^{(k)}$, $R^{(k)}$, and $F^{(k)}$; \textit{F@25} takes client minima before tail averaging, not the minimum of ability-specific tail means. \\
\bottomrule
\end{tabular}
\end{table*}

\clearpage
\endgroup
\twocolumn
\putbib[references]
\end{bibunit}

\end{document}